\documentclass{article}

\PassOptionsToPackage{numbers,sort&compress}{natbib}
\usepackage[preprint]{neurips_2026}

\usepackage[utf8]{inputenc}
\usepackage[T1]{fontenc}
\usepackage[colorlinks=true,linkcolor=magenta,citecolor=cyan!70!black,urlcolor=blue!70!black]{hyperref}
\usepackage{url}
\usepackage{booktabs}
\usepackage{amsfonts}
\usepackage{amsmath}
\usepackage{amssymb}
\usepackage{amsthm}
\usepackage{nicefrac}
\usepackage{microtype}
\usepackage[table,dvipsnames]{xcolor}   
\usepackage{graphicx}
\graphicspath{{images/}}
\usepackage{multirow}
\usepackage{wrapfig}                    
\usepackage{xspace}
\usepackage{subcaption}                 
\usepackage{enumitem}                   
\usepackage{colortbl}                   
\usepackage{mwe}                        
\usepackage{tikz}                      
\usetikzlibrary{positioning}           
\usepackage{pgfplots}                  
\pgfplotsset{compat=1.18}
\usepackage[capitalize,noabbrev]{cleveref}  
\crefname{appendix}{Appendix}{Appendices}
\Crefname{appendix}{Appendix}{Appendices}
\crefname{equation}{Eq.}{Eqs.}
\Crefname{equation}{Eq.}{Eqs.}

\usepackage[toc]{appendix}
\usepackage{etoc}
\usepackage{minitoc}
\etocsettocstyle{\section*{Appendix}}{}

\makeatletter
\def\@fnsymbol#1{\ensuremath{\ifcase#1\or \ddagger \or \dagger \or \ast \else \@ctrerr \fi}}
\makeatother

\newcommand{\sname}{StayFair\xspace}
\newcommand{\method}{\sname}

\newcommand{\eg}{\emph{e.g.}\xspace}
\newcommand{\ie}{\emph{i.e.}\xspace}

\newtheorem{theorem}{Theorem}
\newtheorem{lemma}{Lemma}

\theoremstyle{definition}
\newtheorem{definition}{Definition}

\newif\ifpreliminaryassub
\preliminaryassubfalse

\title{\textit{Stay Fair}! Ensuring Group Fairness in Diffusion Models Across Guidance Scales}
\author{%
Myeongsoo Kim$^{1}$ \quad
Eunji Kim$^{2,}$\thanks{The work was done prior to joining Amazon.}  \quad
Minwoo Chae$^{1}$ \quad
Sangwoo Mo$^{1}$ \\
$^{1}$POSTECH \quad
$^{2}$Amazon
}

\begin{document}
\maketitle

\etocdepthtag.toc{mtchapter}
\etocsettagdepth{mtchapter}{subsection}
\etocsettagdepth{mtappendix}{none}
\faketableofcontents

\setcounter{footnote}{0}
\renewcommand{\thefootnote}{\arabic{footnote}}


\begin{abstract}
Diffusion models steer conditional generation with a tunable guidance scale
to trade off prompt alignment and diversity.
However, existing debiasing techniques are optimized for a single scale, degrading fairness when users adjust this parameter.
We trace this behavior to a previously overlooked source by decomposing total bias into two components: a model bias and a guidance bias. While prior work primarily targets the former, we show that the \emph{guidance bias} grows monotonically with the guidance scale, eventually dominating the high-guidance regimes users prefer.
To address this, we extend Strong Demographic Parity to guidance and derive a condition under which the target distribution retains its group ratio across guidance scales.
We propose \method, which leverages this condition to design fair guidance algorithms in both regimes.
For classifier guidance, it equalizes the classifier's output distributions across groups; for classifier-free guidance, it shifts the null embedding by a prompt-dependent offset.
Because \method modifies only the guidance step,
it is orthogonal to model debiasing and can be layered onto existing fair diffusion models to extend their fairness across guidance scales.
Across class-conditional and text-to-image generation, \method decouples fairness from the guidance scale without sacrificing image quality.\footnote{
Code: \url{https://github.com/Kim-Myeong-Soo/stay-fair}}
\end{abstract}

\section{Introduction}
\label{sec:intro}

Recent advances in diffusion models have enabled high-quality conditional generation, particularly in text-to-image synthesis~\citep{rombach2022high, esser2024scaling}. 
Guidance is central to the widespread adoption of conditional diffusion models by controlling how conditioning information influences generation~\citep{dhariwal2021diffusion, ho2022classifier}.
However, diffusion models often exhibit biases related to sensitive attributes such as gender, race, and age~\citep{cho2023dall, seshadri2024bias}, and may amplify biases present in the training data~\citep{barroso2024racial}. 

Prior work on group fairness in diffusion models has largely aimed to align the generated group distribution with a desired target.
One line of research consists of \emph{attribute-based methods}, which explicitly condition generation on sensitive attributes or steer samples toward target group ratios at inference time~\citep{friedrich2023fair, kim2025rethinking}. 
These methods directly control the generated group distribution, rather than reducing bias in the underlying diffusion model. 
Another line of work applies \emph{train-time debiasing} directly to the diffusion model, targeting its learned distribution.
Representative approaches include text encoder finetuning and concept editing~\citep{shenfinetuning, gandikota2024unified}.
These methods typically rely on guidance with a fixed guidance scale during validation and generation to ensure reliable attribute separation.

However, the behavior of existing fairness methods under varying guidance scales remains underexplored. 
Users adjust the guidance scale according to their generation objectives, for example to trade off condition alignment, image quality, and diversity. 
Prior work typically considers fairness only at a fixed guidance scale, implicitly assuming that it generalizes across scales. 
As shown in \cref{fig:motivation}, fairness often fails once the guidance scale changes. 
This is because prior methods primarily target the model distribution, whereas guidance affects fairness through a distinct mechanism.

Motivated by this observation, we define \emph{guidance scale-invariant group fairness} as an extension of group-fair generation beyond a fixed guidance scale.
We propose a framework to decompose original bias into two distinct sources: a static \emph{model bias} and a scale-dependent \emph{guidance bias}. 
Unlike prior work that primarily targets model bias at a fixed scale, our objective is to achieve \emph{fair guidance} by reducing this dynamic bias across scales. 
To provide a rigorous foundation, we extend Strong Demographic Parity (SDP)~\citep{jiang2020wasserstein} to guidance, identifying a condition under which the target distribution remains fair across guidance scales. 
Inspired by this fairness principle, we introduce \method, which addresses guidance bias in both major guidance regimes: for classifier guidance (CG)~\citep{dhariwal2021diffusion}, it enforces the proposed condition through a debiased classifier with group-fairness constraints, while for classifier-free guidance (CFG)~\citep{ho2022classifier}, it utilizes an adaptive null prompt to rebalance bias without sacrificing prompt alignment.

Our experiments show that \method preserves fairness across guidance scales in both CG and CFG. 
On class-conditional generation with CG, evaluated on CelebA~\citep{liu2015deep} with ADM~\citep{dhariwal2021diffusion}, \method reduces guidance-scale sensitivity in the generated group ratio while maintaining competitive image quality.
On text-to-image generation with CFG, evaluated on occupation prompts~\citep{seshadri2024bias} using SD1.5~\citep{rombach2022high} and SD3~\citep{esser2024scaling}, \method remains effective for vanilla diffusion models. 
The same holds when it is layered on top of debiased models such as FT~\citep{shenfinetuning} and UCE~\citep{gandikota2024unified}, showing that guidance bias is distinct from model bias and requires separate correction.
Additional analyses use our bias decomposition framework to make bias amplification more interpretable, examining why amplification can be asymmetric across groups and why its direction can vary across prompts.

\begin{figure}[t]
  \centering
    \includegraphics[height=0.22\textheight]{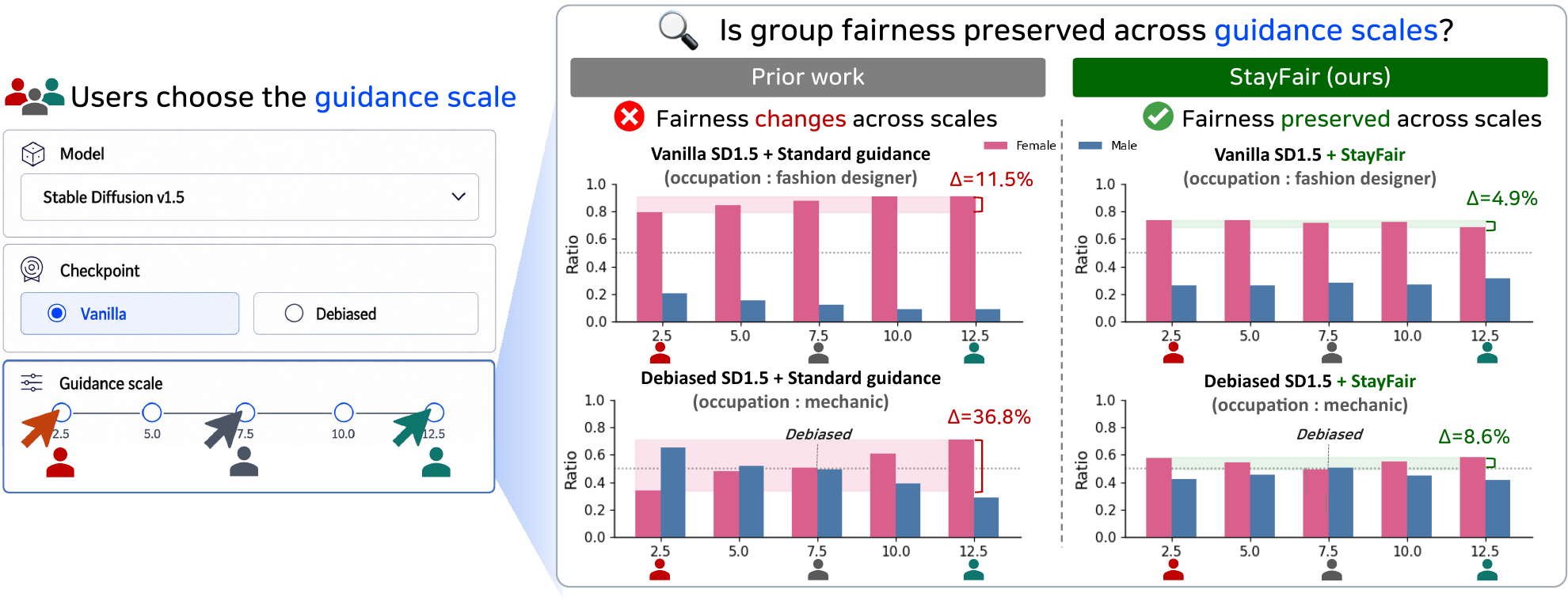}    
        \caption{
        \textbf{\method preserves group fairness as users vary the guidance scale.}
        Users choose the guidance scale to control how strongly generation follows the prompt.
        Thus, group fairness should remain stable regardless of users' scale choice.
        However, prior work~\citep{ho2022classifier, shenfinetuning} typically optimizes or evaluates fairness at a fixed scale, failing to maintain group fairness under different user selected scales.
        \method mitigates guidance-induced bias, enabling \emph{guidance scale-invariant group fairness}.
    }
  \label{fig:motivation}
\end{figure}
\section{Related Work} \label{sec:rw}

\textbf{Bias and fairness in diffusion models.}
The growing deployment of diffusion models in real-world applications~\citep{rombach2022high, esser2024scaling} has intensified concerns about their biased distributions over sensitive attributes such as gender, race, age, and culture~\citep{cho2023dall, luccioni2023stable, bird2023typology, barroso2024racial}.
These biases have been shown to be amplified beyond the training distribution~\citep{seshadri2024bias,girrbach2025large,wu2024stable,bianchi2023easily}, yet their underlying mechanisms remain poorly understood.
To reduce these biases, one approach is the \emph{attribute-based method}, which injects the conditioning attribute through guidance~\citep{friedrich2023fair, parihar2024balancing}, prompt embeddings~\citep{teo2025fairqueue}, or latent manipulation~\citep{li2024self}.
However, these methods control the sampled attribute distribution without directly debiasing the underlying model distribution.
Another line of work consists of \emph{train-time methods}, which take two forms: fine-tuning the model with fairness-aware objectives~\citep{shenfinetuning, kim2024destereotyping, kim2024unbiased}; and updating specific modules such as text embeddings~\citep{chuang2023debiasing, hanlightfair} or cross-attention layers~\citep{gandikota2024unified}.
These methods focus on debiasing the model distribution at a fixed guidance scale.
\method debiases guidance, a distinct source of bias that grows with guidance scale, and is orthogonal to debiasing methods.

\nocite{mo2019mining,kim2026rethinking}


\textbf{Guidance in diffusion models.}
Guidance is the standard mechanism for controlling conditional diffusion generation: it improves image quality and condition alignment while reducing diversity~\citep{rombach2022high, esser2024scaling}.
Common variants include classifier guidance (CG)~\citep{dhariwal2021diffusion, bansal2023universal, ye2024tfg}, classifier-free guidance (CFG)~\citep{ho2022classifier, kynkaanniemi2024applying, sadat2024eliminating}, and autoguidance (AG)~\citep{karras2024guiding, hong2023improving, ahn2024self}, differing in how the conditioning signal is constructed.
Recent work~\citep{kim2025rethinking, roos2026met} reports that varying the guidance scale also shifts demographic composition, exposing a fairness dimension to this tradeoff.
However, no prior method directly reduces guidance bias amplification.
To address this gap, we propose \method, a guidance debiasing method that preserves fairness across user-controlled guidance scales.

\textbf{Group fairness for classifiers.}
Group fairness has been widely studied in computer vision~\citep{pessach2022review,kim2024discovering,bae2025reasoning}.
Generally, it requires that some statistic of the predictor be matched across sensitive groups~\citep{dwork2012fairness}.
Common criteria such as demographic parity~\citep{dwork2012fairness} and equalized odds~\citep{hardt2016equality} are defined through group-wise averages of predictions.
These criteria are typically achieved through constrained classification~\citep{zafar2017fairness, agarwal2018reductions} or group-distributionally robust training~\citep{sagawa2020distributionally, idrissi2022simple}.
%
A stronger notion, Strong Demographic Parity (SDP)~\citep{jiang2020wasserstein}, requires prediction distributions to match across groups.
Prior work enforces SDP by minimizing discrepancies between group-conditional distributions, using Wasserstein distance~\citep{chzhen2020fair}, total variation~\citep{farokhi2021optimal}, or optimal-transport objectives~\citep{kim2025fairness}.
Building on this distributional view of fairness, we characterize the classifier fairness required to preserve group proportions under guidance and extend SDP to guidance functions. 

\section{Bias from Diffusion Guidance}
\label{sec:prob}

\subsection{Preliminary: diffusion models and guidance}
\emph{Diffusion models}~\citep{ho2020denoising,songscore} generate clean data $X_0 \sim p_0(x)$ by reversing a forward corruption process $\{X_t\}$ that progressively perturbs $X_0$ with Gaussian noise.
Following the EDM formulation~\citep{karras2022elucidating}, we consider $X_t = X_0 + \sigma_t \epsilon$ with $\epsilon \sim \mathcal{N}(0,I)$, where $\sigma_t$ denotes the noise level at time $t$.
This induces the marginal density of $X_t$ as $p_t(x) = p_0(x) * \mathcal{N}(x; 0, \sigma_t^2 I)$.
In probability flow formulation, generation proceeds by denoising $X_T$ into $X_0$ via 
\begin{equation}
    dX_t = -\dot{\sigma}_t\,\sigma_t\,\nabla \log p_t(X_t)\, dt,
    \label{eq:edm}
\end{equation}
The score $\nabla \log p_{t}(x_t)$ is approximated by a neural network $\epsilon_\theta(x_t,t)$ through equivalent relation~\citep{songscore}. 
We write the model score as $\nabla \log p_{\theta}(x_t)$, suppressing $t$ when it is clear from the noisy input $x_t$. Likewise, we write $\nabla \log p_{\theta}(x_t \mid y)$ for the conditional model score.

\emph{Guidance} was introduced to sample from a target distribution $p^w_\theta(x_t \mid y)\propto p_\theta(x_t)p_\theta(y \mid x_t)^w$ at each timestep $t$, where the guidance scale $w$ controls the strength of the condition $y$.
It replaces the score in \cref{eq:edm} with a guided score $\nabla \log p^w_\theta(x_t \mid y)$:
\begin{align}
  \nabla \log p^w_\theta(x_t \mid y) 
  = \nabla \log p_\theta(x_t) + w\,\nabla \log f(x_t, t),
  \label{eq:guided_score}
\end{align}
where $f(x_t, t)$ is a guidance function that typically approximates $p(y \mid x_t)$. 
We denote by $X_t \sim q^w_t$ the actual marginal density induced by integrating \cref{eq:edm} with the guided score \cref{eq:guided_score}, and write $Q^w := q^w_0$ for the actual generated distribution~\citep{zheng2024characteristic, li2025provable, azangulovadaptive}.

This formulation covers the major guidance regimes considered in this paper.
\emph{Classifier Guidance (CG)}~\citep{dhariwal2021diffusion}
trains the time-dependent classifier $f_\phi(x_t, t)$ with noisy input $x_t$ and noise level $t$. 
\emph{Classifier-Free Guidance (CFG)}~\citep{ho2022classifier}
takes an implicit classifier via Bayes' rule 
\begin{equation}
    \nabla \log p_\theta(y \mid x_t) = \nabla \log p_\theta(x_t \mid y) - \nabla \log p_\theta(x_t \mid \emptyset) 
    \label{eq:cfg}
\end{equation}
where $\emptyset$ denotes the null prompt.
\emph{AutoGuidance (AG)}~\citep{karras2024guiding}
replaces $\nabla \log p_\theta(x_t \mid \emptyset)$ with $\nabla \log p_{\theta_{\text{weak}}}(x_t \mid y)$, using a weaker model with the same condition $y$.


\subsection{Problem formulation}
Let $Q^w(a\mid y)$ denote the distribution over groups (with sensitive attribute $a:\mathcal{X}\to\mathcal{A}$) for images generated by the diffusion model under condition $y$ at guidance scale $w$.
The original goal of \emph{group-fair generation} is to match the $Q^w(a\mid y)$ to a desired distribution $T(a\mid y)$, i.e., $\mathrm{Bias}^w(a\mid y) := Q^w(a\mid y)-T(a\mid y)=0$.
We extend the goal to \emph{guidance scale-invariant group fairness}, which requires $\mathrm{Bias}^w(a\mid y)=0$ for any guidance scale $w \in [w_{\min}, w_{\max}]$.
To characterize this goal, we decompose the bias $\text{Bias}^w(a\mid y)$ into two components: the diffusion model bias $\text{Bias}_{\text{M}}(a\mid y)$ and the \emph{guidance-induced bias} $\text{Bias}_{\mathrm{G}}^{w}(a\mid y)$ (hereafter, \emph{guidance bias}).
\begin{align}
    \mathrm{Bias}^w(a\mid y)
    &= \underbrace{Q^w(a\mid y)-Q^{w_{\mathrm{ref}}}(a\mid y)}_{\mathrm{Bias}_{\mathrm{G}}^w(a\mid y)}
    + \underbrace{Q^{w_{\mathrm{ref}}}(a\mid y)-T(a\mid y)}_{\mathrm{Bias}_{\mathrm{M}}(a\mid y)},
    \label{eq:bias_decomp}
\end{align}
where \(w_{\mathrm{ref}}\) denotes the scale aligning the model \(Q^{w_{\mathrm{ref}}}(\cdot \mid y)\) with the target \(T(\cdot \mid y)\).
\begin{definition}[Fair guidance]\label{def:fair_guidance}
    Guidance is \emph{fair} if $\mathrm{Bias}_{\mathrm{G}}^{w}(a\mid y)=0$ for all guidance scales $w$ and sensitive attributes $a$, \ie, the group distribution is invariant to $w$.
\end{definition}
Prior work has extensively studied how to ensure $\mathrm{Bias}_{\mathrm{M}}(a\mid y)=0$ in the diffusion model.
In contrast, our objective is to achieve \emph{fair guidance} by ensuring that $\mathrm{Bias}_{\mathrm{G}}^{w}(a\mid y)=0$ across guidance scales $w$.
Our approach is also complementary to prior debiasing methods, as it modifies only guidance and leaves the diffusion model unchanged.

\subsection{The overlooked impact of guidance bias}

We reveal the overlooked consequences of $\mathrm{Bias}_{\mathrm{G}}^{w}$ through two scenarios where its impact is most critical: its emergence in \emph{vanilla models} and persistence in \emph{debiased models} (\cref{fig:motivation}).
First, in vanilla models, while guidance is known to amplify the majority attribute, we further analyze its mechanism as a process of extrapolation from the unconditional model (\cref{sec:exp_analysis}). We find that minor skews in this  baseline are magnified into distinct bias trends as the guidance scale increases, suggesting that guidance bias should be treated as a distinct phenomenon from model bias.
Second, even models debiased for a specific setting remain vulnerable; as the guidance scale $w$ deviates from the reference scale $w_{\mathrm{ref}}$, the lack of scale-awareness causes the group distribution to drift away from the target, leading to a collapse of fairness (\cref{app:bias-amplification}).
Together, these observations indicate that \emph{guidance scale-invariant} group fairness cannot be achieved by model debiasing alone; it requires a direct intervention on the guidance mechanism itself.

\section{\method: Fair Diffusion Across Guidance Scales} \label{sec:method}

Building on these observations, we develop a framework to reduce guidance bias by identifying when guidance remains fair across scales. We first derive a sufficient condition for such fairness (\cref{sec:fair_condition}). We then propose \method, which leverages this condition to design fair guidance algorithms in both CG and CFG regimes (\cref{sec:fair_guidance}).

\subsection{Sufficient condition for fair guidance}
\label{sec:fair_condition}
In this section, we present a sufficient condition on the guidance function $f(\cdot,t):\mathcal{X}\times[0,T]\to \mathbb{R}_+$ under which the target distribution $p_t^w(\cdot\mid y)$ remains fair across guidance scales $w$.
Because the target distribution satisfies $p_t^w(\cdot\mid y)\propto p_t^0(\cdot\mid y)\,f(\cdot,t)^w$, any group-dependent bias in $f$ is inherited by $p_t^w$ and amplified as the guidance scale $w$ increases.
We therefore seek a condition on the group-conditional distribution of $f(X_t,t)$ that preserves fairness in $p_t^w$ uniformly over $w$.
Accordingly, we extend \emph{Strong Demographic Parity} (SDP)~\citep{jiang2020wasserstein} to the guidance function $f$.
\begin{definition}[SDP of the guidance function]\label{def:sdp}
    The guidance function $f$ satisfies Strong Demographic Parity if, for any $t\in[0,T]$ and $a,a'\in\mathcal{A}$,
    \begin{equation}
        \mathcal{L}\bigl(f(X_t,t)\mid a(X_0){=}a,\,y\bigr)
        \;=\;
        \mathcal{L}\bigl(f(X_t,t)\mid a(X_0){=}a',\,y\bigr).
    \end{equation}
\end{definition}
Under SDP, the target distribution $p^w_t(\cdot \mid y)$ preserves the group ratios across guidance scales.
\begin{theorem}[SDP preserves fairness in the target distribution]\label{thm:targetdist}
    Suppose that $f(\cdot,t)$ satisfies SDP (\cref{def:sdp}) with respect to $p_t(\cdot\mid y)$. Then, for every $w\ge 0$, $t\in[0,T]$, and $a,a'\in\mathcal{A}$, we have
    \begin{equation}
        p_t^w(a\mid y) / p_t^w(a'\mid y)
        \;=\;
        p_t^0(a\mid y) / p_t^0(a'\mid y).
        \label{eq:prop-ratio_main}
    \end{equation}
    Consequently, $p_t^w(\cdot \mid y)$ retains its group ratio for every guidance scale $w$.
\end{theorem}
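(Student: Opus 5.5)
We write the group probability under the tilted target as an expectation and then use SDP to show that the tilting factor is the same for every group. Fix $t$, $y$ and $w\ge 0$, and write $p_t^0(\cdot\mid y)=p_t(\cdot\mid y)$ for the untilted noisy conditional. The group label $a(X_0)$ is attached to the clean sample, so I will work with the joint law of $(X_0,X_t)$ under $p_t(\cdot\mid y)$. The tilted target is defined on the joint by reweighting with $f(X_t,t)^w$; this is the natural reading of $p_t^w(\cdot\mid y)\propto p_t^0(\cdot\mid y)f(\cdot,t)^w$. Set $Z_w:=\mathbb{E}\bigl[f(X_t,t)^w\mid y\bigr]$, and assume it is finite and positive so the tilt is a probability measure. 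Then
\begin{equation*}
p_t^w(a\mid y)=\frac{\mathbb{E}\bigl[\mathbf{1}\{a(X_0)=a\}\,f(X_t,t)^w\mid y\bigr]}{Z_w}
=\frac{p_t^0(a\mid y)\,\mathbb{E}\bigl[f(X_t,t)^w\mid a(X_0)=a,\,y\bigr]}{Z_w},
\end{equation*}
where the second equality is the tower property, conditioning on the event $\{a(X_0)=a\}$ (which we take to have positive probability).

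Next I use \cref{def:sdp}. Conditional on $a(X_0)=a$ and on $y$, the scalar random variable $f(X_t,t)$ has a law that does not depend on $a$. Hence the same holds for any measurable function of it, and in particular for $g_w(s)=s^w$ on $\mathbb{R}_+$. So
\[
m_w:=\mathbb{E}\bigl[f(X_t,t)^w\mid a(X_0)=a,\,y\bigr]
\]
is the same number for every group $a\in\mathcal{A}$. Substituting gives $p_t^w(a\mid y)=p_t^0(a\mid y)\,m_w/Z_w$ for all $a$. Taking the ratio for two groups $a,a'$ cancels the common factor $m_w/Z_w$ and yields \cref{eq:prop-ratio_main}. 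Summing over $a$ also shows $m_w=Z_w$, so in fact $p_t^w(a\mid y)=p_t^0(a\mid y)$ exactly. This is the statement that the group ratio is retained.

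The main obstacle is not algebraic but a matter of measure-theoretic bookkeeping. First, $p_t^w(a\mid y)$ has to be interpreted as a group probability even though the group is defined through $X_0$ while the tilt acts on $X_t$. I will handle this by defining the tilt on the joint law as above. In the pure marginal reading, the group posterior $p(a\mid X_t)$ is unchanged by multiplying the density of $X_t$ by a function of $X_t$, so the same computation goes through. Second, the moments must exist: I need $0<Z_w<\infty$ and $m_w<\infty$ for each $w$. I will state this as the standing integrability assumption under which $p_t^w$ is well defined in the first place. The case $w=0$ is trivial, since then $f^0\equiv 1$.
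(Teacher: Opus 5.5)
Your proof is correct and follows essentially the same route as the paper: both write $p_t^w(a\mid y)\propto p_t^0(a\mid y)\,\mathbb{E}[f(X_t,t)^w\mid a(X_0)=a,y]$ (the paper via Bayes' rule on the tilted group-conditional densities, you via the tower property on the tilted joint law), then apply SDP to the map $z\mapsto z^w$ to equate the group-wise moments, as in the paper's moment-equality lemma. Your extra remark that summing over groups gives $m_w=Z_w$, so group probabilities are preserved exactly and not only in ratio, is a valid small strengthening.
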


\cref{thm:main} shows how this controls the actual process $q_t^w$ in a Gaussian setting; the formal statement and proof are deferred to \cref{app:proofs}.
\begin{theorem}[SDP transfers to the actual distribution; informal]\label{thm:main}
    In a Gaussian setting with a log-affine endpoint potential, if $f$ satisfies \cref{def:sdp}, then $\mathrm{Bias}_G^w(a\mid y)=0$ for all $w, w_{\mathrm{ref}} \ge 0$ and $a\in\mathcal{A}$.
\end{theorem}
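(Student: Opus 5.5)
We need to show that, in the Gaussian setting with a log-affine endpoint potential, the group probabilities of the actual generated law $Q^w=q^w_0$ do not depend on $w$. Once that holds, $\mathrm{Bias}_G^w(a\mid y)=Q^w(a\mid y)-Q^{w_{\mathrm{ref}}}(a\mid y)=0$ for all $w,w_{\mathrm{ref}}\ge 0$.

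\textbf{Setting.} We fix the formal setting as follows.
\begin{itemize}
\item The data law given $y$ is Gaussian, $p_0(\cdot\mid y)=\mathcal{N}(\mu,\Sigma)$, so $p_t(\cdot\mid y)=\mathcal{N}(\mu,\Sigma+\sigma_t^2 I)$.
\item The guidance function is the heat-flow image of an endpoint potential $\log f_0(x)=\langle b,x\rangle+c$, i.e.\ $f(x,t)\propto \exp(\langle b,x\rangle)$ up to a $t$-dependent constant.
\item The sensitive attribute is a measurable $a(X_0)$.
\end{itemize}

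\textbf{Step 1: the guided flow is exactly solvable.} We compute the guided score:
\[
\nabla\log p^w_\theta(x\mid y)=-(\Sigma+\sigma_t^2I)^{-1}(x-\mu)+w b.
\]
This is affine in $x$, so the probability-flow ODE in \cref{eq:edm} is linear, and its time-$t$ flow map is affine. Write $\Phi^w_{T\to 0}(x)=A x+ m_w$.
\begin{itemize}
\item The linear part $A$ is the same as for $w=0$, since $w$ enters only as a constant drift.
\item The shift $m_w=\int \dot\sigma_t\sigma_t\,(\text{propagator})\,w b\,dt$ is linear in $w$.
\end{itemize}
Hence, up to the usual matching of the initial law at large $T$, $Q^w$ is the pushforward of the $w=0$ generated law by a translation: $Q^w=\mathcal{N}(\mu+\Sigma b\,w_{\text{eff}},\Sigma)$.

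Next we check consistency with the target. Tilting $\mathcal{N}(\mu,\Sigma)$ by $e^{w\langle b,x\rangle}$ gives $\mathcal{N}(\mu+w\Sigma b,\Sigma)$. Working in the commuting case, or diagonalizing $\Sigma$, we verify that $m_w=w\Sigma b$, so that $q^w_0=p^w_0$ exactly. This is the log-affine miracle: Gaussian tilts commute with Gaussian convolution.

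\textbf{Step 2: what SDP forces.} We translate \cref{def:sdp} into a condition on $b$. With $f(X_t,t)$ a monotone function of $\langle b,X_t\rangle$, SDP says the law of $\langle b,X_t\rangle$ given $a(X_0)=a$ is the same for every $a$. Taking $t=0$, this is independence of $\langle b,X_0\rangle$ and $a(X_0)$ under $p_0(\cdot\mid y)$.

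\textbf{Step 3: independence is preserved under the tilt.} We show $Q^w(a\mid y)=p^0_0(a\mid y)$. By Step 1, $Q^w=p^w_0\propto p_0^0\,e^{w\langle b,x\rangle}$. Then
\[
Q^w(a)=\frac{\mathbb{E}_{p_0}\!\left[\mathbf 1\{a(X_0)=a\}\,e^{w\langle b,X_0\rangle}\right]}{\mathbb{E}_{p_0}\!\left[e^{w\langle b,X_0\rangle}\right]}.
\]
By the independence from Step 2, the numerator factors as $p_0(a)\,\mathbb{E}[e^{w\langle b,X_0\rangle}]$. So $Q^w(a)=p_0(a)$ for every $w$. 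This is the argument of \cref{thm:targetdist} at $t=0$, which we can invoke directly.

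\textbf{Main obstacle.} The hardest part is Step 1: showing rigorously that the actual law $q_0^w$ produced by the ODE coincides with the target $p_0^w$. In general it does not, since guidance of $p_t$ by $f^w$ is not the score of the tilted marginal. I would first handle it by explicitly solving the linear ODE in the eigenbasis of $\Sigma$ and checking that the accumulated drift equals $w\Sigma b$. This uses the fact that the time-$t$ potential of a log-affine endpoint tilt is again log-affine with the same slope $b$.

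If exact equality fails for non-commuting structure, I would fall back on the weaker fact that suffices. The map $\Phi^w$ is the $w=0$ map composed with a translation along some direction $v_w$. It then suffices to show that translation along $v_w$ preserves the group masses, which would follow if SDP at all $t$ implies that $a(X_0)$ is invariant to shifts along $\Sigma b$ (in distribution), giving the claim.
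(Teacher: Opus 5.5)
\textbf{Gap in Step 1: your time-$t$ guidance function is the wrong one.} Your overall plan matches the paper: show that the generated law is the endpoint tilt $p_0(\cdot\mid y)\,\ell_y^w/Z$, then use SDP at $t=0$ to factor the moment. Your Steps 2--3 are exactly the paper's last step. The gap comes from taking $f(x,t)\propto e^{\langle b,x\rangle}$, i.e.\ $\nabla\log f_t\equiv b$, which is the heat flow of $\ell_y$.

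With that choice the identity $m_w=w\Sigma b$ fails even in one dimension. Fix an eigen-direction of $\Sigma$ with eigenvalue $\lambda_i$, write $u=\lambda_i+\sigma_t^2$, and let $y_i$ be the centered coordinate. The flow ODE becomes $dy_i/du=y_i/(2u)-wb_i/2$, with solutions $y_i=C\sqrt{u}-wb_iu$. Starting from $\mathcal N(\mu,\Sigma+\sigma_T^2I)$, the endpoint shift is therefore $wb_i\sqrt{\lambda_i}\bigl(\sqrt{\lambda_i+\sigma_T^2}-\sqrt{\lambda_i}\bigr)$. This depends on $\sigma_T$, blows up as $\sigma_T\to\infty$, and points asymptotically along $\Sigma^{1/2}b$ rather than $\Sigma b$.

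The ``commutation'' you invoke does not hold. Tilting $\mathcal N(\mu,\Sigma)$ by $e^{w\langle b,x\rangle}$ and then noising gives mean $\mu+w\Sigma b$. Noising and then tilting gives mean $\mu+w(\Sigma+\sigma_t^2I)b$.

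Your fallback fails too. Independence of $\langle b,X_0\rangle$ and $a(X_0)$ only guarantees invariance of group masses under translations along $\Sigma b$, since a shift by $v$ has density ratio depending on $\langle\Sigma^{-1}v,x\rangle$. Here is a counterexample:
\begin{itemize}
\item Take $\mu=0$, $\Sigma=\mathrm{diag}(1,4)$, $b=(1,1)$, and $a(x)=\mathbf 1\{x_1-x_2/4>0\}$.
\item Since $x_1-x_2/4$ is uncorrelated with $x_1+x_2$, SDP holds for your $f$ at every $t$.
\item Your shift moves $x_1-x_2/4$ by $w\bigl(\sqrt{1+\sigma_T^2}-\tfrac12\sqrt{4+\sigma_T^2}\bigr)>0$, so $Q^w(a\mid y)$ changes with $w$.
\end{itemize}
So under your setup the statement is actually false, not merely unproved.

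\textbf{The fix.} The time-$t$ guidance must be the posterior expectation $f_t(x)=\mathbb E[\ell_y(X_0)\mid X_t=x]$, which is what a noisy classifier $p(y\mid x_t)$ is, not the heat flow of $\ell_y$. In the Gaussian model this gives $\nabla\log f_t=\Sigma(\Sigma+\sigma_t^2I)^{-1}b$. The key identity used in the paper is $\mathbb E[\ell_y(X_0)^w\mid X_t=x]=C_t(w)\,f_t(x)^w$ with $C_t(w)$ independent of $x$. Hence $\nabla\log p_t+w\nabla\log f_t$ is exactly the score of the noised tilted law $\mathcal N(\mu+w\Sigma b,\Sigma+\sigma_t^2I)$.

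With this $f_t$ your ODE computation goes through. The eigen-direction equation becomes $dy_i/du=(y_i-w\lambda_ib_i)/(2u)$, which has the constant particular solution $w\lambda_ib_i$. The shift is then exactly $w\Sigma b$ under matched initialization, and $w\lambda_ib_i\bigl(1-\sqrt{\lambda_i/(\lambda_i+\sigma_T^2)}\bigr)\to w\lambda_ib_i$ otherwise.

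\textbf{Comparison with the paper.} The repaired ODE argument is a legitimate, more elementary alternative, and it matches the main text's definition of $Q^w$ via the probability-flow ODE. The paper instead works with the reverse SDE. It uses Girsanov together with It\^o's formula on $\log h_t^{(w)}$ to obtain $d\mathbb P^w/d\mathbb P=\ell_y(X_0)^w/\mathbb E[\ell_y(X_0)^w]$ under an ideal high-noise initialization. That route isolates the one structural input, $\nabla\log h_t^{(w)}=w\nabla\log f_t$, that would be needed beyond the Gaussian case.
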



\subsection{Debiasing guidance in practice}
\label{sec:fair_guidance}

\begin{figure}[t]
    \centering
    \includegraphics[height=0.26\textheight]{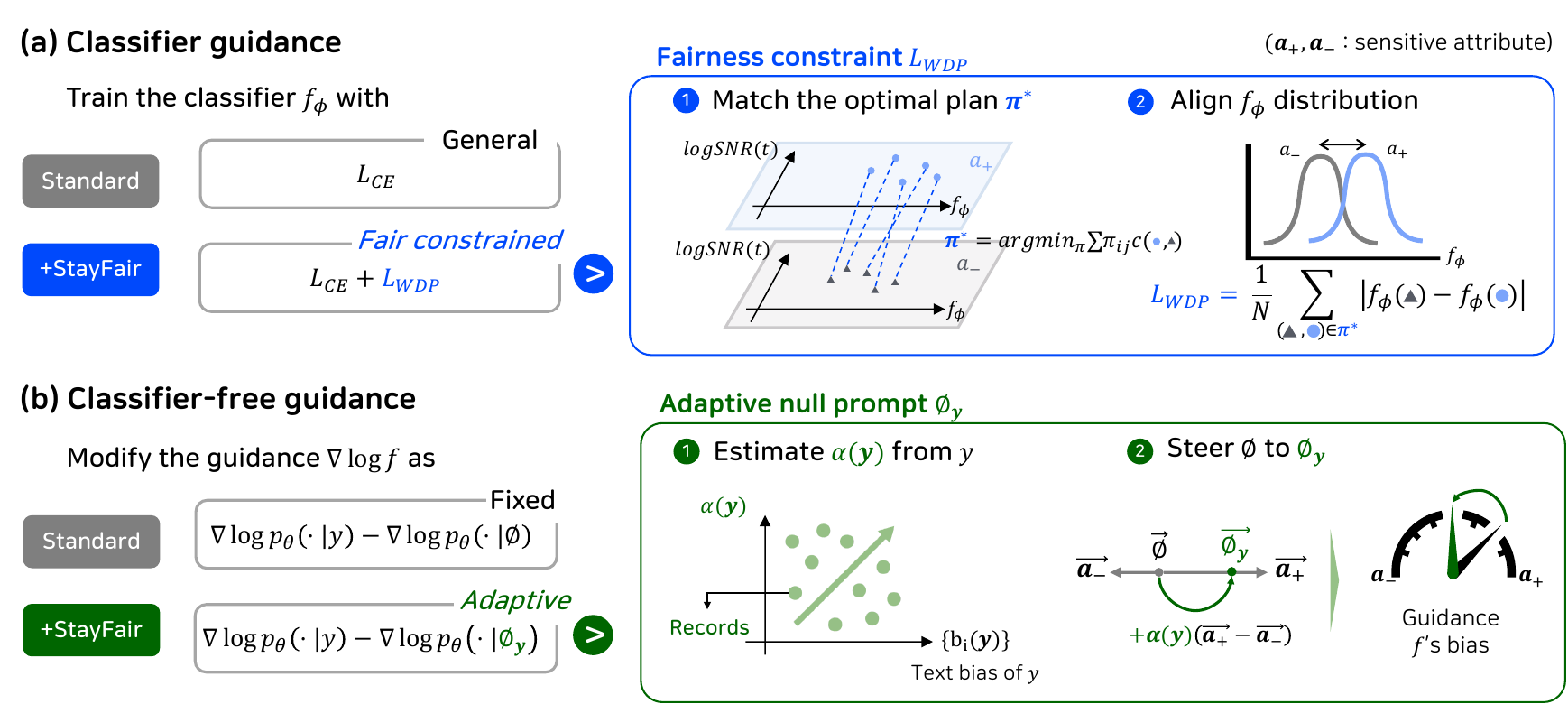}
    \caption{
        \textbf{Overview of \method.} 
        In (a) \textit{classifier guidance}, \method debiases the classifier $f_\phi$ by adding the fairness constraint $L_{WDP}$.
        It matches samples between groups using a joint cost on noise level and $f_\phi$ value, aligning the groupwise $f_\phi$ distributions.
        In (b) \textit{classifier-free guidance}, \method replaces the fixed null prompt $\emptyset$ with an adaptive null prompt $\emptyset_y$ by steering its embedding along the attribute direction.
        $\alpha(y)$ is predicted from text $y$'s bias using precomputed optimal $\alpha^*$ records.
    }
  \label{fig:method_pipeline}
\end{figure}

Building on \cref{thm:targetdist}, we design \method to approximately balance $f$ in both guidance regimes. \cref{fig:method_pipeline} summarizes the instantiation for CG and CFG, detailed below.

\textbf{Classifier guidance.} \label{sec:cg_ours}
In CG, the guidance function is an explicit classifier $f_\phi(\cdot,t)$, so we regularize its outputs toward SDP during training.
We adopt Wasserstein Demographic Parity (WDP)~\citep{jiang2020wasserstein}, a tractable relaxation of SDP that measures the 1-Wasserstein distance between the $f_\phi(\cdot,t)$ distributions of two groups, i.e., $\Delta\mathrm{WDP}_t(a,a')=0$ if and only if SDP holds at timestep $t$.
\begin{equation}
    \Delta\mathrm{WDP}_t(a,a') \;=\; \mathcal{W}_1\bigl(\mathcal{L}(f_\phi(X_t,t)\mid a(X_0){=}a, y),\, \mathcal{L}(f_\phi(X_t,t)\mid a(X_0){=}a', y)\bigr).
    \label{eq:wdp_def}
\end{equation}

To enforce \(\Delta\mathrm{WDP}_t(a,a')=0\) in minibatch training, we minimize an empirical WDP loss between cross-group classifier outputs.
Since noise levels vary widely across timesteps, we estimate this discrepancy within a local timestep window.
For each group pair, we match cross group classifier outputs \(u_i\) and \(v_j\) using an optimal transport plan \(\pi^*= \arg\min_{\pi\in\Pi}\sum_{i,j}\pi_{ij}\,c\bigl((u_i,t_i),(v_j,t'_j)\bigr),\) where \(\Pi\) denotes the set of balanced transport plans.
The transport cost is noise-aware $c\bigl((u,t),(v,t')\bigr)=|u-v|+ \gamma|\mathrm{logSNR}(t)-\mathrm{logSNR}(t')|$ with parameter $\gamma$.
The resulting empirical loss $\mathcal{L}_{\mathrm{WDP}}(a,a') = \tfrac{1}{n}\sum_{i,j} \pi_{ij}^*|u_i - v_j|$ combines with the cross-entropy loss $\mathcal{L}_{\mathrm{CE}}$ as
\begin{equation}
    \mathcal{L} \;=\; \mathcal{L}_{\mathrm{CE}} + \lambda \sum\nolimits_{a,a'\in\mathcal{A}} \mathcal{L}_{\mathrm{WDP}}(a,a').
    \label{eq:cg_obj}
\end{equation}

\textbf{Classifier\nobreakdash-free guidance.} \label{sec:cfg_ours}
In CFG, \(f(\cdot,t)\) is only implicitly defined by \cref{eq:cfg}, so direct regularization is infeasible.
We replace the fixed null prompt \(\emptyset\) with an adaptive null prompt \(\emptyset_y\) in \cref{eq:ours}.
\begin{equation}
    \nabla\log f(x_t,t) := \nabla\log p_\theta(x_t|y) - \nabla\log p_\theta(x_t|\emptyset_y) \label{eq:stayfair_cfg}
\end{equation}
\begin{equation}
    E(\emptyset_{y}) := E(\emptyset) + \alpha(y)\,\frac{E(a^+) - E(a^-)}
{\|E(a^+) - E(a^-)\|} \label{eq:ours}
\end{equation}
where $E(\cdot)$ is the text encoder, $a_+, a_-$ are attribute texts (\eg, ``male'', ``female''), and the direction $E(a_+) - E(a_-)$ follows embedding arithmetic~\citep{couairon2022embedding}.
Tuning \(\alpha\) controls the attribute bias of the null-branch distribution \(p_\theta(x_t\mid\emptyset_y)\).
By \cref{eq:stayfair_cfg}, this steers the guidance bias in the opposite direction.

To determine $\alpha(y)$ for a prompt $y$, we first study the oracle value $\alpha^\star(y)$, defined as the value that minimizes the guidance bias $|\text{Bias}_{\mathrm{G}}^{w}(a|y)|$ across the range of group distributions.
As shown in \cref{fig:alpha_ablation_a}, $\text{Bias}_{\mathrm{G}}^{w}(a|y)$ varies monotonically with $\alpha$, so $\alpha^\star(y)$ can be identified as the value that yields the flattest curve.
For an unseen prompt $y$, \method estimates $\alpha(y)$ from the prompt's textual bias.
Because the model bias under prompt $y$ is correlated with the textual bias encoded in $E(y)$, we use the latter as a proxy for $\alpha^\star(y)$~\citep{li2025fair}.
Implementation details are deferred to \cref{app:alpha-analysis}.


\textbf{Autoguidance.}
Within our framework, Autoguidance (AG)~\citep{karras2024guiding} is the special case $\emptyset_y = y$ with a weaker model.
AG mitigates guidance bias by closing the conditional and unconditional gap, at the cost of weaker condition alignment (\cref{tab:table1_2_results}).
\method interpolates between CFG and AG by canceling the sensitive attribute component of the condition.
It therefore preserves the condition alignment of CFG and the image quality of AG, with reduced bias amplification.

\subsection{Combining with fair diffusion models}
Existing debiasing methods~\citep{shenfinetuning, gandikota2024unified} target \(\text{Bias}_{\mathrm{M}}\) by optimizing the model distribution at a fixed reference guidance scale  \(w_{\text{ref}}\), yielding a debiased prediction \(\epsilon_\theta^{\text{Fair}}(x_t,t,y)\).
\begin{equation}
    \epsilon_\theta^{\text{StayFair}}(x_t,t,y)
    =
    \underbrace{\epsilon_\theta^{\text{Fair}}(x_t,t,y)}_{\text{fair model at } w=w_{\text{ref}}}
    \;+\;
    (w{-}w_{\text{ref}})
    \underbrace{
    \bigl(\epsilon_\theta(x_t,t,y) - \epsilon_\theta(x_t,t,\emptyset_y)\bigr)
    }_{\text{fair guidance}} .
    \label{eq:ft_ref}
\end{equation}
We preserve the fair model's distribution at \(w_{\text{ref}}\) by applying fair guidance only to deviations from \(w_{\text{ref}}\).
Attribute-based methods follow a different formulation because the attribute is included as part of the conditioning information.
We also apply \method to attribute-based methods such as Fair Diffusion~\citep{friedrich2023fair} and Weak Guidance~\citep{kim2025rethinking}, improving fairness across guidance scales (\cref{app:inference-based-methods}).
\section{Experiments} \label{sec:experiment}

We evaluate \method on class-conditional with CG (\cref{sec:exp_cg}), text-to-image with CFG (\cref{sec:exp_t2i}), and analyses about \method and bias amplification (\cref{sec:exp_analysis}).

\subsection{Class-conditional generation with CG}
\label{sec:exp_cg}
\begin{table}[!t]
    \centering
    \caption{\textbf{Quantitative results for CG on CelebA in class-conditional generation.}
    We report \cellcolor{gray!15}Bias Range, which measures guidance-induced bias, and fair-intra FID. Best in bold; our target metric shaded.
    \method reduces guidance bias while also improving image quality.}
    \label{tab:bias_fid_by_model}
    \begin{tabular}{lccc ccc}
        \toprule
        & \multicolumn{3}{c}{Smiling} & \multicolumn{3}{c}{Blond Hair} \\
        \cmidrule(lr){2-4} \cmidrule(lr){5-7}
        Classifier & \cellcolor{gray!15}Range (\%) $\downarrow$ & FID $\downarrow$ & FD$_\text{DINOv2}$ $\downarrow$
               & \cellcolor{gray!15}Range (\%) $\downarrow$ & FID $\downarrow$ & FD$_\text{DINOv2}$ $\downarrow$ \\
        \midrule
        CG                 & \cellcolor{gray!15}13.3          & 17.99          & 87.51
                           & \cellcolor{gray!15}33.6          & 22.70          & 151.04 \\
        RW~\citep{idrissi2022simple}  & \cellcolor{gray!15}6.3          & 16.96          & 83.39
                           & \cellcolor{gray!15}23.4          & 20.05          & \textbf{136.55} \\
        GDRO~\citep{sagawa2020distributionally} & \cellcolor{gray!15}7.8       & 17.50          & 85.28
                           & \cellcolor{gray!15}21.1          & 22.29          & 152.70 \\
        \method (ours)     & \cellcolor{gray!15}\textbf{1.6} & \textbf{16.66} & \textbf{82.56}
                           & \cellcolor{gray!15}\textbf{6.3} & \textbf{19.68} & 139.65 \\
        \bottomrule
    \end{tabular}
\end{table}
\begin{figure}[!t]
  \centering
  \begin{minipage}[b]{0.31\linewidth}
    \centering
    \includegraphics[width=\linewidth,trim={0 0 0 0.65cm},clip]{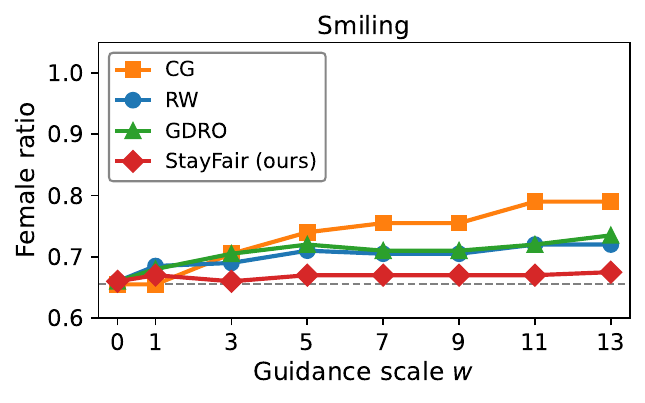}\\[-2pt]
    {\footnotesize (a) Smiling}
  \end{minipage}\hspace{0.02\linewidth}
  \begin{minipage}[b]{0.31\linewidth}
    \centering
    \includegraphics[width=\linewidth,trim={0 0 0 0.6cm},clip]{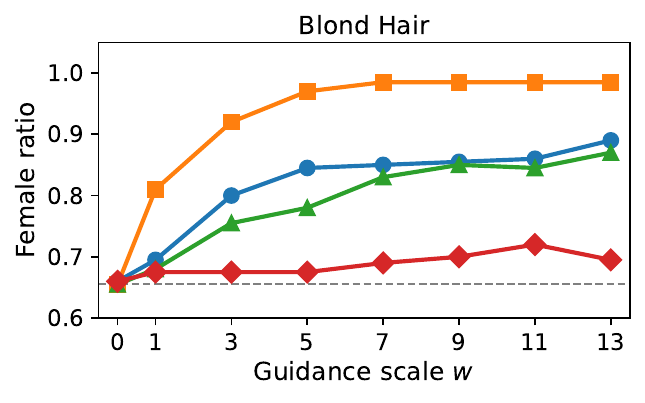}\\[-2pt]
    {\footnotesize (b) Blond Hair}
  \end{minipage}\hspace{0.02\linewidth}
  \begin{minipage}[b]{0.31\linewidth}
    \centering
    \raisebox{8pt}{\resizebox{0.85\linewidth}{!}{{\setlength{\tabcolsep}{0.5pt}%
 \renewcommand{\arraystretch}{0.8}%
 \newcommand{\cgrowlabel}[1]{\raisebox{0.2\height}{\rotatebox[origin=lB]{90}{\scriptsize \mbox{#1}}}}%
 \newcommand{\qcell}[1]{\includegraphics[width=0.2\linewidth]{cg_crops/#1}}%
 \begin{tabular}{@{}>{\raggedleft\arraybackslash}p{0.04\linewidth}@{\hspace{1pt}}cccc@{}}
   & {\scriptsize CG} & {\scriptsize StayFair} & {\scriptsize CG} & {\scriptsize StayFair} \\[-1pt]
   \cgrowlabel{$w{=}0$} &
     \qcell{smiling_cg_w0.0.jpg} &
     \qcell{smiling_stayfair_w0.0.jpg} &
     \qcell{blond_cg_w0.0.jpg} &
     \qcell{blond_stayfair_w0.0.jpg} \\ [-1pt]
   \cgrowlabel{$w{=}1$} &
     \qcell{smiling_cg_w1.0.jpg} &
     \qcell{smiling_stayfair_w1.0.jpg} &
     \qcell{blond_cg_w1.0.jpg} &
     \qcell{blond_stayfair_w1.0.jpg} \\ [-1pt]
   \cgrowlabel{$w{=}5$} &
     \qcell{smiling_cg_w5.0.jpg} &
     \qcell{smiling_stayfair_w5.0.jpg} &
     \qcell{blond_cg_w5.0.jpg} &
     \qcell{blond_stayfair_w5.0.jpg} \\ [-1pt]
 \end{tabular}}
}}\\[2pt]
    {\footnotesize (c) Qualitative}
  \end{minipage}
\vspace{2pt}
\caption{\textbf{Effect of guidance scale under classifier guidance.} 
(a,b) Female ratio curves across guidance scales; dashed: unguided ($w{=}0$).
(c) Qualitative comparison across guidance scales.
As $w$ increases, \method preserves minority groups while CG shifts toward the majority.
}
 \label{fig:cg_combined}
\end{figure}

\textbf{Setup.}
We apply CG to ADM-G~\citep{dhariwal2021diffusion} on CelebA~\citep{liu2015deep} ($64{\times}64$), using the unconditional model from~\citet{ning2023input}.
We consider gender as the sensitive attribute and two conditions with different imbalance levels: \emph{smiling} (weak) and \emph{blond hair} (strong).
Gender labels are classified by using a pretrained classifier~\citep{karkkainenfairface}.
For fairness, we define bias as the deviation of the group ratio from uniformity, and report its range ($\max - \min$ across guidance scales $w$).
For image quality, we report \emph{fair-intra} FID~\citep{heusel2017gans, nam2023breaking} and FD$_\text{DINOv2}$~\citep{stein2023exposing}, using a reference set adjusted to the target distribution.
We sweep $w \in \{0, 1, 3, 5, 7, 9, 11, 13\}$, generating $128$ images per condition and scale for the gender ratio and $10{,}000$ for image quality.
Further implementation details are in \cref{app:experimental-details}.

\textbf{Baselines.}
We compare \method with a vanilla classifier and two representative debiased classifiers under the same CG pipeline: RW~\citep{idrissi2022simple} (data-level) and GDRO~\citep{sagawa2020distributionally} (optimization-level).

\textbf{Experimental results.}
\cref{tab:bias_fid_by_model} shows that \method substantially reduces guidance bias in CG: the bias range drops from 13.3\% to 1.6\% on \textit{smiling} and from 33.6\% to 6.3\% on \textit{blond hair}, while achieving the best fair-intra FID in both conditions and the best FD$_\text{DINOv2}$ on \textit{smiling}.
While RW and GDRO also reduce the bias range, their gains remain sub-optimal because they do not match guidance-score distributions across groups, whereas \method does.
\cref{fig:cg_combined} shows that \method preserves the unguided gender ratio across $w$, whereas CG drifts toward majority modes.





\subsection{Text-to-image generation with CFG}
\label{sec:exp_t2i}
\begin{table}[!t]
    \centering
    \caption{\textbf{Quantitative results for guidance bias and image quality on vanilla and debiased models.}
    Bias is reported in \%.
    \cellcolor{gray!15}Best in bold within each block; target metric shaded.
    \method consistently reduces guidance bias across different models while preserving image quality.
    }
    \label{tab:table1_2_results}
    \small
    \begin{tabular}{lll cc>{\columncolor{gray!15}}c ccc}
        \toprule
        & & & \multicolumn{3}{c}{Bias (\%) $\downarrow$} & \multicolumn{3}{c}{Quality $\uparrow$} \\
        \cmidrule(lr){4-6} \cmidrule(lr){7-9}
        Type & Model & Guidance & Avg. & Worst & Range & CLIP & Aesth. & Pick \\
        \midrule
        \multirow{5}{*}{Vanilla}
                & \multirow{3}{*}{SD1.5~\citep{rombach2022high}}
                                            & CFG                       & 30.7          & 34.5          & 9.0          & \textbf{28.72} & \textbf{6.72} & \textbf{19.78} \\
                &                            & PAG~\citep{ahn2024self}    & \textbf{24.9} & \textbf{28.1} & 6.9          & 26.53          & 6.46          & 18.96 \\
                &                            & \method (ours)             & 28.1          & 30.6          & \textbf{5.2} & 28.71          & \textbf{6.72} & 19.75 \\
        \cmidrule(lr){2-9}
                & \multirow{2}{*}{SD3~\citep{esser2024scaling}}
                                            & CFG                        & 39.2          & 42.6          & 9.6          & \textbf{28.29} & 6.62          & \textbf{19.98} \\
                &                            & \method (ours)             & \textbf{34.8} & \textbf{37.1} & \textbf{5.4} & 28.14          & \textbf{6.64} & 19.88 \\
        \midrule
        \multirow{4}{*}{Debiased}
                & \multirow{2}{*}{UCE~\citep{gandikota2024unified}}
                                            & CFG                        & 20.0          & 25.7          & 12.8         & \textbf{27.68} & 6.86          & 19.89 \\
                &                            & \method (ours)             & \textbf{19.7} & \textbf{22.5} & \textbf{5.4} & 27.67          & \textbf{6.87} & \textbf{19.90} \\
        \cmidrule(lr){2-9}
                & \multirow{2}{*}{FT~\citep{shenfinetuning}}
                                            & CFG                        & 14.7          & 21.0          & 16.7         & \textbf{27.80} & \textbf{6.85} & \textbf{20.14} \\
                &                            & \method (ours)             & \textbf{14.3} & \textbf{17.2} & \textbf{6.2} & 27.79          & \textbf{6.85} & \textbf{20.14} \\
        \bottomrule
    \end{tabular}
\end{table}

\begin{figure}[!t]
    \centering
    \begin{subfigure}[b]{0.28\linewidth}
        \centering
        \includegraphics[height=3.8cm]{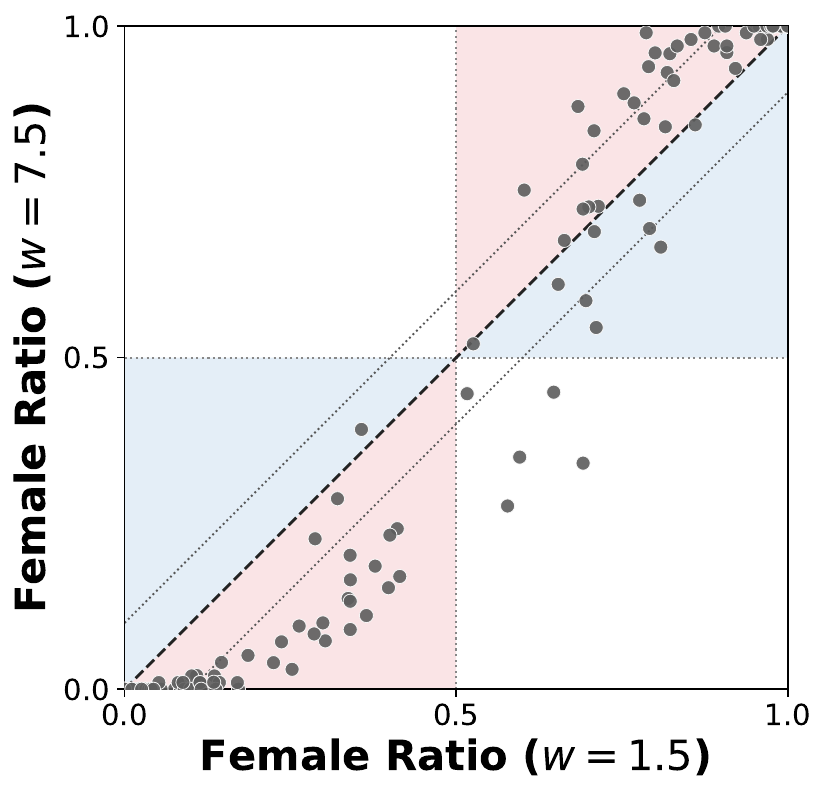}
        \caption{SD3 + CFG}
        \label{fig:scatter_sd3_ours_opt_a}
    \end{subfigure}\hspace{0.01\linewidth}
    \begin{subfigure}[b]{0.33\linewidth}
        \centering
        \includegraphics[height=3.8cm]{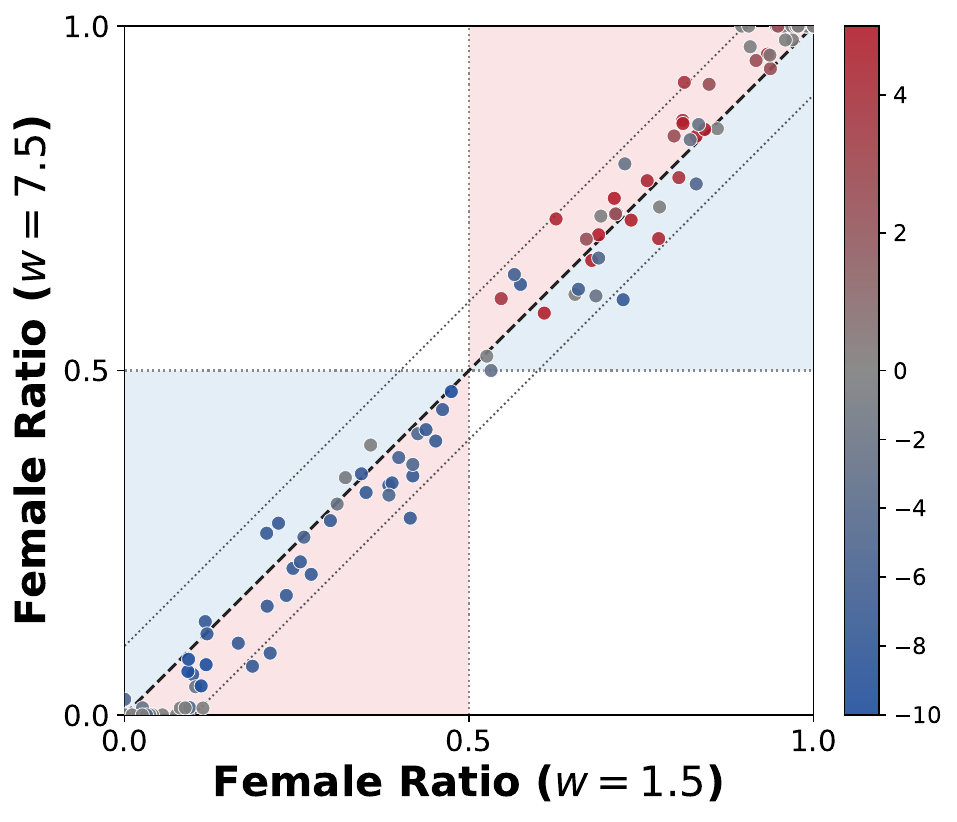}
        \caption{SD3 + \method}
        \label{fig:scatter_sd3_ours_opt_b}
    \end{subfigure}\hspace{0.01\linewidth}
    \begin{subfigure}[b]{0.33\linewidth}
        \centering
        \includegraphics[height=3.8cm]{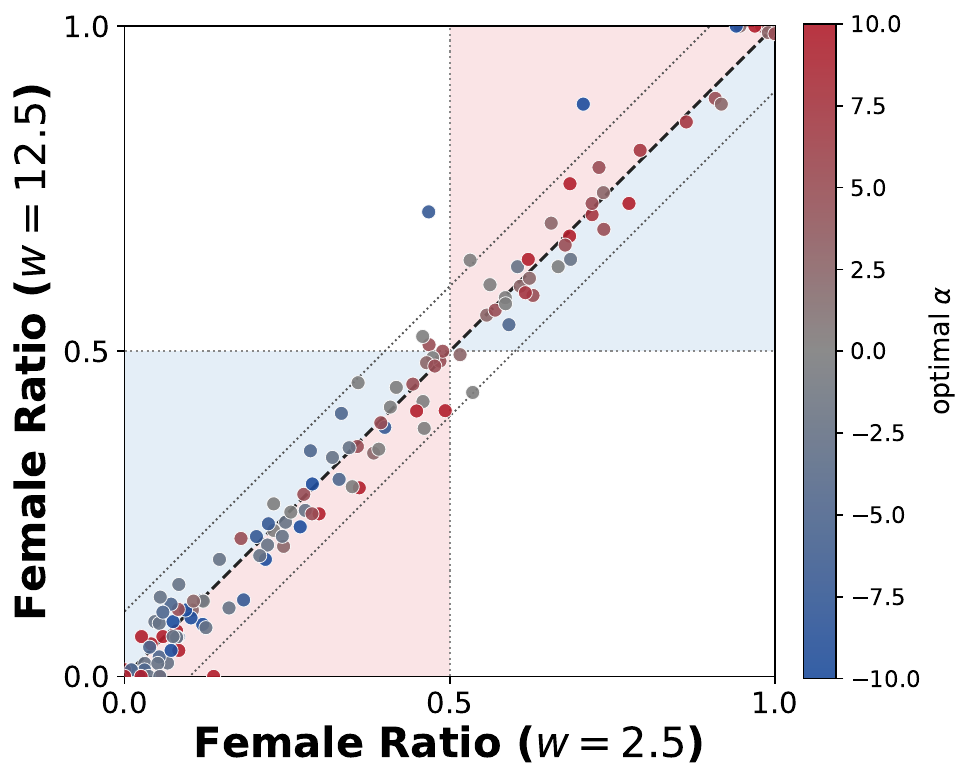}
        \caption{SD1.5 + \method}
        \label{fig:scatter_sd3_ours_opt_c}
    \end{subfigure}
    \caption{\textbf{Per-occupation attribute-ratio shifts across guidance scales.}
    Each dot is an occupation--prompt pair, plotted by female ratio at low ($x$) and high ($y$) guidance scales and colored by optimal $\alpha$.
    Gray ($\alpha{=}0$) denotes CFG; distance from $y{=}x$ reflects bias amplification (red) or mitigation (blue).
    Optimal $\alpha$ stays closer to the diagonal than CFG.}
    \label{fig:scatter_sd3_ours_opt}
\end{figure}
\textbf{Setup.}
We evaluate \method in two settings: \emph{vanilla} (SD1.5~\citep{rombach2022high} and SD3~\citep{esser2024scaling} with standard CFG) and \emph{debiased} (SD1.5 with representative debiasing methods).
In both settings, we study gender--occupation bias on 33 occupations from~\citep{seshadri2024bias}, generating $100$ images per occupation, prompt, and guidance scale.
Gender is predicted using a zero-shot CLIP classifier~\citep{radford2021learning} on images that pass YOLO~\citep{wang2024yolov10} person detection and a confidence threshold (\cref{app:experimental-details}).
We use four neutral templates from~\citep{seshadri2024bias} for \emph{vanilla} and one from~\citep{shenfinetuning} for \emph{debiased}.
We sweep $w \in \{2.5, 5.0, 7.5, 10.0, 12.5\}$ for SD1.5 (default $w{=}7.5$) and $w \in \{1.5, 3.0, 4.5, 6.0, 7.5\}$ for SD3 (default $w{=}7.0$).
For fairness, we define bias as the deviation of the gender ratio from uniformity and report its mean, worst-case, and range across guidance scales $w$, averaged over occupations and prompts.
Image quality is measured by CLIP score~\citep{hessel2021clipscore}, aesthetic score~\citep{schuhmann2022laion5b}, and PickScore~\citep{kirstain2023pick}.

\textbf{Baselines.}
For \emph{vanilla} generation, we compare \method against standard CFG on SD1.5 and SD3, and against PAG~\citep{ahn2024self} (an autoguidance~\citep{karras2024guiding} style baseline) on SD1.5.
We sweep the PAG scale over $\{1,2,3,4\}$ without CFG.
For \emph{debiased} generation, we layer \method on FT~\citep{shenfinetuning} (fine-tuning) and UCE~\citep{gandikota2024unified} (closed-form editing).
Both use a reference guidance scale of $7.5$.



\begin{figure}[t]
  \centering
  \setlength{\tabcolsep}{1pt}
  \newcommand{\qcrop}[1]{\includegraphics[width=0.085\linewidth]{qual_t2i_crops/#1}}
  \newcommand{\rowlabelshort}[1]{\raisebox{0.036\linewidth}{\rotatebox[origin=c]{90}{\footnotesize #1}}}
  \newcommand{\rowlabellong}[1]{\raisebox{0.037\linewidth}{\rotatebox[origin=c]{90}{\footnotesize #1}}}

  \begin{tabular}{@{} c ccccc @{\hspace{6pt}} ccccc @{}}
    & \multicolumn{5}{c}{\small Firefighter (Female)} & \multicolumn{5}{c}{\small Poet (Male)} \\[1pt]
    & {\footnotesize $w{=}1.5$} & {\footnotesize $w{=}3.0$} & {\footnotesize $w{=}4.5$} & {\footnotesize $w{=}6.0$} & {\footnotesize $w{=}7.5$}
    & {\footnotesize $w{=}1.5$} & {\footnotesize $w{=}3.0$} & {\footnotesize $w{=}4.5$} & {\footnotesize $w{=}6.0$} & {\footnotesize $w{=}7.5$} \\[1pt]
    \rowlabelshort{CFG}
      & \qcrop{fire_cfg_w1p5.jpg}
      & \qcrop{fire_cfg_w3p0.jpg}
      & \qcrop{fire_cfg_w4p5.jpg}
      & \qcrop{fire_cfg_w6p0.jpg}
      & \qcrop{fire_cfg_w7p5.jpg}
      & \qcrop{poet_cfg_w1p5.jpg}
      & \qcrop{poet_cfg_w3p0.jpg}
      & \qcrop{poet_cfg_w4p5.jpg}
      & \qcrop{poet_cfg_w6p0.jpg}
      & \qcrop{poet_cfg_w7p5.jpg} \\[1pt]
    \rowlabellong{\method}
      & \qcrop{fire_sf_w1p5.jpg}
      & \qcrop{fire_sf_w3p0.jpg}
      & \qcrop{fire_sf_w4p5.jpg}
      & \qcrop{fire_sf_w6p0.jpg}
      & \qcrop{fire_sf_w7p5.jpg}
      & \qcrop{poet_sf_w1p5.jpg}
      & \qcrop{poet_sf_w3p0.jpg}
      & \qcrop{poet_sf_w4p5.jpg}
      & \qcrop{poet_sf_w6p0.jpg}
      & \qcrop{poet_sf_w7p5.jpg} \\
  \end{tabular}
  \caption{\textbf{Qualitative examples for CFG and \method on SD3.}
  \method retains the annotated minority attribute across guidance scales, whereas CFG drifts to the majority attribute.}
  \label{fig:qual_t2i}
\end{figure}

\textbf{Experimental results on vanilla generation.}
\label{sec:exp_cfg_cond}
\cref{tab:table1_2_results} reports guidance bias and image quality for different guidance methods on vanilla SD1.5 and SD3.
\method substantially reduces guidance bias under CFG: the bias range drops from 9.0\% to 5.2\% on SD1.5 and from 9.6\% to 5.4\% on SD3, while preserving image quality.
\cref{fig:scatter_sd3_ours_opt} confirms this across most cases: standard CFG amplifies bias, whereas \method preserves it.
\method is comparable to CFG in average bias, as it only corrects guidance bias.
On SD1.5, it matches PAG's bias range without sacrificing condition alignment.


\textbf{Experimental results on debiased generation.}
\label{sec:exp_cfg_fair}
\cref{tab:table1_2_results} also reports guidance bias and image quality for the debiased models (FT and UCE) on SD1.5, with and without \method.
Both debiased models exhibit a larger bias range than vanilla SD1.5, suggesting inconsistent debiasing across guidance scales due to guidance bias.
\method drops the bias range to 6.2\% (FT) and 5.4\% (UCE) while preserving image quality.
The average bias depends on the underlying debiasing method, since \method targets guidance-induced spread across $w$, not the model bias.


\textbf{Qualitative results of \method.}
\cref{fig:qual_t2i} shows that \method preserves the depicted minority gender of occupation  across guidance scales, whereas CFG collapses to the majority gender as $w$ grows.
Moreover, \method generates images that share the same visual context while differing only in the sensitive attribute.
This suggests that the previously observed bias amplification arises from guidance rather than from the diffusion model itself.
More examples are in \cref{app:additional-qualitative}.

\subsection{Additional studies}
\label{sec:exp_analysis}
\begin{figure}[t]
    \centering
    \begin{subfigure}{0.30\textwidth}
        \centering
        \includegraphics[width=\linewidth]{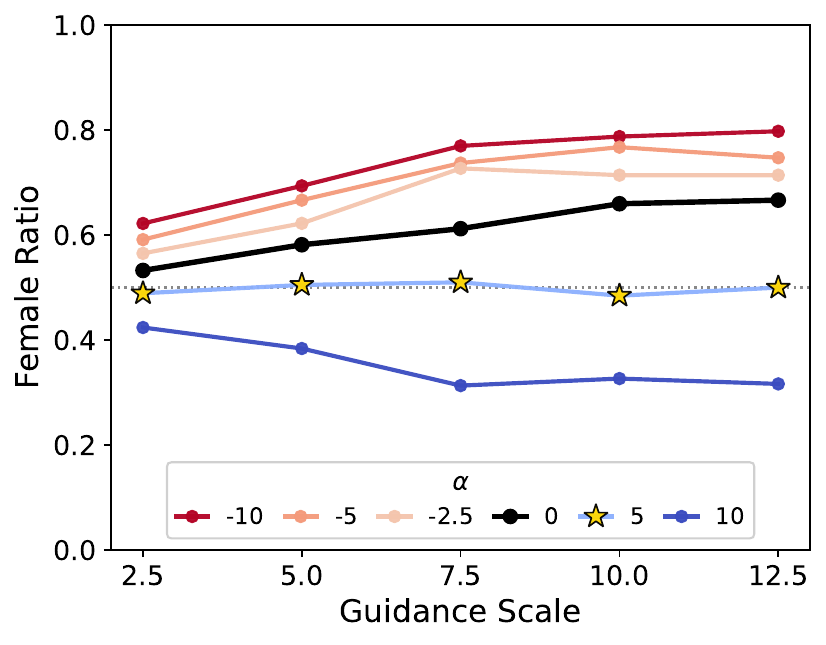}
        \caption{Journalist (SD1.5)}
        \label{fig:alpha_ablation_a}
    \end{subfigure}
    \hspace{4pt}
    \begin{subfigure}{0.30\textwidth}
        \centering
        \includegraphics[width=\linewidth]{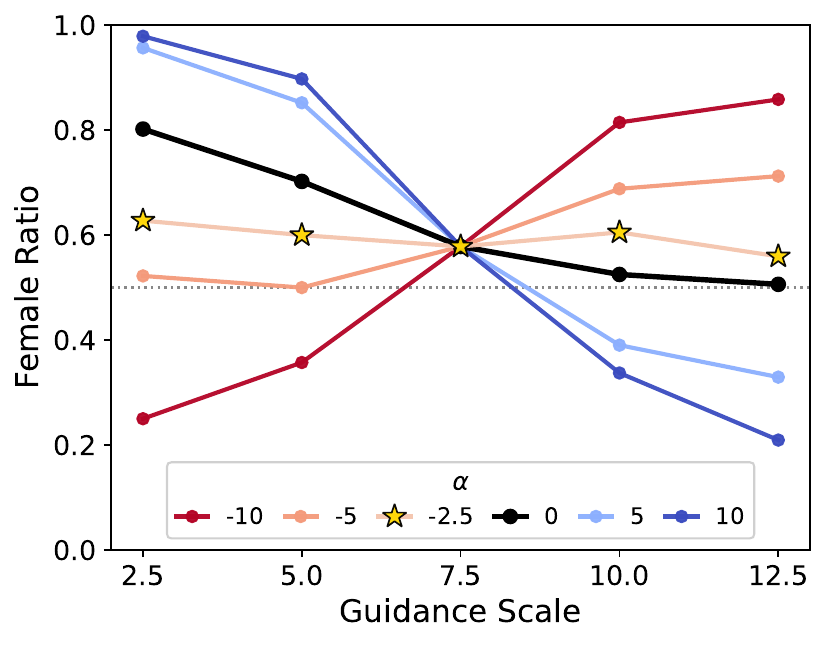}
        \caption{Fashion Designer (FT)}
        \label{fig:alpha_ablation_b}
    \end{subfigure}
    \hspace{4pt}
    \begin{subfigure}{0.30\textwidth}
        \centering
        \resizebox{\linewidth}{!}{%
            {\setlength{\tabcolsep}{0.5pt}%
 \renewcommand{\arraystretch}{0.8}%
 \newcommand{\qcell}[1]{\includegraphics[width=0.24\linewidth]{jn_crops/#1}}%
 \begin{tabular}{@{}cccc@{}}
   {\scriptsize $\alpha=-5$} &
   {\scriptsize $\alpha=0$} &
   {\scriptsize \textcolor{MidnightBlue}{$\alpha=5$}}&
   {\scriptsize $\alpha=10$} \\[-1pt]
   \qcell{smiling_cg_w0.0.jpg} &
   \qcell{smiling_stayfair_w0.0.jpg} &
   \qcell{blond_cg_w0.0.jpg} &
   \qcell{blond_stayfair_w0.0.jpg} \\[-1pt]
   \qcell{smiling_cg_w1.0.jpg} &
   \qcell{smiling_stayfair_w1.0.jpg} &
   \qcell{blond_cg_w1.0.jpg} &
   \qcell{blond_stayfair_w1.0.jpg} \\[-1pt]
   \qcell{smiling_cg_w5.0.jpg} &
   \qcell{smiling_stayfair_w5.0.jpg} &
   \qcell{blond_cg_w5.0.jpg} &
   \qcell{blond_stayfair_w5.0.jpg}
 \end{tabular}}
        }
        \caption{Journalist samples}
        \label{fig:alpha_ablation_c}
    \end{subfigure}
    \caption{\textbf{Ablation study of \method.}
    Female-ratio curves across guidance scales for different $\alpha$ values on (a) vanilla SD1.5 and (b) debiased SD1.5 by FT~\citep{shenfinetuning}.
    Black lines denote CFG, and yellow stars mark StayFair-selected $\alpha$. (c) Samples at $w{=}7.5$, with selected $\alpha{=}5.0$ highlighted in blue.}
    \label{fig:alpha_ablation}
\end{figure}

\textbf{Effect of $\alpha$ on guidance bias.}
Using \method, we vary $\alpha$ in \cref{eq:ours} for vanilla SD1.5 and debiased FT~\citep{shenfinetuning}.
As shown in \cref{fig:alpha_ablation_a,fig:alpha_ablation_b}, the slope of the female ratio curve varies monotonically with $\alpha$; we select the $\alpha$ that flattens the curve and reduces guidance bias.
At the sample level (\cref{fig:alpha_ablation_c}), the shift arises from visually plausible attribute flips in a subset of generated images as $\alpha$ varies.
This indicates that \method controls guidance bias without compromising image quality.

\begin{figure}[!t]
  \centering
  \begin{subfigure}[c]{0.30\linewidth}
    \centering
    \includegraphics[width=\linewidth]{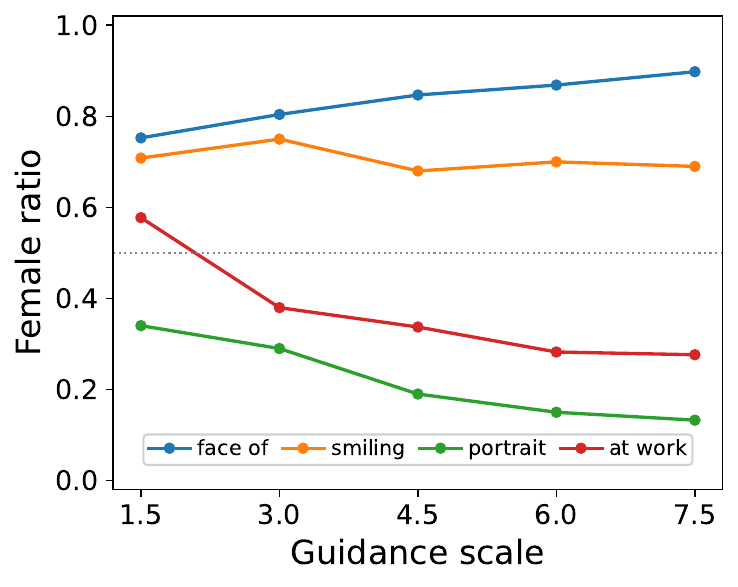}
    \caption{{Doctor} (SD3)}
    \label{fig:ablation_alpha_a}
  \end{subfigure}\hspace{0.01\linewidth}
  \begin{subfigure}[c]{0.66\linewidth}
    \centering
    \includegraphics[width=\linewidth]{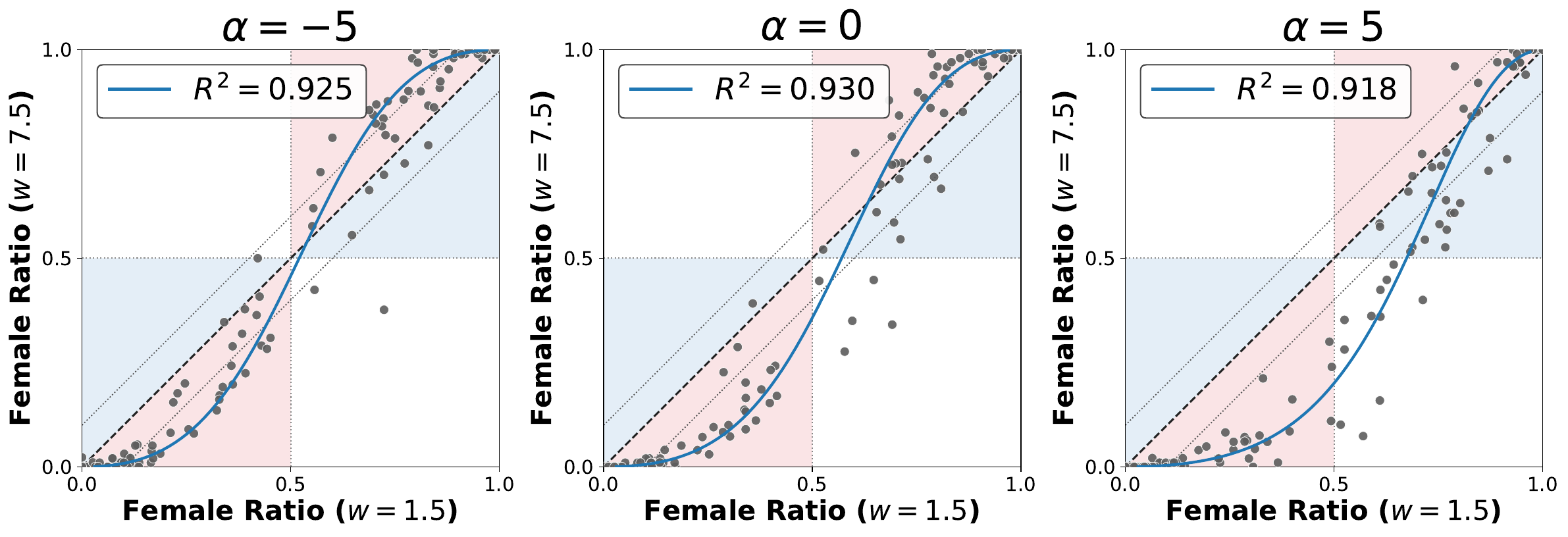}
    \caption{Bias amplification on SD3 across $\alpha\in\{-5,0,+5\}$}
    \label{fig:ablation_alpha_b}
  \end{subfigure}
  \caption{
      \textbf{Analysis of bias amplification phenomenon.}
    (a) For the doctor occupation, neutral prompts exhibit different amplification trends depending on their female ratio at low guidance scale.
    (b) The $\alpha=0$ plot shows CFG's asymmetric amplification across gender groups.
    Neutralizing the female-biased unconditional term in guidance makes the trends more balanced ($\alpha=-5$).
  }
  \label{fig:ablation_alpha}
\end{figure}
\textbf{Explaining bias amplification.}
Our decomposition \cref{eq:bias_decomp} explains why neutral prompts for the same occupation can yield different amplification trends.
At high guidance scale $w{=}7.5$, guidance amplifies small deviations from the unconditional gender ratio toward the conditional ratio (\cref{fig:ablation_alpha_a}).
This suggests that guidance bias should be treated separately from model bias.

Our framework also explains the asymmetry between male- and female-associated occupations.
At $\alpha{=}0$, prior work attributes the stronger amplification of male-associated occupations under standard CFG on SD3 to the conditional model's inherent male bias~\citep{girrbach2025large, wu2024stable}.
Instead, we trace this phenomenon to the relative relationship between the conditional distribution and the female-biased unconditional model $p_\theta(\cdot \mid \emptyset)$.
We performed a logit regression between the gender ratios at the low ($w=1.5$) and high scales ($w=7.5$), and assessed the directional tendency of amplification based on the intersection between the estimated regression line and the $y=x$ line.
Neutralizing this bias with $\alpha{=}-5$ makes amplification qualitatively similar across the two genders (\cref{fig:ablation_alpha_b}).
The high $R^2$ of each regression analysis further supports this systematic amplification pattern.

\begin{wraptable}{r}{0.5\linewidth}
    \vspace{-10pt}
    \centering
    \small
    \setlength{\tabcolsep}{4pt}
    \caption{%
    \textbf{Quantitative results on prompt-based \method.}
    Validation of selecting the main parameter $\alpha$ from the prompt's textual bias on SD1.5.
    }
    \label{tab:alpha_rule}
    \begin{tabular}{l cc>{\columncolor{gray!15}}c}
        \toprule
        & \multicolumn{3}{c}{Bias (\%) $\downarrow$} \\
        \cmidrule(lr){2-4}
        Method & Average & Worst & Range \\
        \midrule
        CFG                        & 21.79 & 26.62 & 10.43 \\
        \method (Prompt-based)     & 24.48 & 27.47 & \phantom{0}7.70 \\
        \method (Oracle)           & 24.08 & 26.31 & \phantom{0}5.16 \\
        \bottomrule
    \end{tabular}
    \vspace{-8pt}
\end{wraptable}

\textbf{Prompt-based approach.}
We validate the prompt-based $\alpha$ estimator on five held-out occupations with four prompt templates (\cref{app:alpha-analysis}).
As shown in \cref{tab:alpha_rule}, the prompt-based $\alpha$ achieves intermediate performance between CFG and the oracle \method.
With only 28 training occupations, the regressor could improve with more oracle records.
Its higher average bias arises because model bias and guidance bias can act in different directions.

\section{Conclusion}
\label{sec:conclusion}
We propose \method, a framework that decomposes total bias to expose its overlooked guidance component across CG and CFG. 
We demonstrate that guidance is a key driver of bias amplification in T2I diffusion models, and that \method mitigates this effect, preserves fairness across guidance scales, and composes naturally with existing models. 

\textbf{Limitations.}
We focus on binary gender stereotypes to study guidance scale-invariant fairness in a clearly scoped setting, while validating the phenomenon across diverse modalities and guidance types.
Extending to multi-attribute and intersectional biases is a promising direction for future work.

\textbf{Broader impact.}
By highlighting guidance bias as an overlooked source of unfairness, our work encourages future research on debiasing guidance beyond the model distribution.
We hope this perspective promotes fairness studies under more realistic generation scenarios, where guidance scales are routinely adjusted by users or downstream pipelines.

\newpage


\bibliographystyle{unsrtnat}
\bibliography{bib/main}

\begin{thebibliography}{67}
\providecommand{\natexlab}[1]{#1}
\providecommand{\url}[1]{\texttt{#1}}
\expandafter\ifx\csname urlstyle\endcsname\relax
  \providecommand{\doi}[1]{doi: #1}\else
  \providecommand{\doi}{doi: \begingroup \urlstyle{rm}\Url}\fi

\bibitem[Rombach et~al.(2022)Rombach, Blattmann, Lorenz, Esser, and Ommer]{rombach2022high}
Robin Rombach, Andreas Blattmann, Dominik Lorenz, Patrick Esser, and Bj{\"o}rn Ommer.
\newblock High-resolution image synthesis with latent diffusion models.
\newblock In \emph{CVPR}, pages 10684--10695, 2022.

\bibitem[Esser et~al.(2024)Esser, Kulal, Blattmann, Entezari, M{\"u}ller, Saini, Levi, Lorenz, Sauer, Boesel, et~al.]{esser2024scaling}
Patrick Esser, Sumith Kulal, Andreas Blattmann, Rahim Entezari, Jonas M{\"u}ller, Harry Saini, Yam Levi, Dominik Lorenz, Axel Sauer, Frederic Boesel, et~al.
\newblock Scaling rectified flow transformers for high-resolution image synthesis.
\newblock In \emph{ICML}, pages 12606--12633, 2024.

\bibitem[Dhariwal and Nichol(2021)]{dhariwal2021diffusion}
Prafulla Dhariwal and Alexander Nichol.
\newblock Diffusion models beat {GANs} on image synthesis.
\newblock In \emph{NeurIPS}, volume~34, pages 8780--8794, 2021.

\bibitem[Ho and Salimans(2021)]{ho2022classifier}
Jonathan Ho and Tim Salimans.
\newblock Classifier-free diffusion guidance.
\newblock In \emph{NeurIPS Workshop on Deep Generative Models and Downstream Applications}, 2021.

\bibitem[Cho et~al.(2023)Cho, Zala, and Bansal]{cho2023dall}
Jaemin Cho, Abhay Zala, and Mohit Bansal.
\newblock {DALL-Eval}: Probing the reasoning skills and social biases of text-to-image generation models.
\newblock In \emph{ICCV}, pages 3043--3054, 2023.

\bibitem[Seshadri et~al.(2024)Seshadri, Singh, and Elazar]{seshadri2024bias}
Preethi Seshadri, Sameer Singh, and Yanai Elazar.
\newblock The bias amplification paradox in text-to-image generation.
\newblock In \emph{NAACL}, pages 6367--6384, 2024.

\bibitem[Barroso~da Silveira and Alves~Lima(2024)]{barroso2024racial}
Julia Barroso~da Silveira and Ellen Alves~Lima.
\newblock Racial biases in {AIs} and {Gemini}'s inability to write narratives about black people.
\newblock \emph{Emerging Media}, 2\penalty0 (2):\penalty0 277--287, 2024.

\bibitem[Friedrich et~al.(2023)Friedrich, Brack, Struppek, Hintersdorf, Schramowski, Luccioni, and Kersting]{friedrich2023fair}
Felix Friedrich, Manuel Brack, Lukas Struppek, Dominik Hintersdorf, Patrick Schramowski, Sasha Luccioni, and Kristian Kersting.
\newblock Fair diffusion: Instructing text-to-image generation models on fairness.
\newblock \emph{arXiv preprint arXiv:2302.10893}, 2023.

\bibitem[Kim et~al.(2025{\natexlab{a}})Kim, Kim, Park, Entezari, and Yoon]{kim2025rethinking}
Eunji Kim, Siwon Kim, Minjun Park, Rahim Entezari, and Sungroh Yoon.
\newblock Rethinking training for de-biasing text-to-image generation: Unlocking the potential of stable diffusion.
\newblock In \emph{CVPR}, pages 13361--13370, 2025{\natexlab{a}}.

\bibitem[Shen et~al.(2024)Shen, Du, Pang, Lin, Wong, and Kankanhalli]{shenfinetuning}
Xudong Shen, Chao Du, Tianyu Pang, Min Lin, Yongkang Wong, and Mohan Kankanhalli.
\newblock Finetuning text-to-image diffusion models for fairness.
\newblock In \emph{ICLR}, 2024.

\bibitem[Gandikota et~al.(2024)Gandikota, Orgad, Belinkov, Materzy{\'n}ska, and Bau]{gandikota2024unified}
Rohit Gandikota, Hadas Orgad, Yonatan Belinkov, Joanna Materzy{\'n}ska, and David Bau.
\newblock Unified concept editing in diffusion models.
\newblock In \emph{WACV}, pages 5111--5120, 2024.

\bibitem[Jiang et~al.(2020)Jiang, Pacchiano, Stepleton, Jiang, and Chiappa]{jiang2020wasserstein}
Ray Jiang, Aldo Pacchiano, Tom Stepleton, Heinrich Jiang, and Silvia Chiappa.
\newblock Wasserstein fair classification, 2020.

\bibitem[Liu et~al.(2015)Liu, Luo, Wang, and Tang]{liu2015deep}
Ziwei Liu, Ping Luo, Xiaogang Wang, and Xiaoou Tang.
\newblock Deep learning face attributes in the wild.
\newblock In \emph{ICCV}, pages 3730--3738, 2015.

\bibitem[Luccioni et~al.(2023)Luccioni, Akiki, Mitchell, and Jernite]{luccioni2023stable}
Sasha Luccioni, Christopher Akiki, Margaret Mitchell, and Yacine Jernite.
\newblock Stable bias: Evaluating societal representations in diffusion models.
\newblock In \emph{NeurIPS}, volume~36, pages 56338--56351, 2023.

\bibitem[Bird et~al.(2023)Bird, Ungless, and Kasirzadeh]{bird2023typology}
Charlotte Bird, Eddie Ungless, and Atoosa Kasirzadeh.
\newblock Typology of risks of generative text-to-image models.
\newblock In \emph{AAAI/ACM AIES}, pages 396--410, 2023.

\bibitem[Girrbach et~al.(2025)Girrbach, Alaniz, Smith, and Akata]{girrbach2025large}
Leander Girrbach, Stephan Alaniz, Genevieve Smith, and Zeynep Akata.
\newblock A large scale analysis of gender biases in text-to-image generative models.
\newblock \emph{arXiv preprint arXiv:2503.23398}, 2025.

\bibitem[Wu et~al.(2024)Wu, Nakashima, and Garcia]{wu2024stable}
Yankun Wu, Yuta Nakashima, and Noa Garcia.
\newblock {Stable Diffusion} exposed: Gender bias from prompt to image.
\newblock In \emph{AAAI/ACM AIES}, 2024.

\bibitem[Bianchi et~al.(2023)Bianchi, Kalluri, Durmus, Ladhak, Cheng, Nozza, Hashimoto, Jurafsky, Zou, and Caliskan]{bianchi2023easily}
Federico Bianchi, Pratyusha Kalluri, Esin Durmus, Faisal Ladhak, Myra Cheng, Debora Nozza, Tatsunori Hashimoto, Dan Jurafsky, James Zou, and Aylin Caliskan.
\newblock Easily accessible text-to-image generation amplifies demographic stereotypes at large scale.
\newblock In \emph{FAccT}, pages 1493--1504, 2023.

\bibitem[Parihar et~al.(2024)Parihar, Bhat, Basu, Mallick, Kundu, and Babu]{parihar2024balancing}
Rishubh Parihar, Abhijnya Bhat, Abhipsa Basu, Saswat Mallick, Jogendra~Nath Kundu, and R.~Venkatesh Babu.
\newblock Balancing act: Distribution-guided debiasing in diffusion models.
\newblock In \emph{CVPR}, pages 6668--6678, 2024.

\bibitem[Teo et~al.(2024)Teo, Abdollahzadeh, Ma, and Cheung]{teo2025fairqueue}
Christopher~T Teo, Milad Abdollahzadeh, Xinda Ma, and Ngai-Man Cheung.
\newblock Fairqueue: Rethinking prompt learning for fair text-to-image generation.
\newblock \emph{NeurIPS}, 37:\penalty0 22878--22926, 2024.

\bibitem[Li et~al.(2024)Li, Shen, Torr, Tresp, and Gu]{li2024self}
Hang Li, Chengzhi Shen, Philip Torr, Volker Tresp, and Jindong Gu.
\newblock Self-discovering interpretable diffusion latent directions for responsible text-to-image generation.
\newblock In \emph{Proceedings of the IEEE/CVF Conference on Computer Vision and Pattern Recognition}, pages 12006--12016, 2024.

\bibitem[Kim et~al.(2024{\natexlab{a}})Kim, Kim, Shin, and Yoon]{kim2024destereotyping}
Eunji Kim, Siwon Kim, Chaehun Shin, and Sungroh Yoon.
\newblock De-stereotyping text-to-image models through prompt tuning.
\newblock \emph{arXiv preprint arXiv:2302.02369}, 2024{\natexlab{a}}.

\bibitem[Kim et~al.(2024{\natexlab{b}})Kim, Na, Park, Jang, Kim, Kang, and Moon]{kim2024unbiased}
Yeongmin Kim, Byeonghu Na, Minsang Park, JoonHo Jang, Dongjun Kim, Wanmo Kang, and Il-Chul Moon.
\newblock Training unbiased diffusion models from biased dataset.
\newblock In \emph{ICLR}, 2024{\natexlab{b}}.

\bibitem[Chuang et~al.(2023)Chuang, Jampani, Li, Torralba, and Jegelka]{chuang2023debiasing}
Ching-Yao Chuang, Varun Jampani, Yuanzhen Li, Antonio Torralba, and Stefanie Jegelka.
\newblock Debiasing vision-language models via biased prompts.
\newblock \emph{arXiv preprint arXiv:2302.00070}, 2023.

\bibitem[Han et~al.(2026)Han, Xu, Bao, Yang, Zi, and Huang]{hanlightfair}
Boyu Han, Qianqian Xu, Shilong Bao, Zhiyong Yang, Kangli Zi, and Qingming Huang.
\newblock {LightFair}: Towards an efficient alternative for fair {T2I} diffusion via debiasing pre-trained text encoders.
\newblock In \emph{NeurIPS}, volume~38, pages 22671--22724, 2026.

\bibitem[Mo et~al.(2019)Mo, Kim, Kim, Cho, and Shin]{mo2019mining}
Sangwoo Mo, Chiheon Kim, Sungwoong Kim, Minsu Cho, and Jinwoo Shin.
\newblock Mining {GOLD} samples for conditional {GAN}s.
\newblock In \emph{NeurIPS}, 2019.

\bibitem[Kim et~al.(2026)Kim, Mo, Rizve, Xu, Liu, Shin, and Hinz]{kim2026rethinking}
Subin Kim, Sangwoo Mo, Mamshad~Nayeem Rizve, Yiran Xu, Difan Liu, Jinwoo Shin, and Tobias Hinz.
\newblock Rethinking prompt design for inference-time scaling in text-to-visual generation.
\newblock In \emph{CVPR}, 2026.

\bibitem[Bansal et~al.(2023)Bansal, Chu, Schwarzschild, Sengupta, Goldblum, Geiping, and Goldstein]{bansal2023universal}
Arpit Bansal, Hong-Min Chu, Avi Schwarzschild, Soumyadip Sengupta, Micah Goldblum, Jonas Geiping, and Tom Goldstein.
\newblock Universal guidance for diffusion models.
\newblock In \emph{CVPRW}, pages 843--852, 2023.

\bibitem[Ye et~al.(2024)Ye, Lin, Han, Xu, Liu, Liang, Ma, Zou, and Ermon]{ye2024tfg}
Haotian Ye, Haowei Lin, Jiaqi Han, Minkai Xu, Sheng Liu, Yitao Liang, Jianzhu Ma, James Zou, and Stefano Ermon.
\newblock Tfg: Unified training-free guidance for diffusion models.
\newblock \emph{NeurIPS}, 37:\penalty0 22370--22417, 2024.

\bibitem[Kynk\"a\"anniemi et~al.(2024)Kynk\"a\"anniemi, Aittala, Karras, Laine, Aila, and Lehtinen]{kynkaanniemi2024applying}
Tuomas Kynk\"a\"anniemi, Miika Aittala, Tero Karras, Samuli Laine, Timo Aila, and Jaakko Lehtinen.
\newblock Applying guidance in a limited interval improves sample and distribution quality in diffusion models.
\newblock In \emph{NeurIPS}, 2024.

\bibitem[Sadat et~al.(2025)Sadat, Hilliges, and Weber]{sadat2024eliminating}
Seyedmorteza Sadat, Otmar Hilliges, and Romann~M. Weber.
\newblock Eliminating oversaturation and artifacts of high guidance scales in diffusion models.
\newblock In \emph{ICLR}, 2025.

\bibitem[Karras et~al.(2024)Karras, Aittala, Kynk{\"a}{\"a}nniemi, Lehtinen, Aila, and Laine]{karras2024guiding}
Tero Karras, Miika Aittala, Tuomas Kynk{\"a}{\"a}nniemi, Jaakko Lehtinen, Timo Aila, and Samuli Laine.
\newblock Guiding a diffusion model with a bad version of itself.
\newblock In \emph{NeurIPS}, volume~37, pages 52996--53021, 2024.

\bibitem[Hong et~al.(2023)Hong, Lee, Jang, and Kim]{hong2023improving}
Susung Hong, Gyuseong Lee, Wooseok Jang, and Seungryong Kim.
\newblock Improving sample quality of diffusion models using self-attention guidance.
\newblock In \emph{ICCV}, pages 7462--7471, 2023.

\bibitem[Ahn et~al.(2024)Ahn, Cho, Min, Jang, Kim, Kim, Park, Jin, and Kim]{ahn2024self}
Donghoon Ahn, Hyoungwon Cho, Jaewon Min, Wooseok Jang, Jungwoo Kim, SeonHwa Kim, Hyun~Hee Park, Kyong~Hwan Jin, and Seungryong Kim.
\newblock Self-rectifying diffusion sampling with perturbed-attention guidance.
\newblock In \emph{ECCV}, pages 1--17, 2024.

\bibitem[Roos et~al.(2026)Roos, Iakovleva, Gjergji, Pastore, and Tartaglione]{roos2026met}
Nathan Roos, Ekaterina Iakovleva, Ani Gjergji, Vito~Paolo Pastore, and Enzo Tartaglione.
\newblock How {I} met your bias: Investigating bias amplification in diffusion models.
\newblock In \emph{WACV}, pages 5374--5383, 2026.

\bibitem[Pessach and Shmueli(2022)]{pessach2022review}
Dana Pessach and Erez Shmueli.
\newblock A review on fairness in machine learning.
\newblock \emph{ACM computing surveys (CSUR)}, 55\penalty0 (3):\penalty0 1--44, 2022.

\bibitem[Kim et~al.(2024{\natexlab{c}})Kim, Mo, Kim, Lee, Lee, and Shin]{kim2024discovering}
Younghyun Kim, Sangwoo Mo, Minkyu Kim, Kyungmin Lee, Jaeho Lee, and Jinwoo Shin.
\newblock Discovering and mitigating visual biases through keyword explanation.
\newblock In \emph{CVPR}, 2024{\natexlab{c}}.

\bibitem[Bae et~al.(2025)Bae, Ok, Mo, and Lee]{bae2025reasoning}
Jiyun Bae, Hyunjong Ok, Sangwoo Mo, and Jaeho Lee.
\newblock Do reasoning vision-language models inversely scale in test-time compute? a distractor-centric empirical analysis.
\newblock \emph{arXiv preprint arXiv:2511.21397}, 2025.

\bibitem[Dwork et~al.(2012)Dwork, Hardt, Pitassi, Reingold, and Zemel]{dwork2012fairness}
Cynthia Dwork, Moritz Hardt, Toniann Pitassi, Omer Reingold, and Richard Zemel.
\newblock Fairness through awareness.
\newblock In \emph{ITCS}, 2012.

\bibitem[Hardt et~al.(2016)Hardt, Price, and Srebro]{hardt2016equality}
Moritz Hardt, Eric Price, and Nathan Srebro.
\newblock Equality of opportunity in supervised learning.
\newblock In \emph{NeurIPS}, 2016.

\bibitem[Zafar et~al.(2017)Zafar, Valera, Rodriguez, and Gummadi]{zafar2017fairness}
Muhammad~Bilal Zafar, Isabel Valera, Manuel~Gomez Rodriguez, and Krishna~P. Gummadi.
\newblock Fairness constraints: Mechanisms for fair classification.
\newblock In \emph{AISTATS}, 2017.

\bibitem[Agarwal et~al.(2018)Agarwal, Beygelzimer, Dud{\'\i}k, Langford, and Wallach]{agarwal2018reductions}
Alekh Agarwal, Alina Beygelzimer, Miroslav Dud{\'\i}k, John Langford, and Hanna Wallach.
\newblock A reductions approach to fair classification.
\newblock In \emph{ICML}, 2018.

\bibitem[Sagawa et~al.(2020)Sagawa, Koh, Hashimoto, and Liang]{sagawa2020distributionally}
Shiori Sagawa, Pang~Wei Koh, Tatsunori~B. Hashimoto, and Percy Liang.
\newblock Distributionally robust neural networks for group shifts: On the importance of regularization for worst-case generalization.
\newblock In \emph{ICLR}, 2020.

\bibitem[Idrissi et~al.(2022)Idrissi, Arjovsky, Pezeshki, and Lopez-Paz]{idrissi2022simple}
Badr~Youbi Idrissi, Martin Arjovsky, Mohammad Pezeshki, and David Lopez-Paz.
\newblock Simple data balancing achieves competitive worst-group-accuracy.
\newblock In \emph{CLeaR}, 2022.

\bibitem[Chzhen et~al.(2020)Chzhen, Denis, Hebiri, Oneto, and Pontil]{chzhen2020fair}
Evgenii Chzhen, Christophe Denis, Mohamed Hebiri, Luca Oneto, and Massimiliano Pontil.
\newblock Fair regression with {Wasserstein} barycenters.
\newblock In \emph{NeurIPS}, volume~33, pages 7321--7331, 2020.

\bibitem[Farokhi(2021)]{farokhi2021optimal}
Farhad Farokhi.
\newblock Optimal pre-processing to achieve fairness and its relationship with total variation barycenter.
\newblock \emph{arXiv preprint arXiv:2101.06811}, 2021.

\bibitem[Kim et~al.(2025{\natexlab{b}})Kim, Kong, Lee, Chae, Park, and Kim]{kim2025fairness}
Kunwoong Kim, Insung Kong, Jongjin Lee, Minwoo Chae, Sangchul Park, and Yongdai Kim.
\newblock Fairness through matching.
\newblock \emph{Transactions on Machine Learning Research}, 2025{\natexlab{b}}.

\bibitem[Ho et~al.(2020)Ho, Jain, and Abbeel]{ho2020denoising}
Jonathan Ho, Ajay Jain, and Pieter Abbeel.
\newblock Denoising diffusion probabilistic models.
\newblock In \emph{NeurIPS}, volume~33, pages 6840--6851, 2020.

\bibitem[Song et~al.(2021)Song, Sohl-Dickstein, Kingma, Kumar, Ermon, and Poole]{songscore}
Yang Song, Jascha Sohl-Dickstein, Diederik~P Kingma, Abhishek Kumar, Stefano Ermon, and Ben Poole.
\newblock Score-based generative modeling through stochastic differential equations.
\newblock In \emph{ICLR}, 2021.

\bibitem[Karras et~al.(2022)Karras, Aittala, Aila, and Laine]{karras2022elucidating}
Tero Karras, Miika Aittala, Timo Aila, and Samuli Laine.
\newblock Elucidating the design space of diffusion-based generative models.
\newblock In \emph{NeurIPS}, volume~35, pages 26565--26577, 2022.

\bibitem[Zheng and Lan(2024)]{zheng2024characteristic}
Candi Zheng and Yuan Lan.
\newblock Characteristic guidance: Non-linear correction for diffusion model at large guidance scale.
\newblock In \emph{ICML}, pages 61386--61412, 2024.

\bibitem[Li and Jiao(2025)]{li2025provable}
Gen Li and Yuchen Jiao.
\newblock Provable efficiency of guidance in diffusion models for general data distribution.
\newblock In \emph{ICML}, pages 35034--35046, 2025.

\bibitem[Azangulov et~al.(2026)Azangulov, Potaptchik, Li, Aamari, Deligiannidis, and Rousseau]{azangulovadaptive}
Iskander Azangulov, Peter Potaptchik, Qinyu Li, Eddie Aamari, George Deligiannidis, and Judith Rousseau.
\newblock Adaptive diffusion guidance via stochastic optimal control.
\newblock In \emph{AISTATS}, 2026.

\bibitem[Couairon et~al.(2022)Couairon, Douze, Cord, and Schwenk]{couairon2022embedding}
Guillaume Couairon, Matthijs Douze, Matthieu Cord, and Holger Schwenk.
\newblock Embedding arithmetic of multimodal queries for image retrieval.
\newblock In \emph{CVPR}, pages 4950--4958, 2022.

\bibitem[Li et~al.(2025)Li, Hu, Zhang, Zheng, Zhang, and Wang]{li2025fair}
Jia Li, Lijie Hu, Jingfeng Zhang, Tianhang Zheng, Hua Zhang, and Di~Wang.
\newblock Fair text-to-image diffusion via fair mapping.
\newblock In \emph{AAAI}, volume~39, pages 26256--26264, 2025.

\bibitem[Ning et~al.(2023)Ning, Sangineto, Porrello, Calderara, and Cucchiara]{ning2023input}
Mang Ning, Enver Sangineto, Angelo Porrello, Simone Calderara, and Rita Cucchiara.
\newblock Input perturbation reduces exposure bias in diffusion models.
\newblock In \emph{ICML}, pages 26245--26265, 2023.

\bibitem[Karkkainen and Joo(2021)]{karkkainenfairface}
Kimmo Karkkainen and Jungseock Joo.
\newblock Fairface: Face attribute dataset for balanced race, gender, and age for bias measurement and mitigation.
\newblock In \emph{WACV}, pages 1548--1558, 2021.

\bibitem[Heusel et~al.(2017)Heusel, Ramsauer, Unterthiner, Nessler, and Hochreiter]{heusel2017gans}
Martin Heusel, Hubert Ramsauer, Thomas Unterthiner, Bernhard Nessler, and Sepp Hochreiter.
\newblock {GAN}s trained by a two time-scale update rule converge to a local nash equilibrium.
\newblock In \emph{NeurIPS}, pages 6626--6637, 2017.

\bibitem[Nam et~al.(2023)Nam, Mo, Lee, and Shin]{nam2023breaking}
Junhyun Nam, Sangwoo Mo, Jaeho Lee, and Jinwoo Shin.
\newblock Breaking the spurious causality of conditional generation via fairness intervention with corrective sampling.
\newblock \emph{Transactions on Machine Learning Research}, 2023.

\bibitem[Stein et~al.(2023)Stein, Cresswell, Hosseinzadeh, Sui, Ross, Villecroze, Liu, Caterini, Taylor, and Loaiza-Ganem]{stein2023exposing}
George Stein, Jesse~C. Cresswell, Rasa Hosseinzadeh, Yi~Sui, Brendan~Leigh Ross, Valentin Villecroze, Zhaoyan Liu, Anthony~L. Caterini, J.~Eric~T. Taylor, and Gabriel Loaiza-Ganem.
\newblock Exposing flaws of generative model evaluation metrics and their unfair treatment of diffusion models.
\newblock In \emph{NeurIPS}, 2023.

\bibitem[Radford et~al.(2021)Radford, Kim, Hallacy, Ramesh, Goh, Agarwal, Sastry, Askell, Mishkin, Clark, et~al.]{radford2021learning}
Alec Radford, Jong~Wook Kim, Chris Hallacy, Aditya Ramesh, Gabriel Goh, Sandhini Agarwal, Girish Sastry, Amanda Askell, Pamela Mishkin, Jack Clark, et~al.
\newblock Learning transferable visual models from natural language supervision.
\newblock In \emph{ICML}, pages 8748--8763, 2021.

\bibitem[Wang et~al.(2024)Wang, Chen, Liu, Chen, Lin, Han, and Ding]{wang2024yolov10}
Ao~Wang, Hui Chen, Lihao Liu, Kai Chen, Zijia Lin, Jungong Han, and Guiguang Ding.
\newblock {Yolov10}: Real-time end-to-end object detection.
\newblock \emph{NeurIPS}, 37:\penalty0 107984--108011, 2024.

\bibitem[Hessel et~al.(2021)Hessel, Holtzman, Forbes, Le~Bras, and Choi]{hessel2021clipscore}
Jack Hessel, Ari Holtzman, Maxwell Forbes, Ronan Le~Bras, and Yejin Choi.
\newblock Clipscore: A reference-free evaluation metric for image captioning.
\newblock In \emph{EMNLP}, pages 7514--7528, 2021.

\bibitem[Schuhmann et~al.(2022)Schuhmann, Beaumont, Vencu, Gordon, Wightman, Cherti, Coombes, Katta, Mullis, Wortsman, Schramowski, Kundurthy, Crowson, Schmidt, Kaczmarczyk, and Jitsev]{schuhmann2022laion5b}
Christoph Schuhmann, Romain Beaumont, Richard Vencu, Cade Gordon, Ross Wightman, Mehdi Cherti, Theo Coombes, Aarush Katta, Clayton Mullis, Mitchell Wortsman, Patrick Schramowski, Srivatsa Kundurthy, Katherine Crowson, Ludwig Schmidt, Robert Kaczmarczyk, and Jenia Jitsev.
\newblock {LAION-5B}: An open large-scale dataset for training next generation image-text models.
\newblock In \emph{NeurIPS}, 2022.

\bibitem[Kirstain et~al.(2023)Kirstain, Polyak, Singer, Matiana, Penna, and Levy]{kirstain2023pick}
Yuval Kirstain, Adam Polyak, Uriel Singer, Shahbuland Matiana, Joe Penna, and Omer Levy.
\newblock {Pick-a-Pic}: An open dataset of user preferences for text-to-image generation.
\newblock In \emph{NeurIPS}, volume~36, pages 36652--36663, 2023.

\bibitem[Choi et~al.(2022)Choi, Lee, Shin, Kim, Kim, and Yoon]{choi2022perception}
Jooyoung Choi, Jungbeom Lee, Chaehun Shin, Sungwon Kim, Hyunwoo Kim, and Sungroh Yoon.
\newblock Perception prioritized training of diffusion models.
\newblock In \emph{CVPR}, pages 11472--11481, 2022.

\bibitem[Bolukbasi et~al.(2016)Bolukbasi, Chang, Zou, Saligrama, and Kalai]{bolukbasi2016man}
Tolga Bolukbasi, Kai-Wei Chang, James~Y Zou, Venkatesh Saligrama, and Adam~T Kalai.
\newblock Man is to computer programmer as woman is to homemaker? debiasing word embeddings.
\newblock In \emph{NeurIPS}, volume~29, 2016.

\end{thebibliography}

\etocdepthtag.toc{mtchapter}
\clearpage
\appendix
\crefalias{section}{appendix}
\crefalias{subsection}{appendix}
\crefalias{subsubsection}{appendix}
\etocdepthtag.toc{mtappendix}
\hypersetup{linkcolor=black}
\etocsettagdepth{mtchapter}{none}
\etocsettagdepth{mtappendix}{subsection}
\tableofcontents
\hypersetup{linkcolor=magenta}
\clearpage
\section{Additional Analysis}
\label{app:additional-analysis}

\subsection{Combining StayFair with attribute-based methods}
\label{app:inference-based-methods}

We further analyze \emph{attribute-based methods}, another major line of work in this area.
They differ in whether attribute information is enforced externally through guidance or internally within the model during inference.
Specifically, we examine Fair Diffusion (FD)~\citep{friedrich2023fair} as an external guidance, and Weak Guidance (WG)~\citep{kim2025rethinking} as an internal guidance.
\begin{figure}[!htbp]
  \centering
  \begin{subfigure}[t]{0.26\linewidth}
    \centering
    \includegraphics[width=\linewidth]{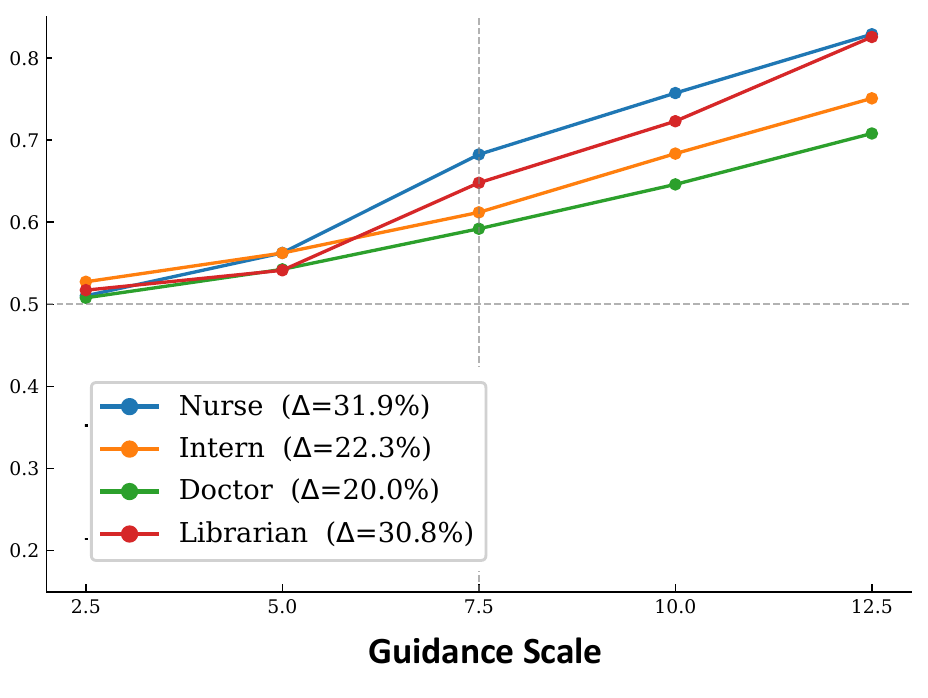}
    \caption{Fair Diffusion \citep{friedrich2023fair}}
    \label{fig:app_inference_fd}
  \end{subfigure}
  \hspace{0.06\linewidth}
  \begin{subfigure}[t]{0.26\linewidth}
    \centering
    \includegraphics[width=\linewidth]{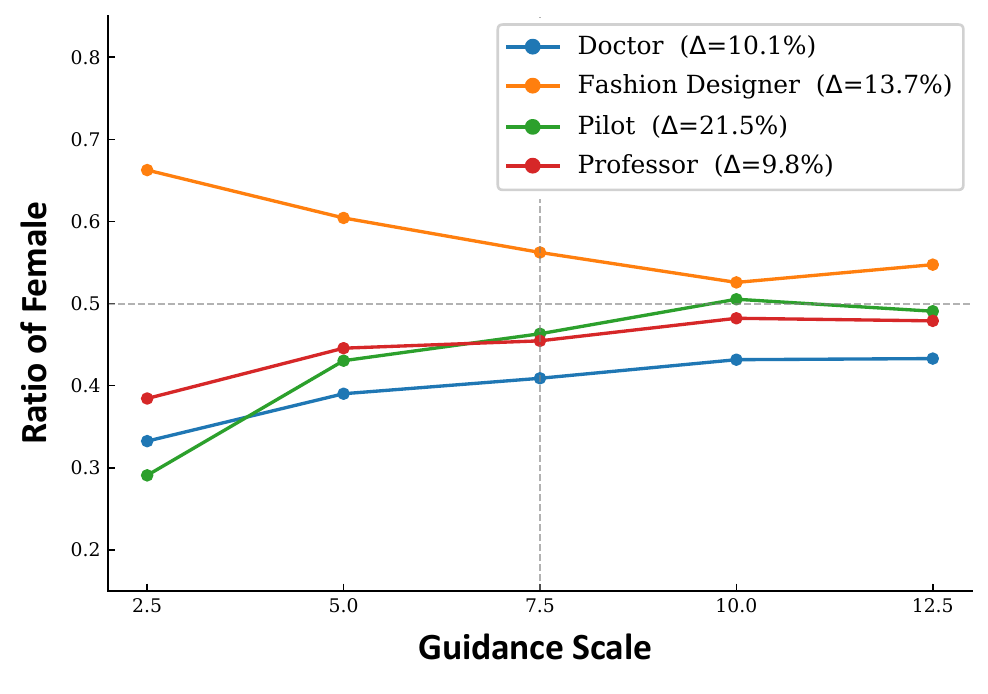}
    \caption{Weak Guidance \citep{kim2025rethinking}}
    \label{fig:app_inference_wg}
  \end{subfigure}
  \caption{\textbf{Bias of FD and WG across the guidance scale.}
  Female ratio curves across guidance scales and their deviation $\Delta$ from the target ratio for (a) FD and (b) WG.}
  \label{fig:app_inference_curves}
\end{figure}

\textbf{Attribute-based methods exhibit weak attribute alignment.}
\cref{fig:app_inference_curves} shows that FD fails to generate fair samples at high guidance scales, while WG fails at low guidance scales.
FD degrades at high guidance because the impact of attribute guidance diminishes as the guidance scale increases.
WG weakens at low guidance since attribute alignment improves in proportion to the guidance scale.
This indicates that these methods fail to maintain attribute alignment.

\begin{figure}[!htbp]
  \centering
  \begin{subfigure}{0.24\linewidth}
    \centering
    \includegraphics[width=\linewidth]{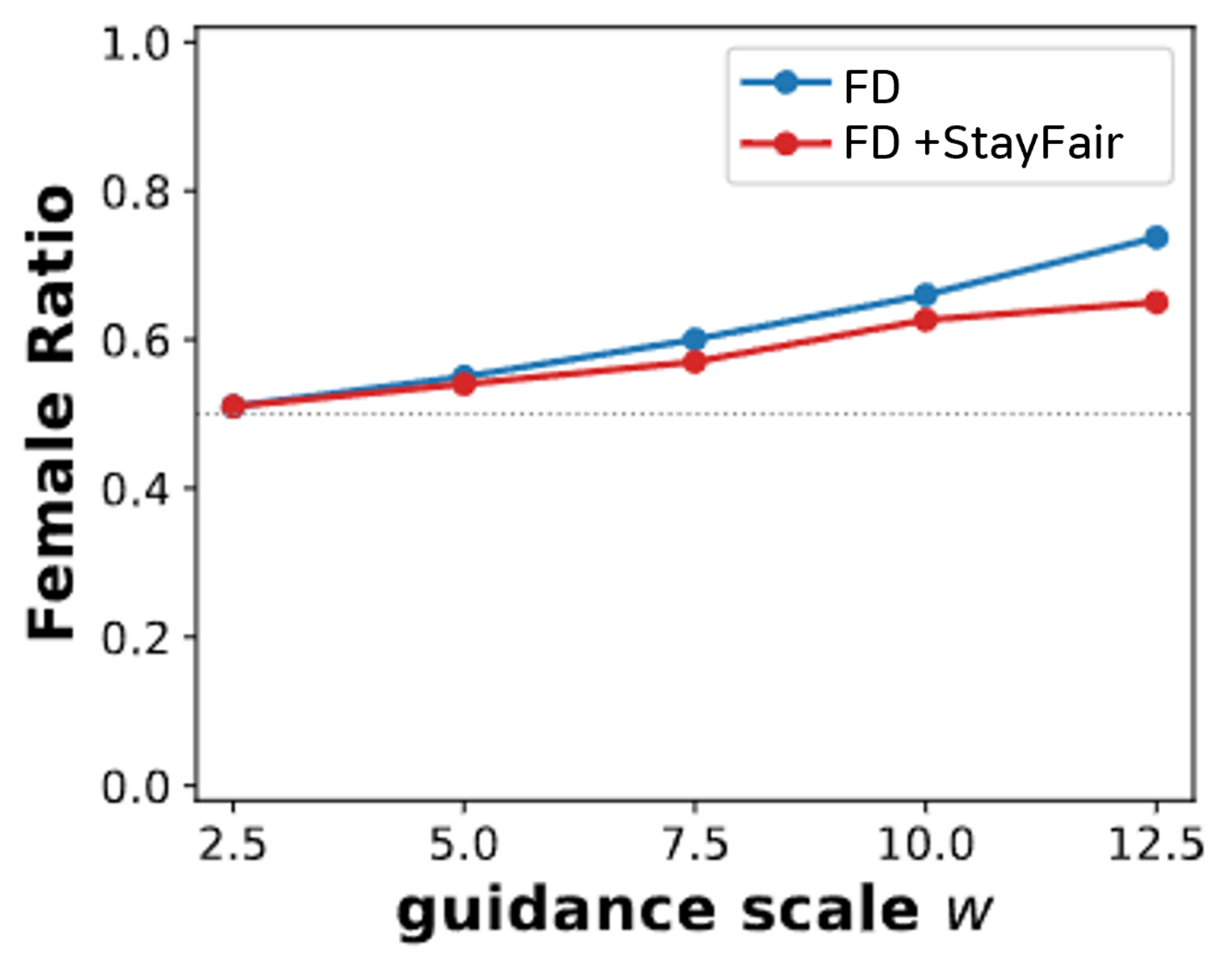}
    \caption{FD + doctor}
  \end{subfigure}
  \hfill
  \begin{subfigure}{0.24\linewidth}
    \centering
    \includegraphics[width=\linewidth]{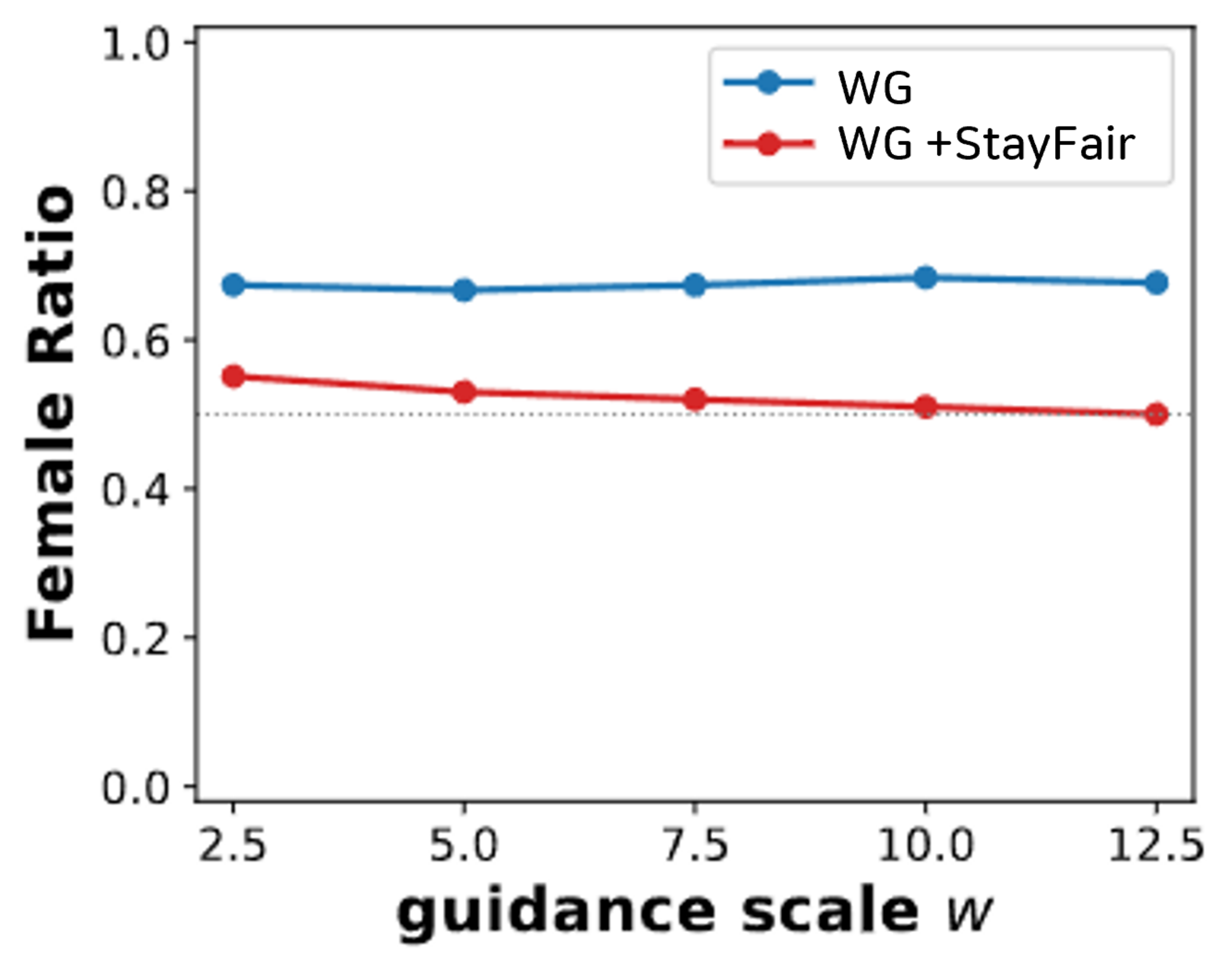}
    \caption{WG + doctor}
  \end{subfigure}
  \hfill
  \begin{subfigure}{0.24\linewidth}
    \centering
    \includegraphics[width=\linewidth]{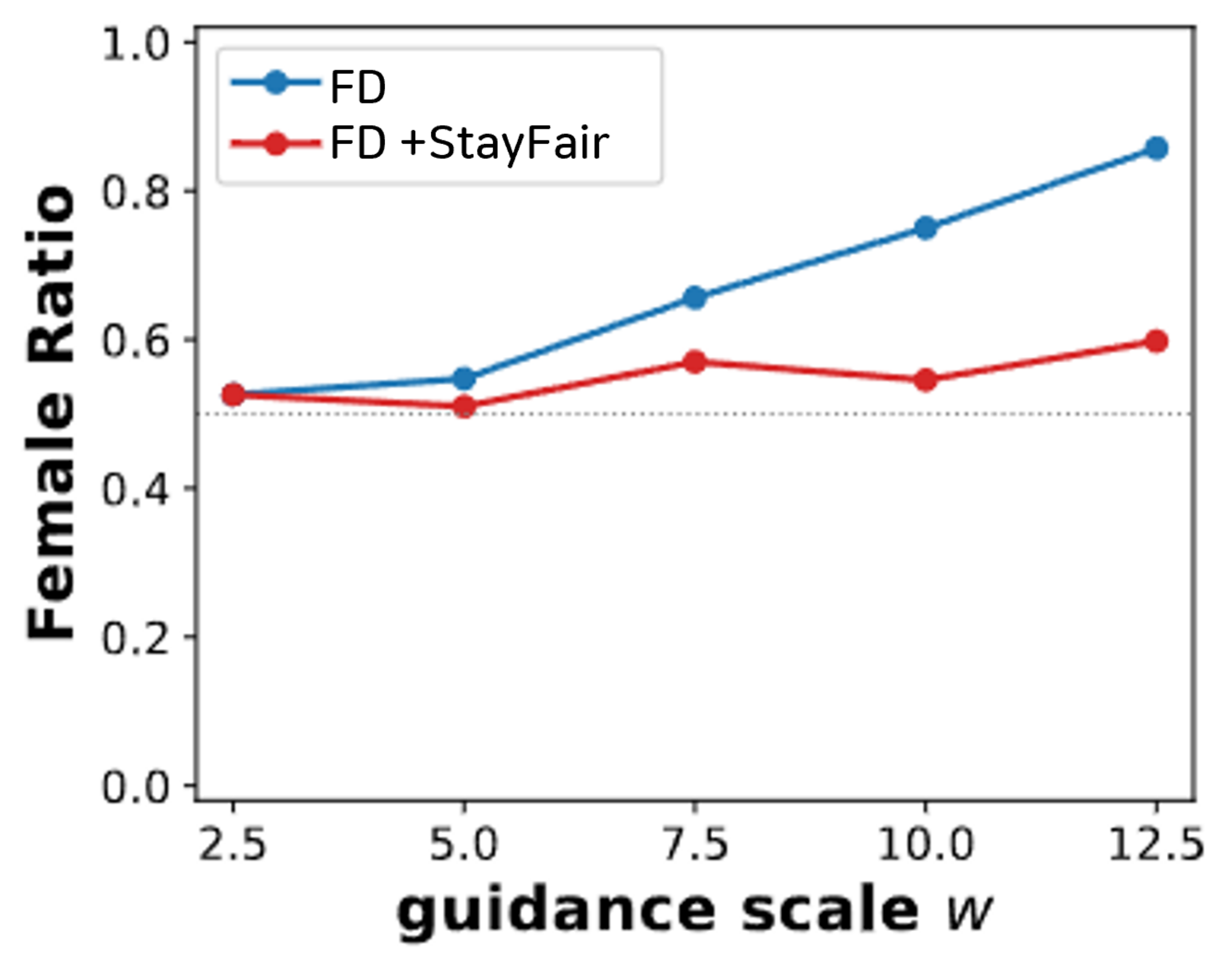}
    \caption{FD + librarian}
  \end{subfigure}
  \hfill
  \begin{subfigure}{0.24\linewidth}
    \centering
    \includegraphics[width=\linewidth]{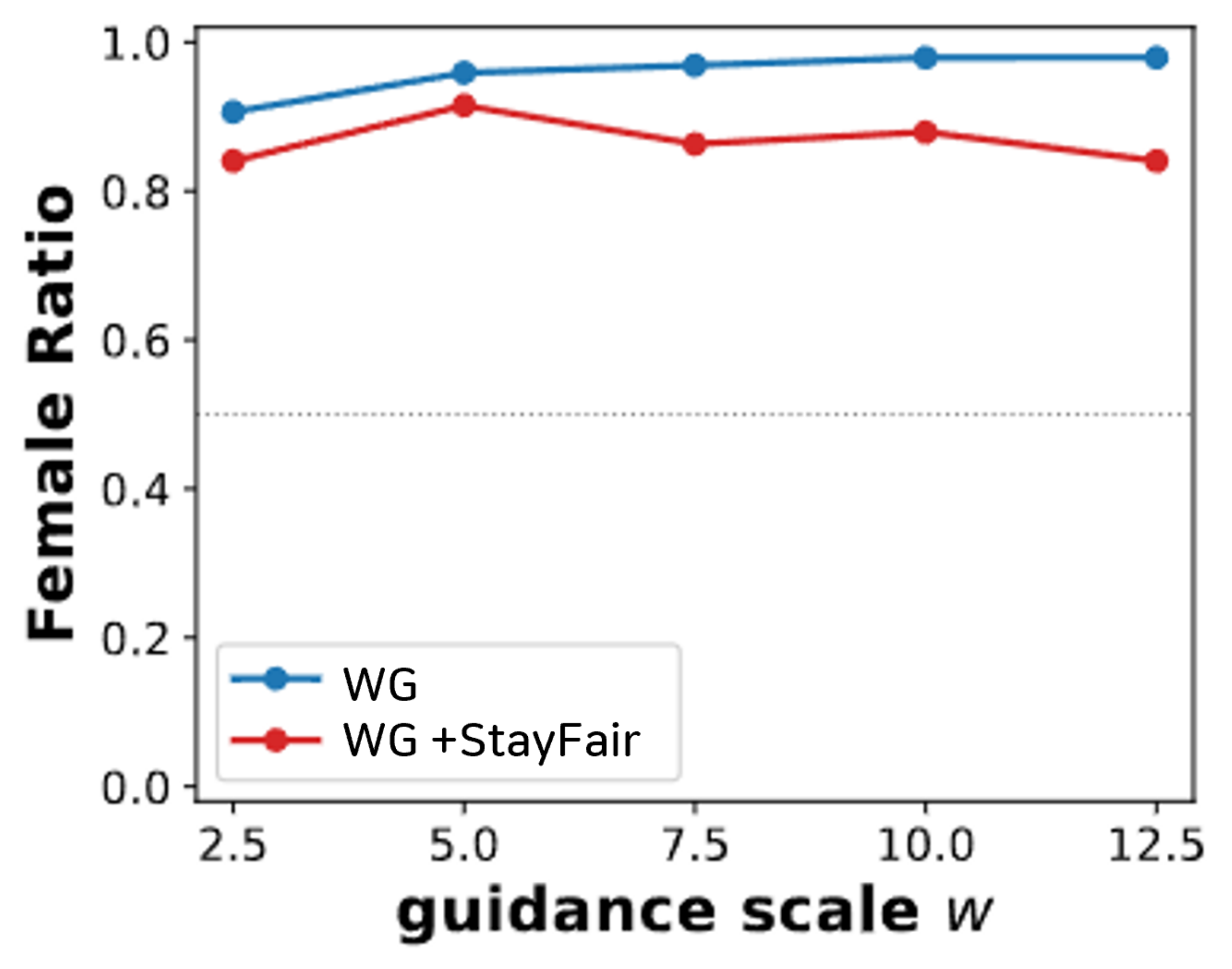}
    \caption{WG + librarian}
  \end{subfigure}
  \caption{\textbf{Bias of FD+\method and WG+\method across the guidance scale.}
  Female ratio curves across guidance scales on {doctor} (a, b) and {librarian} (c, d).}
  \label{fig:stayfair-inference}
\end{figure}

\textbf{Mitigation with \method.}
To apply \method in this setting, we select $\emptyset_y$ to improve alignment with the assigned attribute.
Accordingly, we adjust the null prompt $\emptyset_y$ by shifting $\alpha$ in \cref{eq:ours} toward the direction opposite to the assigned attribute with fixed scale $10$.

\textbf{Experimental result.}
\cref{tab:app_fairgen_fd_wg_full} shows that \method reduces the average bias across the full guidance-scale range for both FD and WG, while maintaining comparable image quality metrics. 
Thus, guidance bias remains important for guidance scale-invariant fairness even in attribute-based methods, and \method effectively mitigates it.
\begin{table}[!htbp]
    \centering
    \caption{\textbf{Quantitative results for FD and WG on fair T2I generation over SD1.5 guidance scales.}
    \cellcolor{gray!15}Best in bold within each method block; target metric shaded.}
    \label{tab:app_fairgen_fd_wg_full}
    \small
    \begin{tabular}{ll cc>{\columncolor{gray!15}}c ccc}
        \toprule
        & & \multicolumn{3}{c}{Bias (\%) $\downarrow$} & \multicolumn{3}{c}{Quality $\uparrow$} \\
        \cmidrule(lr){3-5} \cmidrule(lr){6-8}
        Model & Guidance & Avg. & Worst & Range & CLIP & Aesth. & Pick \\
        \midrule
        \multirow{2}{*}{FD~\citep{friedrich2023fair}}
            & CFG              & 6.8          & 13.5          & 12.7          & \textbf{27.4} & 6.88          & \textbf{19.917} \\
            & \method (ours)   & \textbf{4.6} & \textbf{8.3}  & \textbf{7.7}  & 27.2         & \textbf{6.90} & 19.843          \\
        \cmidrule(lr){1-8}
        \multirow{2}{*}{WG~\citep{kim2025rethinking}}
            & CFG              & 9.7          & 13.9          & 7.7           & \textbf{27.5} & 6.68          & \textbf{19.996} \\
            & \method (ours)   & \textbf{4.7} & \textbf{8.4}  & \textbf{5.6}  & 27.3          & \textbf{6.70} & 19.952          \\
        \bottomrule
    \end{tabular}
\end{table}

\subsection{Analysis of guidance-induced bias amplification}
\label{app:bias-amplification}


\textbf{Prompt template ablation.}
We further provide additional examples showing that the trend of bias amplification varies across neutral prompts for the same fixed occupation.
\begin{figure}[!htbp]
  \centering
  \includegraphics[width=0.9\linewidth]{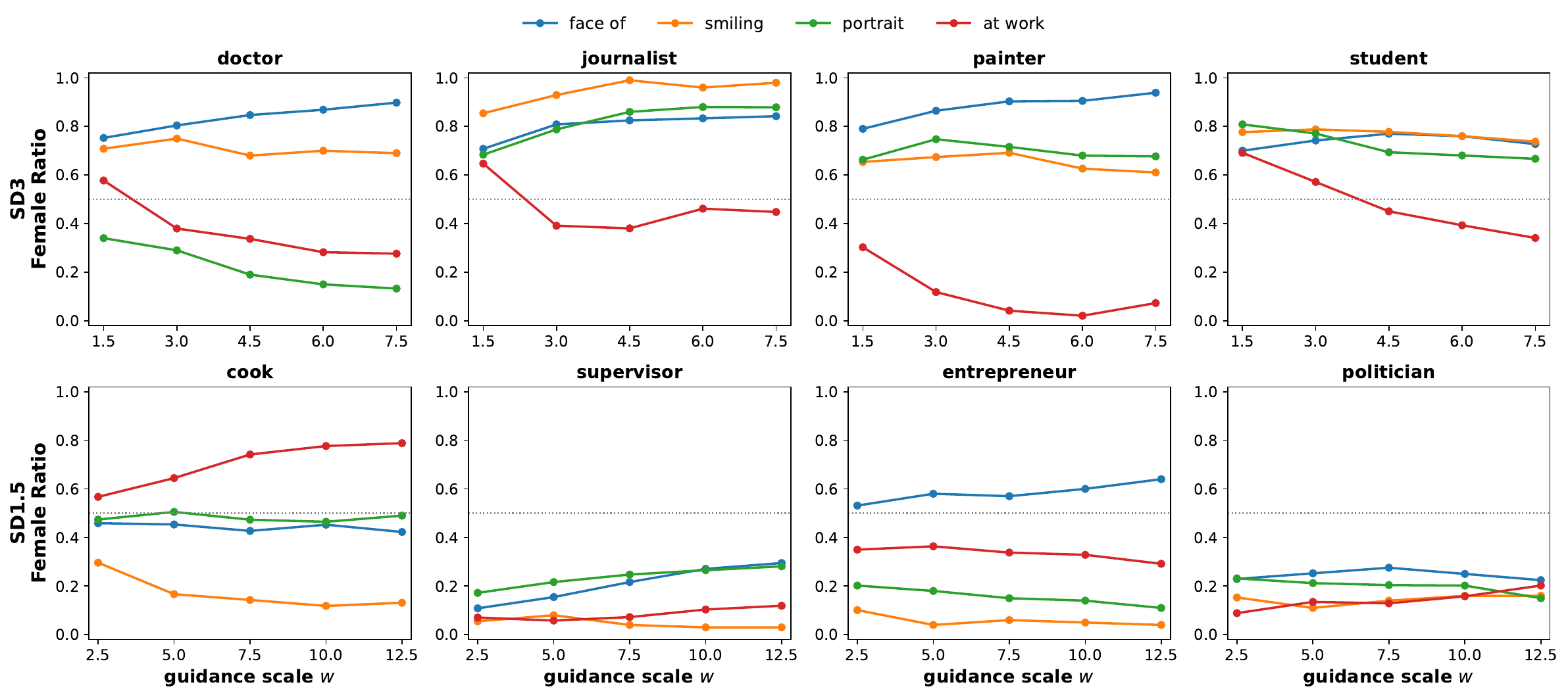}
  \caption{\textbf{Prompt-template ablation across SD3 (top) and SD1.5 (bottom).}
  Female ratio curves across guidance scales for four occupations per model where neutral prompt templates disagree on the direction of bias amplification.
  Colors denote prompt templates in \cref{tab:prompts}.}
  \label{fig:app_prompt_ablation_2x4}
\end{figure}

\textbf{Asymmetric bias amplification.}
We additionally provide SD1.5 results corresponding to the SD3 analysis presented in the main text.
For SD1.5, we compare bias amplification between a low guidance scale of $w=2.5$ and a high guidance scale of $w=12.5$.
Interestingly, the regime in which bias amplification becomes noisy differs across models: for SD3, it appears in the female-biased 60--70\% range, whereas for SD1.5, it emerges in the male-biased range.

For SD1.5, the female ratio sometimes increases even in male-biased regions. This may be partly due to our reliance on the \(w=2.5\) setting, where bias amplification has already occurred because of observational limitations, and partly due to behavioral shifts induced by continued fine-tuning across multiple datasets. We leave a detailed investigation of this phenomenon to future work.

\begin{figure}[!htbp]
  \centering
  \begin{minipage}[b]{0.42\linewidth}
    \centering
    \includegraphics[height=4.6cm]{exp1_cfg_sd3_v2.pdf}\\[-2pt]
    {\footnotesize (a) SD3 ($w{=}1.5$ vs.\ $w{=}7.5$)}
  \end{minipage}\hspace{0.04\linewidth}
  \begin{minipage}[b]{0.42\linewidth}
    \centering
    \includegraphics[height=4.6cm]{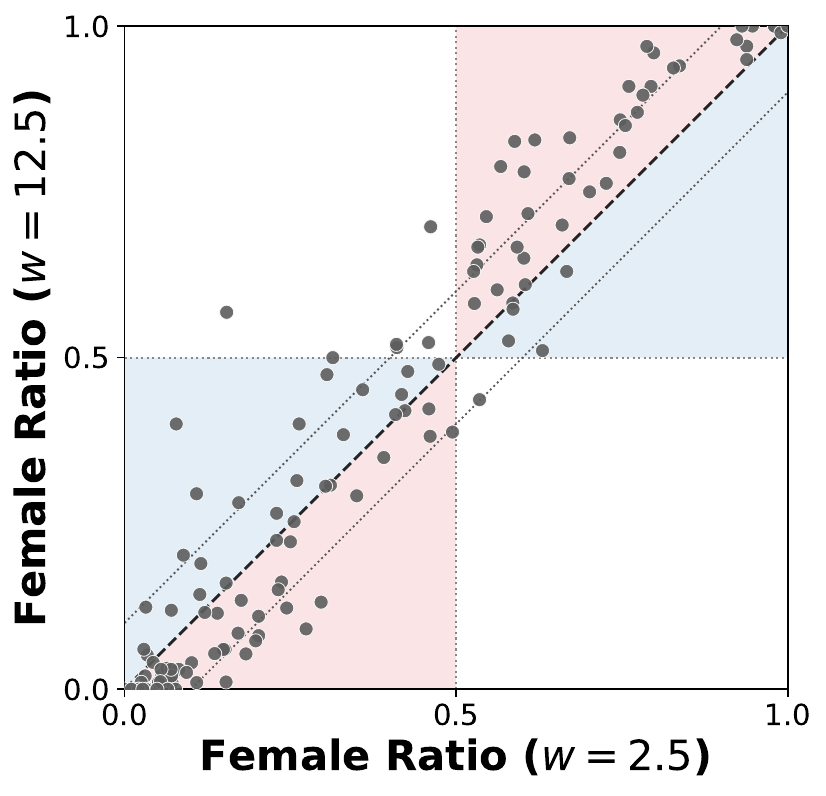}\\[-2pt]
    {\footnotesize (b) SD1.5 ($w{=}2.5$ vs.\ $w{=}12.5$)}
  \end{minipage}
  \caption{\textbf{Asymmetric bias amplification under standard CFG on SD3 and SD1.5.}
  Each dot is an occupation--prompt pair, plotted by its female ratio at low ($x$) and high ($y$) guidance scales.
  }
  \label{fig:app_asym_bias_amp_scatter}
\end{figure}

\clearpage
\section{Theoretical Analysis}
\label{app:proofs}
We prove that Strong Demographic Parity (SDP) preserves group composition for the target distribution and, under an idealized Gaussian VE-SDE model, for the actual distribution.

\subsection{Target distribution}
\label{app:proofs_target}

\textbf{Setup.}
Recall the VE forward corruption $X_t=X_0+\sigma_t\epsilon$ with $\epsilon\sim\mathcal N(0,I)$, and let $A=a(X_0)\in\mathcal A$ be the group attribute. 
We view the guidance function as a positive potential
$f(\cdot,t):\mathcal X\times[0,T]\to\mathbb R_+$.
For each guidance scale $w$ considered, assume the normalizing constant $Z_t^w(y):=\mathbb E_{p_t^0(\cdot\mid y)}[f(X_t,t)^w]$
is finite. Then $f$ defines the target distribution
\[
    p_t^w(x\mid y)
    =
    \frac{p_t^0(x\mid y)f(x,t)^w}{Z_t^w(y)}.
\]
The same reweighting gives $p_t^w(x\mid a,y)\propto p_t^0(x\mid a,y)f(x,t)^w$. Hence Bayes' rule gives
\begin{equation}
    \frac{p_t^w(a\mid y)}{p_t^w(a'\mid y)}
    =\frac{p_t^0(a\mid y)}{p_t^0(a'\mid y)}
    \frac{\mathbb E[f(X_t,t)^w\mid A=a,y]}
         {\mathbb E[f(X_t,t)^w\mid A=a',y]}.
    \label{eq:target-ratio}
\end{equation}

\begin{lemma}[SDP implies group-wise moment equality]
\label{lem:sdp-moment}
Suppose $f$ satisfies SDP, i.e., for all $t\in[0,T]$ and all $a,a'\in\mathcal A$,
\begin{equation}
    \mathcal L(f(X_t,t)\mid A=a,y)
    =\mathcal L(f(X_t,t)\mid A=a',y).
    \label{eq:sdp-restate}
\end{equation}
Then for every $w\ge0$, 
\begin{equation}
    \mathbb E[f(X_t,t)^w\mid A=a,y]
    =\mathbb E[f(X_t,t)^w\mid A=a',y].
\end{equation}
\end{lemma}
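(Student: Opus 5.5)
The lemma is a direct consequence of the fact that an expectation of a measurable function of a random variable depends only on that variable's law. Fix $t\in[0,T]$, $y$, and $a,a'\in\mathcal A$, and write $\mu_a:=\mathcal L(f(X_t,t)\mid A=a,y)$, a probability measure on $\mathbb R_+$ (recall $f$ takes values in $\mathbb R_+$, and we view it as a positive potential). For $w\ge 0$, define $g_w:\mathbb R_+\to[0,\infty)$ by $g_w(u)=u^w$, with the convention $0^0=1$ for the case $w=0$. The map $g_w$ is Borel measurable and nonnegative.

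The key step is the change-of-variables (pushforward) identity
\[
\mathbb E[f(X_t,t)^w\mid A=a,y]=\int_{\mathbb R_+} g_w(u)\,\mu_a(du).
\]
This holds in $[0,\infty]$ for any nonnegative measurable $g_w$, by the standard argument: verify it for indicators, then simple functions, then pass to the limit by monotone convergence. By SDP, \cref{eq:sdp-restate} gives $\mu_a=\mu_{a'}$, so the two integrals coincide. This yields the claimed equality as an identity in $[0,\infty]$.

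The only point needing care is finiteness, and here it is mild. Because $g_w\ge 0$, no integrability is required for the identity itself. To use the lemma inside \cref{eq:target-ratio}, however, the conditional moments must be finite and positive. Finiteness follows from the standing assumption $Z_t^w(y)<\infty$, since $Z_t^w(y)=\sum_a p_t^0(a\mid y)\,\mathbb E[f^w\mid A=a,y]$ and each group with $p_t^0(a\mid y)>0$ therefore has a finite conditional moment. Positivity follows from $f>0$. I would add a remark that conditioning on $A=a$ is only meaningful for groups with positive probability under $p_t^0(\cdot\mid y)$, and restrict to those. Finally, I would note that the argument uses only equality of laws, so it in fact gives $\mathbb E[h(f)\mid A=a,y]=\mathbb E[h(f)\mid A=a',y]$ for every nonnegative measurable $h$.
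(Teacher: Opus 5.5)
Your proof is correct and follows the paper's argument: both push the equal conditional laws forward through the nonnegative measurable map $z\mapsto z^w$ to get equal $w$-th moments, with finiteness coming from the normalizability assumption $Z_t^w(y)<\infty$. Your remark that the equality already holds in $[0,\infty]$ without any finiteness assumption slightly sharpens the paper's ``whenever these moments are finite'' wording; finiteness is needed only later, when the moments enter the ratio in \cref{eq:target-ratio}.
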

\begin{proof}
By SDP, the random variables $f(X_t,t)\mid A=a,y$ and
$f(X_t,t)\mid A=a',y$ have the same distribution.
Applying the nonnegative measurable map $z\mapsto z^w$ gives equality of their $w$-th moments whenever these moments are finite, which is ensured by the normalizability assumption.
\end{proof}

\setcounter{theorem}{0}
\begin{theorem}[SDP preserves the target group ratio]
\label{prop:sdp-target}

If $f$ satisfies SDP, then for every $w\ge0$, $t\in[0,T]$, and $a,a'\in\mathcal A$,
\begin{equation}
    \frac{p_t^w(a\mid y)}{p_t^w(a'\mid y)}
    =\frac{p_t^0(a\mid y)}{p_t^0(a'\mid y)}.
    \label{eq:prop-ratio}
\end{equation}
Consequently, if $p_t^0(a\mid y)=p_t^0(a'\mid y)$, then $p_t^w$ remains fair for all $w$.
\end{theorem}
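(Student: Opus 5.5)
The proof combines the Bayes-rule identity \cref{eq:target-ratio} with \cref{lem:sdp-moment}. Most of the work is in setting up the first of these carefully. I would fix $w\ge0$, $t\in[0,T]$ and $a,a'\in\mathcal A$ with $p_t^0(a'\mid y)>0$, so that the ratios are well defined.

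\textbf{Step 1: the ratio identity.} I would first make \cref{eq:target-ratio} precise. Write
\[
p_t^0(x\mid y)=\sum_{b\in\mathcal A}p_t^0(b\mid y)\,p_t^0(x\mid b,y),
\]
where $p_t^0(x\mid b,y)$ is the law of $X_t$ given $A=b$ and $y$. Because $A$ is a function of $X_0$ and not of $X_t$, the group weight under $p_t^w$ is defined by tilting the joint law of $(X_t,A)$ by $f(X_t,t)^w$. This gives
\[
p_t^w(b\mid y)=\frac{p_t^0(b\mid y)\,\mathbb E[f(X_t,t)^w\mid A=b,y]}{Z_t^w(y)}.
\]
Taking the quotient for $b=a$ and $b=a'$ cancels $Z_t^w(y)$ and yields \cref{eq:target-ratio}. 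Finiteness of the conditional moments follows from $Z_t^w(y)<\infty$: since
\[
Z_t^w(y)=\sum_b p_t^0(b\mid y)\,\mathbb E[f^w\mid A=b,y],
\]
each term with positive weight is finite.

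\textbf{Step 2: cancel the moment factor.} By the SDP hypothesis, \cref{lem:sdp-moment} gives $\mathbb E[f(X_t,t)^w\mid A=a,y]=\mathbb E[f(X_t,t)^w\mid A=a',y]$. The moment factor in \cref{eq:target-ratio} therefore equals $1$, provided its denominator is strictly positive. Positivity holds because $f>0$, so $f^w>0$ almost surely. The edge case $w=0$ is trivial, since then $f^0\equiv1$. This proves \cref{eq:prop-ratio}.

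\textbf{Step 3: the consequence.} If $p_t^0(a\mid y)=p_t^0(a'\mid y)$, then by \cref{eq:prop-ratio} the ratio $p_t^w(a\mid y)/p_t^w(a'\mid y)$ equals $1$ for every $w$. If this holds for all pairs, then normalization ($\sum_b p_t^w(b\mid y)=1$) forces $p_t^w(\cdot\mid y)$ to be uniform over the groups, i.e.\ fair for every $w$.

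\textbf{Main obstacle.} The only delicate point is Step 1, namely interpreting the tilt by $f(X_t,t)^w$ as acting on the joint law of $(X_t,A)$, so that $p_t^w(a\mid y)$ is meaningful. Once that is set up, the result reduces to a cancellation. I would also state explicitly the conventions that groups with zero mass are excluded and that the conditional moments are finite, as implied by the normalizability assumption.
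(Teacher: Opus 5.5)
Your proposal is correct and follows the paper's own argument: the Bayes-rule identity writes the tilted group ratio as the base ratio times the ratio of conditional $w$-th moments of $f(X_t,t)$, and the SDP moment-equality lemma then sets that factor to one. Your extra steps (deriving the identity from tilting the joint law of $(X_t,A)$, getting finiteness from $Z_t^w(y)<\infty$ and positivity from $f>0$, and obtaining uniformity from normalization) make explicit what the paper leaves implicit.
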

\begin{proof}
By \cref{lem:sdp-moment}, the second factor in \cref{eq:target-ratio} equals one. This proves \cref{eq:prop-ratio}.
\end{proof}

\textbf{Discussion.}
\Cref{prop:sdp-target} controls the target distribution $p_t^w$. To transfer this statement to the actual distribution $Q^w$, we need to control the reverse dynamics. The next result does this in a Gaussian VE-SDE setting.

\subsection{Actual distribution}
\label{app:proofs_act}

\textbf{Interpretation and scope.}
The theorem formalizes when guidance-scale changes do not alter the final group
marginal of the sampler. The group attribute \(A\) is an endpoint attribute of
the clean sample \(X_0\), and the guidance statistic \(f_t\) is interpreted as a
clean-endpoint semantic score transported to noise level \(t\), in the same
spirit as classifier guidance. Thus \(f_t\) does not define a new group label
along the diffusion path; it represents how a semantic preference on clean
images is seen from the noisy state \(X_t\).

We state the result for the VE-SDE because its reverse dynamics are especially
simple and make the endpoint-reweighting structure explicit. The mechanism is
not specific to the VE choice: the same argument can be extended to sufficiently
regular nondegenerate Markov diffusions, provided the reverse-time SDE is well
defined and the backward weight
\[
    h_t^{(w)}(x)=\mathbb E[f_0(X_0)^w\mid X_t=x]
\]
is compatible with the guidance drift. The Gaussian/log-affine assumptions are
therefore not meant to model all neural guidance functions exactly; they provide
a clean sufficient setting in which the sampler-level group invariance can be
proved exactly.

\textbf{VE reverse dynamics.}
Let the forward VE-SDE be
\begin{equation}
    dX_t=g(t)dW_t,
    \qquad
    X_t=X_0+\sigma_t\epsilon,
    \qquad
    g(t)^2=\frac{d}{dt}\sigma_t^2.
\end{equation}
Write $s_t^0(x)=\nabla_x\log p_t^0(x)$ and set $\tau(s)=T-s$. The reference reverse VE-SDE is
\begin{equation}
    dZ_s=g(\tau)^2s_\tau^0(Z_s)ds+g(\tau)dB_s,
    \qquad Z_0\sim p_T^0,\quad Z_T\sim p_0^0.
\end{equation}
The guided reverse VE-SDE is
\begin{equation}
    dZ_s^w
    =g(\tau)^2\bigl(s_\tau^0(Z_s^w)+w\nabla\log f_\tau(Z_s^w)\bigr)ds
    +g(\tau)dB_s^w.
    \label{eq:ve-guided-reverse}
\end{equation}
Let $\mathbb P^w$ be the path law of \cref{eq:ve-guided-reverse} and write $Q^w(A=a\mid y):=\mathbb P^w(a(Z_T)=a)$.

\setcounter{theorem}{1}
\begin{theorem}[Formal statement of the informal theorem] 
\label{thm:main-app}
Assume $
    X_0\sim\mathcal N(\mu,\Sigma),    X_t=X_0+\sigma_t\epsilon,
$
and suppose the endpoint guidance potential is log-affine:
\begin{equation}
    \ell_y(x)=\exp(\beta_y^\top x+c_y),
    \qquad
    f_t(x)=\mathbb E[\ell_y(X_0)\mid X_t=x].
\end{equation}
Assume Novikov's condition for \cref{eq:ve-guided-reverse} and the ideal high-noise initialization
\begin{equation}
    \mathbb E[\ell_y(X_0)^w\mid Z_0]
    =\mathbb E[\ell_y(X_0)^w]
    \quad\text{a.s.}
    \label{eq:ve-high-noise}
\end{equation}
for the relevant guidance scales. If $f$ satisfies SDP, then $Q^w(A=a\mid y)=Q^{w_{\rm ref}}(A=a\mid y)$ for all $w,w_{\rm ref}\ge 0$ and $a\in\mathcal A$, and hence $\mathrm{Bias}_G^w(a\mid y)=0$.
\end{theorem}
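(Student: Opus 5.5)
The plan is to identify the guided reverse SDE \cref{eq:ve-guided-reverse} as an exact Doob $h$-transform of the reference reverse VE-SDE. This identifies its terminal law as the tilted target $p_0^w(\cdot\mid y)\propto p_0^0(\cdot\mid y)\,\ell_y^w$, after which \cref{prop:sdp-target} at $t=0$ finishes the argument.

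\textbf{Step 1: Gaussian computation.} Under the Gaussian model, $X_0\mid X_t=x$ is Gaussian with mean $m_t(x)=\mu+\Sigma(\Sigma+\sigma_t^2 I)^{-1}(x-\mu)$, which is affine in $x$, and covariance $C_t=\Sigma-\Sigma(\Sigma+\sigma_t^2I)^{-1}\Sigma$, which does not depend on $x$. Since $\ell_y$ is log-affine, the Gaussian moment generating function gives
\[
h_t^{(w)}(x):=\mathbb E[\ell_y(X_0)^w\mid X_t=x]=\exp\bigl(w\beta_y^\top m_t(x)+wc_y+\tfrac{w^2}{2}\beta_y^\top C_t\beta_y\bigr).
\]
In particular $f_t=h_t^{(1)}$, so $f_t(x)^w$ and $h_t^{(w)}(x)$ differ only by an $x$-independent factor. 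Hence $w\nabla\log f_t=\nabla\log h^{(w)}_t$, and the guided drift is exactly $g^2\bigl(s^0_\tau+\nabla\log h^{(w)}_\tau\bigr)$. This is the step where the Gaussian/log-affine assumption is used. In general, the power of the conditional expectation differs from the conditional expectation of the power.

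\textbf{Step 2: $h$-transform via Girsanov.} The reference reverse process has the path law of the time-reversed forward process, so the Markov property gives $h^{(w)}_{\tau(s)}(Z_s)=\mathbb E[\ell_y(Z_T)^w\mid Z_s]$. This process is a positive martingale. Applying Itô's formula together with the backward Kolmogorov equation, its stochastic logarithm is $\int g(\tau)\nabla\log h^{(w)}_\tau(Z_s)^\top dB_s$. By Novikov's condition and Girsanov's theorem, the law $\mathbb P^w$ of \cref{eq:ve-guided-reverse} therefore satisfies
\[
\frac{d\mathbb P^w}{d\mathbb P^0}=\frac{h^{(w)}_0(Z_T)}{h^{(w)}_T(Z_0)}=\frac{\ell_y(Z_T)^w}{\mathbb E[\ell_y(Z_T)^w\mid Z_0]}.
\]
The high-noise initialization \cref{eq:ve-high-noise} makes the denominator the constant $\mathbb E[\ell_y(X_0)^w]=Z_0^w(y)$. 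Marginalizing over paths then gives $Q^w=p_0^w(\cdot\mid y)$ with potential $f_0=\ell_y$.

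\textbf{Step 3: conclusion.} By \cref{prop:sdp-target} applied at $t=0$, which uses only the SDP of $f_0=\ell_y$ through \cref{lem:sdp-moment}, the ratios $Q^w(a\mid y)/Q^w(a'\mid y)$ do not depend on $w$. Since the probabilities $Q^w(a\mid y)$ sum to one over $a$, equal ratios force $Q^w(a\mid y)=Q^{0}(a\mid y)=Q^{w_{\rm ref}}(a\mid y)$, and hence $\mathrm{Bias}_G^w(a\mid y)=0$.

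I expect the main obstacle to be Step 2, namely rigorously justifying the martingale and space-time harmonicity of $h^{(w)}_\tau(Z_s)$ under the reverse dynamics. This amounts to checking that $h^{(w)}$ solves the backward equation associated with the reverse generator. I would verify this directly from the explicit Gaussian formula, rather than relying on general time-reversal theory, and then apply Novikov to pass from a local martingale to a true density.
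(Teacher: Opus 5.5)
Your proposal is correct and follows the paper's proof step for step. The paper's steps are: the Gaussian/log-affine computation giving $h_t^{(w)}=C_t(w)f_t^w$ and hence $\nabla\log h_t^{(w)}=w\nabla\log f_t$; Girsanov plus It\^o collapsing the path density to $h_0^{(w)}(Z_T)/h_T^{(w)}(Z_0)$; the high-noise assumption making the denominator constant; and SDP at $t=0$ (via \cref{lem:sdp-moment}) making the group-wise moments $M_a(w)$ equal, so the group marginal does not depend on $w$. One addition of yours is worth keeping: you justify explicitly that $h_{\tau(s)}^{(w)}(Z_s)$ is a martingale under the reference reverse process, using the Markov property of the time reversal, and the paper's one-line appeal to It\^o's formula relies silently on exactly this space-time harmonicity.
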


\begin{proof}
The Gaussian log-affine setting ensures that the endpoint moments
$\mathbb E[\ell_y(X_0)^w]$ are finite.
Let $\mathbb P=\mathbb P^0$ and define $h_t^{(w)}(x):=\mathbb E[\ell_y(X_0)^w\mid X_t=x]$.
For the Gaussian model, $X_0\mid X_t=x\sim\mathcal N(m_t(x),V_t)$ with
$m_t(x) = \mu+\Sigma(\Sigma+\sigma_t^2I)^{-1}(x-\mu)$ and
$V_t = (\Sigma^{-1}+\sigma_t^{-2}I)^{-1}$.
Thus $f_t(x) = \exp\!\bigl(c_y+\beta_y^\top m_t(x)+\tfrac12\beta_y^\top V_t\beta_y\bigr)$, and
\begin{equation}
    h_t^{(w)}(x)
    =\exp\!\left(wc_y+w\beta_y^\top m_t(x)+\frac{w^2}{2}\beta_y^\top V_t\beta_y\right)
    =C_t(w)f_t(x)^w,
    \label{eq:ve-h-f-relation}
\end{equation}
where $C_t(w)$ is independent of $x$. Differentiating \cref{eq:ve-h-f-relation},
\begin{equation}
    \nabla\log h_t^{(w)}(x)=w\nabla\log f_t(x).
    \label{eq:ve-gradient-compatibility}
\end{equation}

Set $\theta_s=g(\tau(s))\nabla\log f_{\tau(s)}(Z_s)$. By Girsanov's theorem,
\begin{equation}
    \frac{d\mathbb P^w}{d\mathbb P}\bigg|_{\mathcal F_T}
    =\exp\!\left(
        w\int_0^T\theta_s^\top dB_s
        -\frac{w^2}{2}\int_0^T\|\theta_s\|^2ds
    \right).
    \label{eq:ve-girsanov}
\end{equation}
By It\^o's formula applied to $\log h_{\tau(s)}^{(w)}(Z_s)$ and by \cref{eq:ve-gradient-compatibility}, the right-hand side of \cref{eq:ve-girsanov} equals $h_0^{(w)}(Z_T) / h_T^{(w)}(Z_0)$.
Since $Z_T=X_0$ and \cref{eq:ve-high-noise} holds,
\begin{equation}
    \frac{d\mathbb P^w}{d\mathbb P}
    =\frac{\ell_y(X_0)^w}{\mathbb E[\ell_y(X_0)^w]}.
    \label{eq:ve-endpoint-rn}
\end{equation}

Let $\pi_a=\mathbb P(A=a)$ and $M_a(w)=\mathbb E[\ell_y(X_0)^w\mid A=a,y]$. From \cref{eq:ve-endpoint-rn},
\begin{equation}
    Q^w(A=a\mid y)
    =\frac{\pi_a M_a(w)}{\sum_{a'\in\mathcal A}\pi_{a'}M_{a'}(w)}.
    \label{eq:ve-group-reweighting}
\end{equation}
Since $\sigma_0=0$, $f_0(X_0)=\ell_y(X_0)$. SDP at $t=0$ implies that $\ell_y(X_0)\mid A=a,y$ has the same law for every group $a$. Since the corresponding moments are finite, $M_a(w)$ is independent of $a$.
Substituting into \cref{eq:ve-group-reweighting} gives $Q^w(A=a\mid y)=\pi_a$ for all $w$. The same holds for $w_{\rm ref}$, proving the claim.
\end{proof}

\textbf{Remark.}
The Gaussian and log-affine assumptions are used to obtain \cref{eq:ve-gradient-compatibility}, which collapses the Girsanov path weight to the endpoint reweighting \cref{eq:ve-endpoint-rn}. For general neural guidance functions, SDP should be read as an approximate sufficient condition rather than an exact guarantee on the full path law.
\clearpage
\section{Experimental Details}
\label{app:experimental-details}

\subsection{Class-conditional generation}
\label{app:cg-impl}

\noindent \textbf{Training.}
We build our sampling pipeline on the official ADM-G codebase~\citep{dhariwal2021diffusion} and train the classifier on the CelebA dataset with images resized to \(64 \times 64\) resolution, using the \texttt{Male} attribute as the confounder. The diffusion process consists of \(1000\) steps with a cosine noise schedule, and learned sigmas are rescaled during training. We train all models using the Adam optimizer with a learning rate of \(3\times10^{-4}\) and weight decay \(1\times10^{-2}\), for a total of $800$k iterations with batch size \(128\). The learning rate is annealed during training. All experiments are conducted on 2x NVIDIA GeForce RTX 4090 GPUs.

\noindent \textbf{\method settings.}
\method follows the baseline CG training framework, introducing an additional constraint term in the optimization objective after the first $5\%$ of training while keeping all other settings unchanged. We set $\lambda = 3$ in \cref{eq:cg_obj} of the main paper, selected from $\{1, 3, 5\}$. We approximate it by sampling $t$ from a random window of width $50$ in each batch, where the window center is itself P2 weight \citep{choi2022perception} sampled. We additionally apply group-aware resampling to mitigate training imbalance across sensitive groups. Since the Wasserstein distance is computed in 2D (probability and timestep), we define the cost as the original distance plus a 0.2-weighted penalty for the logSNR(Signal-to-Noise Ratio) discrepancy between corresponding time indices.

\noindent \textbf{Baselines.}
We apply RW~\citep{idrissi2022simple} and GDRO~\citep{sagawa2020distributionally} to the noisy classifier $f_\phi(x_t, t)$ used for CG.
RW is implemented by reweighting samples so that groups are sampled with equal probability.
For GDRO, we average the classifier loss over sampled timesteps within each group and apply the standard group-DRO update to the resulting group losses.


\subsection{Text-to-image generation}
\label{app:t2i-impl}

\noindent \textbf{Common settings.}
For experiments with SD1.5 and SD3, images are generated using different sampling schedulers and guidance scales. SD1.5 uses the DPMSolver multistep scheduler with 30 denoising steps, while SD3 employs the FlowMatchEulerDiscreteScheduler with 28 steps. Image resolution is set to $512 \times 512$ for both models. For SD1.5, we evaluate guidance scales $w \in \{2.5, 5.0, 7.5, 10.0, 12.5\}$ and generate 100 images for each occupation $\times$ guidance scale combination. For SD3, we evaluate guidance scales $w \in \{1.5, 3.0, 4.5, 6.0, 7.5\}$. All experiments are conducted on $1 \times$ NVIDIA A100 (80\,GB) GPU.

\noindent \textbf{Occupations and prompts.}
Following \citep{seshadri2024bias}, we use 33 occupations and five prompt templates in evaluation. \cref{tab:occupations} lists occupations and \cref{tab:prompts} lists templates used in our experiments. The first four are used for conditional generation, while the fifth is used for fair generation.

\begin{table}[!htbp]
\centering
\caption{List of 33 occupations used in our gender--occupation bias evaluation.}
\label{tab:occupations}
\begin{tabular}{lllll}
\toprule
\multicolumn{5}{c}{\textbf{Occupation}} \\
\midrule
technician        & fashion designer & librarian    & teacher      & ceo \\
doctor            & nurse            & pilot        & intern       & athlete \\
politician        & professor        & writer       & cook         & scientist \\
firefighter       & author           & supervisor   & painter      & poet \\
salesperson       & mechanic         & engineer     & programmer   & chef \\
accountant        & journalist       & entrepreneur & assistant    & photographer \\
singer            & student          & receptionist &              & \\
\bottomrule
\end{tabular}
\end{table}

\begin{table}[!htbp]
\centering
\caption{Prompt templates used in evaluation. Four conditional templates and one fair-generation template, each parameterized by \texttt{\{occupation\}}.}
\label{tab:prompts}
\begin{tabular}{cl}
\toprule
\# & Template \\
\midrule
\multicolumn{2}{l}{\emph{Conditional generation}} \\
\midrule
1 & \texttt{A photo of the face of a/an \{occupation\}} \\
2 & \texttt{A portrait photo of a/an \{occupation\}} \\
3 & \texttt{A photo of a/an \{occupation\} smiling} \\
4 & \texttt{A photo of a/an \{occupation\}, at work} \\
\midrule
\multicolumn{2}{l}{\emph{Fair generation}} \\
\midrule
5 & \texttt{A photo of the face of a/an \{occupation\}, a person} \\
\bottomrule
\end{tabular}
\end{table}

\noindent \textbf{\method (ours).}
For gender bias experiments, we construct the attribute-shifted null prompt embedding as:
\begin{equation}
    \phi(\emptyset_y) = \phi(\emptyset) + \alpha(y)\,\frac{\phi(\text{``female''}) - \phi(\text{``male''})}{\|\phi(\text{``female''}) - \phi(\text{``male''})\|},
\end{equation}
where $y$ is the input prompt and $\alpha(y)$ is a scalar parameter corresponding to the bias of prompt $y$.
The procedure for determining $\alpha(y)$ is described in \cref{app:alpha-analysis}.

\noindent \textbf{Gender classification.}
We predict gender with a zero-shot CLIP classifier~\citep{radford2021learning}
using the prompt pair
\texttt{[``a photo of a male person'',\,``a photo of a female person'']}.
Images are first filtered with YOLO person detection~\citep{wang2024yolov10};
among the remaining images, only those with classifier confidence
$\geq 0.7$ are retained for ratio computation, while lower-confidence
images are discarded.

\subsection{Baseline implementations}
\label{app:baselines-impl}

\noindent \textbf{Fine-Tuning Diffusion (FT).}
We use the pre-trained checkpoint released by the authors. FTDiff provides several checkpoints corresponding to different trained components of Stable Diffusion, including prefix tuning and LoRA fine-tuning. As many existing methods also consider fine-tuning the text encoder, we adopt the checkpoint ``finetune-text-encoder 09190215checkpoint-9800\_exportedtext\_encoder\_lora\_EMA.pth''.

\noindent \textbf{Unified Concept Editing (UCE).}
Following the official GitHub repository, we train UCE from scratch. Since UCE requires a predefined set of professions for training, we adopt the same 33-profession configuration as in our list for fair comparison with the other baselines.

\noindent \textbf{Fair Diffusion (FD).}
We use [``female person'', ``male person''] as the editing prompts. The guidance direction (addition or subtraction) is randomly selected between the two concepts. The editing guidance scale is set to 4. The threshold, momentum scale, and momentum beta are set to 0.9, 0.5, and 0.6, respectively. Both concepts are assigned equal weights of 1.

\noindent \textbf{Weak Guidance (WG).}
Since the official code is not publicly available, we implement the method following the description in the original paper. We confirm that our implementation reproduces the reported performance on the ten occupations considered in the paper. During generation, the modified conditioning incorporating the attribute direction is used exclusively for the first 10\% of the denoising steps. For the remaining steps, the modified conditioning and the original conditioning are alternated.

\noindent \textbf{Perturbation Attention Guidance (PAG).}
For the PAG baseline, we use the publicly released \texttt{StableDiffusionPAGPipeline} from 
\texttt{hyoungwoncho/sd\_perturbed\_attention\_guidance} on Hugging Face.
We set the perturbed layer to the first self-attention layer in the U-Net mid-block (\texttt{m0}) and sweep the PAG scale over \(\{1,2,3,4\}\) without CFG.

\newpage
\subsection{\method's $\alpha$ selection procedure}
\label{app:alpha-analysis}

\noindent
We determine $\alpha(y)$ in two stages: a per-prompt grid search that defines an \emph{optimal} target $\alpha^\star(y)$ on a set of prompts, and a lightweight text-only regressor that predicts $\alpha(y)$ for unseen prompts at test time.

\noindent \textbf{Optimal $\alpha^\star(y)$ search.}
For a prompt $y$, we define $\alpha^\star(y)$ as the value that minimizes the guidance bias $|\mathrm{Bias}_{\mathrm{G}}^w(a\mid y)|$.
We search over a discrete grid with step size $2.5$, evaluated at a low and a high guidance scale: $\alpha \in [-15, 15]$ for SD1.5 and $\alpha \in [-10, 5]$ for SD3.
Because the guidance bias varies monotonically with $\alpha$ (\cref{fig:alpha_ablation}(a,b)), $\alpha^\star(y)$ corresponds to the smallest $|\alpha|$ at which the sign of the bias metric changes; this monotonicity also allows the optimal value to be found with fewer evaluations than a full grid sweep.
We collect $\alpha^\star$ labels on $33$ occupations $\times$ $4$ prompt templates ($132$ cells).

\noindent \textbf{Prompt-based estimator: setup.}
To evaluate whether $\alpha^\star(y)$ can be predicted from the prompt  without per prompt search on SD1.5, we hold out five occupations (\emph{accountant}, \emph{athlete}, \emph{firefighter}, \emph{journalist}, \emph{scientist}) as the validation set and apply four prompt templates to each occupation, yielding $20$ held-out test cells. 
The $112$ training cells described above are used to fit the regressor.

\noindent \textbf{Features.}
Each cell is summarized by two scalars derived from its prompt embedding
$e \in \mathbb{R}^{768}$ from the SD1.5 CLIP text encoder.
A gender direction $g$ is constructed Bolukbasi-style~\citep{bolukbasi2016man} from paired
male/female templates fitted on training occupations only:
\begin{align}
    g =
    \frac{1}{|\mathcal{O}_{\mathrm{train}}|}
    \sum_{o \in \mathcal{O}_{\mathrm{train}}}
    \left[
    e(\text{``a female } \{o\}\text{''})
    -
    e(\text{``a male } \{o\}\text{''})
    \right],
\end{align}
where \(\bar{e}^{M}_{o}\) and \(\bar{e}^{F}_{o}\) denote the embeddings of``a male \(o\)'' and ``a female \(o\)'', respectively.
The two features are $\text{score}(e) = \langle e,\,g\rangle$ (signed bias
direction) and $\text{level}(e) = |\text{score}(e)|$ (its magnitude).

\noindent \textbf{Predictor.}
A ridge regressor maps the two features to a real-valued $\alpha$, with
the regularization $\lambda$ chosen by leave-one-out CV on the training
cells over the grid $\{0.01, 0.1, 1, 10, 100, 1000\}$. The prediction is
then snapped to the available $\alpha$ grid for the cell (tie-break
smaller $|\alpha|$). No image is generated at test time.
Validation results on the five held-out occupations are reported in \cref{tab:alpha_rule}.                  
\clearpage
\section{Additional Results}
\label{app:additional-results}

\subsection{Class-conditional results}
We additionally conduct experiments with various guidance methods applied to the conditional model in the class-conditional setting for the blond-hair attribute.
We verify that \method can suppress guidance bias while still preserving the intended role of guidance in the conditional model.
\cref{tab:app-cg-cond-blond} reports the bias range and FID for CG, CG-RW, and \method. 
\method reduces the bias range from 6.1\% to 0.9\% over CG while also improving FID from 4.22 to 3.11.
\cref{fig:app-cg-cond-blond} confirms this qualitatively, showing that the female ratio for \method remains close to the unguided baseline across guidance scales, whereas CG and CG-RW drift as $w$ increases.

\begin{table}[!h]
    \centering
    \caption{\textbf{Quantitative results on the class-conditional model (Blond Hair).}
    We report \cellcolor{gray!15}Bias range (\%, measuring guidance-induced bias) and FID. Best in bold; our target metric shaded.}
    \label{tab:app-cg-cond-blond}
    \small
    \begin{tabular}{l ccc}
        \toprule
        Metric & CG & RW~\citep{idrissi2022simple} & \method (ours) \\
        \midrule
        \cellcolor{gray!15}Bias range (\%) $\downarrow$ & \cellcolor{gray!15}6.1 & \cellcolor{gray!15}3.1 & \cellcolor{gray!15}\textbf{0.9} \\
        FID $\downarrow$                                & 4.22                    & 3.43                    & \textbf{3.11} \\
        \bottomrule
    \end{tabular}
\end{table}

\begin{figure}[!h]
    \centering
    \includegraphics[width=0.55\linewidth]{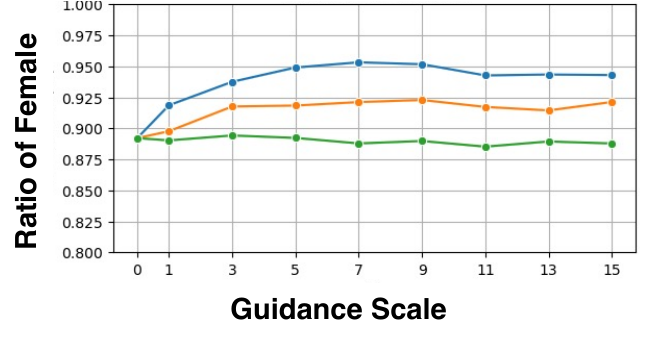}
    \caption{\textbf{Female ratio under increasing guidance scale $w$ on the class-conditional model (Blond Hair).}
    Increasing $w$ raises the female ratio for \textcolor{blue}{CG} and \textcolor{orange}{CG-RW}, whereas \textcolor{ForestGreen}{\method} keeps it close to the unguided baseline ($w{=}0$) across guidance scales.}
    \label{fig:app-cg-cond-blond}
\end{figure}

\newpage
\subsection{Comprehensive quantitative results}
\label{app:fixed-case}

We report the bias range, our metric for guidance bias, across 33 occupations for the baseline methods and baseline + \method.

\begin{figure}[!htbp]
  \centering
  \begin{subfigure}{\linewidth}\centering
    \includegraphics[width=0.95\linewidth]{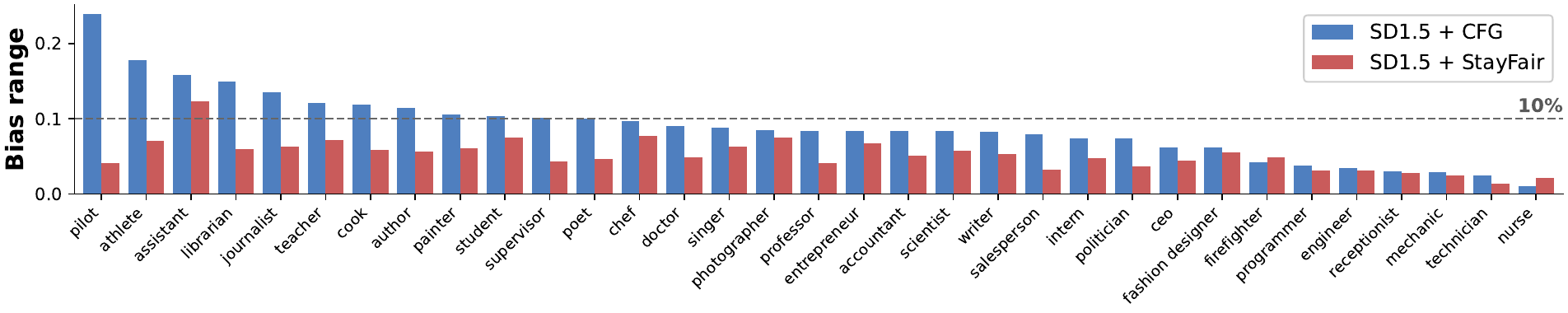}
    \caption{SD1.5}\label{fig:bias_range_sd15}
  \end{subfigure}\\[0pt]
  \begin{subfigure}{\linewidth}\centering
    \includegraphics[width=0.95\linewidth]{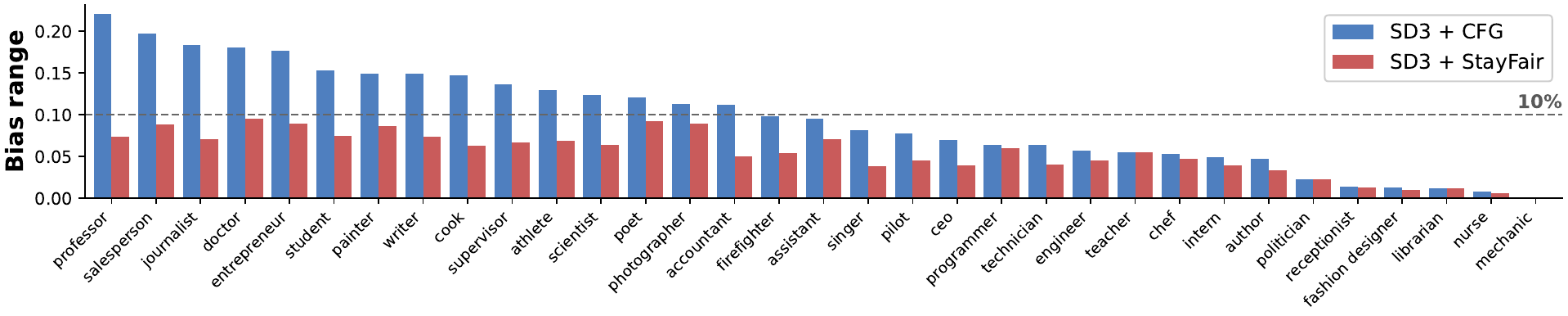}
    \caption{SD3}\label{fig:bias_range_sd3}
  \end{subfigure}\\[0pt]
  \begin{subfigure}{\linewidth}\centering
    \includegraphics[width=0.95\linewidth]{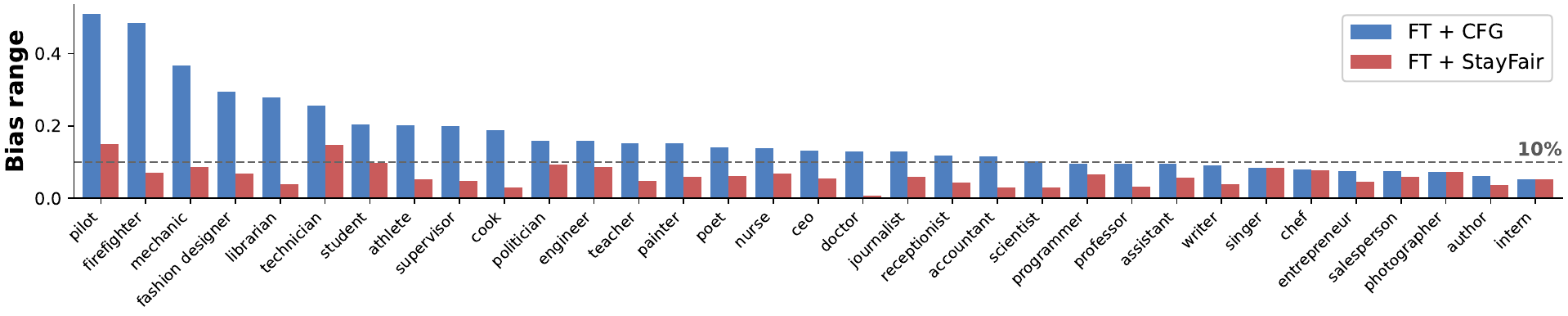}
    \caption{FT}\label{fig:bias_range_ft}
  \end{subfigure}\\[0pt]
  \begin{subfigure}{\linewidth}\centering
    \includegraphics[width=0.95\linewidth]{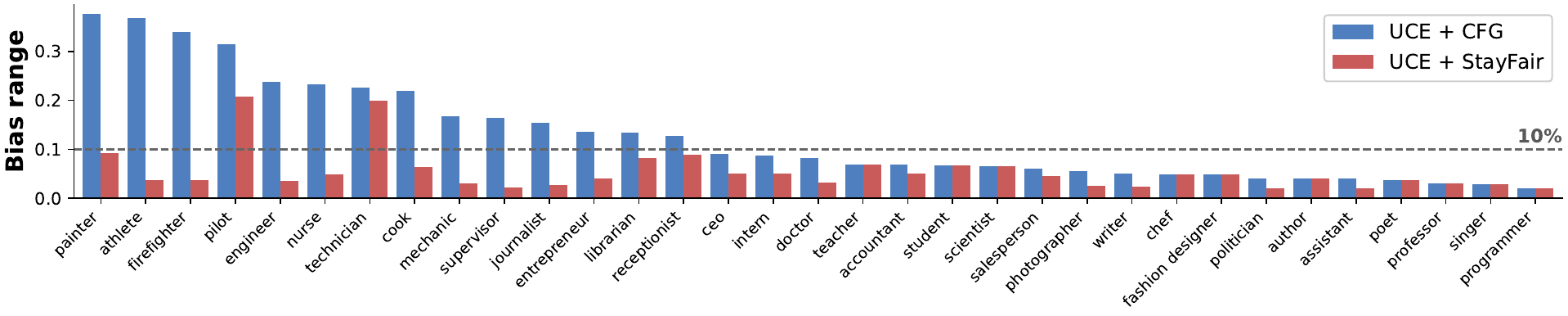}
    \caption{UCE}\label{fig:bias_range_uce}
  \end{subfigure}
  \caption{\textbf{Per-occupation bias range across vanilla and debiased models.}
  Bias range (spread of the female ratio across guidance scales) for each of the 33 occupations, comparing CFG (blue) and \method (red), on vanilla models (a)~SD1.5 and (b)~SD3, and debiased models (c)~FT and (d)~UCE.}
  \label{fig:bias_range_2x2}
\end{figure}

\newpage
We also report female-ratio curves across guidance scales for four selected occupations per model, comparing each baseline with and without \method.
\begin{figure}[!htbp]
  \centering
  \begin{subfigure}{\linewidth}\centering
    \includegraphics[width=\linewidth]{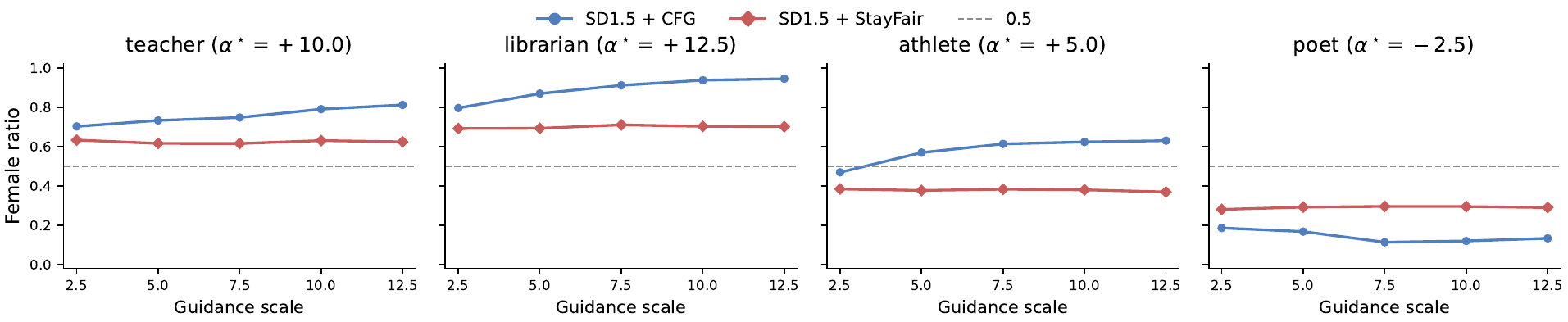}
    \caption{SD1.5}\label{fig:curves_sd15}
  \end{subfigure}\\[2pt]
  \begin{subfigure}{\linewidth}\centering
    \includegraphics[width=\linewidth]{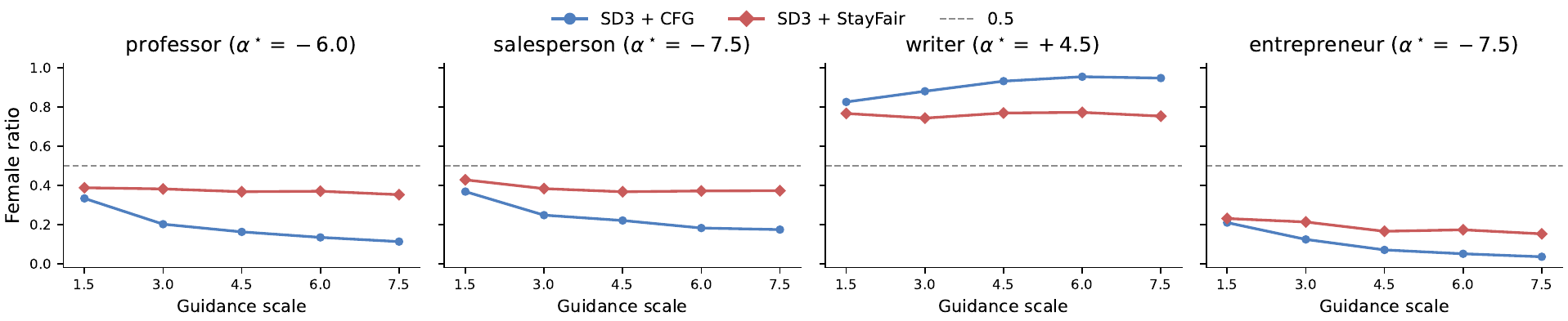}
    \caption{SD3}\label{fig:curves_sd3}
  \end{subfigure}\\[2pt]
  \begin{subfigure}{\linewidth}\centering
    \includegraphics[width=\linewidth]{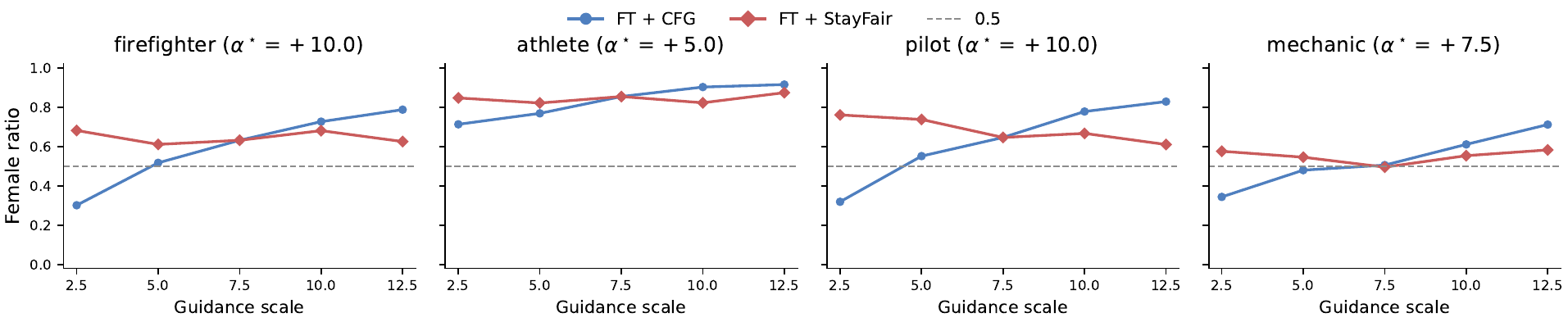}
    \caption{FT}\label{fig:curves_ft}
  \end{subfigure}\\[2pt]
  \begin{subfigure}{\linewidth}\centering
    \includegraphics[width=\linewidth]{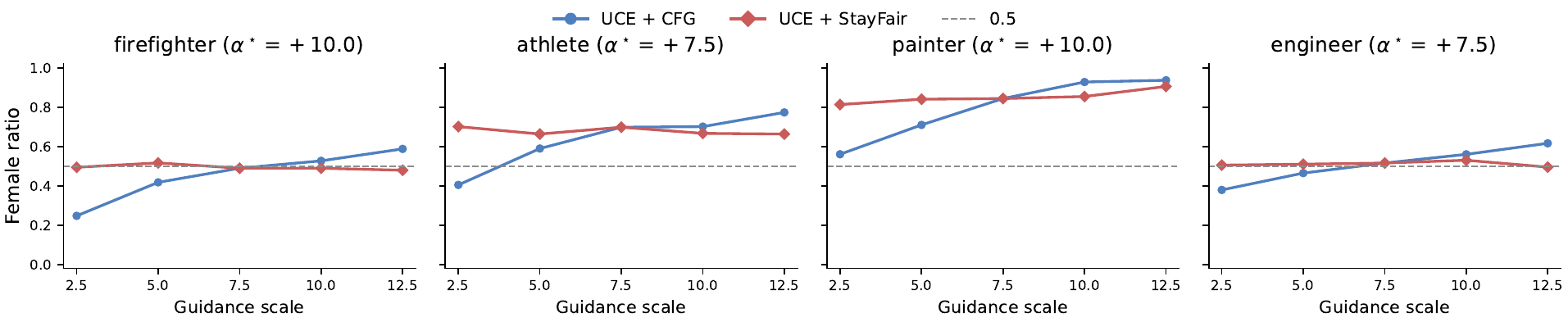}
    \caption{UCE}\label{fig:curves_uce}
  \end{subfigure}
  \caption{\textbf{Per-occupation female ratio curves across guidance scales.}
  Selected occupations per model showing CFG/vanilla (blue) and \method with optimal $\alpha^\star$ (red); the dashed line marks $0.5$.}
  \label{fig:occ_curves_4x4}
\end{figure}

\clearpage
\section{Additional Qualitative Results}
\label{app:additional-qualitative}

\subsection{Effect of \(\alpha\) in text-to-image generation}
\label{app:qual-alpha}

\cref{fig:qual-alpha-sd15} and \cref{fig:qual-alpha-sd3} show the effect of varying $\alpha$ on SD1.5 and SD3, supplementing \cref{fig:qual_t2i}.
On SD1.5, positive $\alpha$ preserves male samples for female-biased \emph{journalist}, while negative $\alpha$ retains female samples for male-biased \emph{cook}.
On SD3, the same pattern holds for \emph{poet} and \emph{professor}.
\begin{figure}[!htbp]
  \centering
  \setlength{\tabcolsep}{1pt}
  \newcommand{\acrop}[1]{\includegraphics[width=0.091\linewidth]{alpha_qual_crops/#1}}
  \newcommand{\acropbox}[1]{{\setlength{\fboxsep}{0pt}\setlength{\fboxrule}{1pt}%
    \fcolorbox{red}{white}{\includegraphics[width=0.091\linewidth]{alpha_qual_crops/#1}}}}

  \begin{tabular}{@{} ccccc @{\hspace{6pt}} ccccc @{}}
    \multicolumn{5}{c}{\small Journalist} & \multicolumn{5}{c}{\small Cook} \\[1pt]
    {\footnotesize $\alpha{=}{-}10$} & {\footnotesize $\alpha{=}{-}5$} & {\footnotesize $\alpha{=}0$} & {\footnotesize $\alpha{=}{+}5$} & {\footnotesize $\alpha{=}{+}10$}
    & {\footnotesize $\alpha{=}{-}10$} & {\footnotesize $\alpha{=}{-}5$} & {\footnotesize $\alpha{=}0$} & {\footnotesize $\alpha{=}{+}5$} & {\footnotesize $\alpha{=}{+}10$} \\[1pt]
    \acrop{journalist_s1_am10.jpg} & \acrop{journalist_s1_am5.jpg} & \acrop{journalist_s1_a0.jpg} & \acropbox{journalist_s1_ap5.jpg} & \acrop{journalist_s1_ap10.jpg}
    & \acrop{cook_s1_am10.jpg} & \acrop{cook_s1_am5.jpg} & \acrop{cook_s1_a0.jpg} & \acropbox{cook_s1_ap5.jpg} & \acrop{cook_s1_ap10.jpg} \\[1pt]
    \acrop{journalist_s2_am10.jpg} & \acrop{journalist_s2_am5.jpg} & \acrop{journalist_s2_a0.jpg} & \acropbox{journalist_s2_ap5.jpg} & \acrop{journalist_s2_ap10.jpg}
    & \acrop{cook_s2_am10.jpg} & \acrop{cook_s2_am5.jpg} & \acrop{cook_s2_a0.jpg} & \acropbox{cook_s2_ap5.jpg} & \acrop{cook_s2_ap10.jpg} \\[1pt]
    \acrop{journalist_s3_am10.jpg} & \acrop{journalist_s3_am5.jpg} & \acrop{journalist_s3_a0.jpg} & \acropbox{journalist_s3_ap5.jpg} & \acrop{journalist_s3_ap10.jpg}
    & \acrop{cook_s3_am10.jpg} & \acrop{cook_s3_am5.jpg} & \acrop{cook_s3_a0.jpg} & \acropbox{cook_s3_ap5.jpg} & \acrop{cook_s3_ap10.jpg} \\[1pt]
    \acrop{journalist_s4_am10.jpg} & \acrop{journalist_s4_am5.jpg} & \acrop{journalist_s4_a0.jpg} & \acropbox{journalist_s4_ap5.jpg} & \acrop{journalist_s4_ap10.jpg}
    & \acrop{cook_s4_am10.jpg} & \acrop{cook_s4_am5.jpg} & \acrop{cook_s4_a0.jpg} & \acropbox{cook_s4_ap5.jpg} & \acrop{cook_s4_ap10.jpg} \\[1pt]
    \acrop{journalist_s5_am10.jpg} & \acrop{journalist_s5_am5.jpg} & \acrop{journalist_s5_a0.jpg} & \acropbox{journalist_s5_ap5.jpg} & \acrop{journalist_s5_ap10.jpg}
    & \acrop{cook_s5_am10.jpg} & \acrop{cook_s5_am5.jpg} & \acrop{cook_s5_a0.jpg} & \acropbox{cook_s5_ap5.jpg} & \acrop{cook_s5_ap10.jpg} \\
  \end{tabular}

  \caption{\textbf{Qualitative examples showing the effect of varying $\alpha$ on SD1.5.}
  Sweeping $\alpha$ shifts the null prompt $\emptyset_y$ from female-biased ($\alpha{>}0$) to male-biased ($\alpha{<}0$), and the generated samples shift in the opposite direction. The selected $\alpha$ used by \method is highlighted (red).}
  \label{fig:qual-alpha-sd15}
\end{figure}

\begin{figure}[!htbp]
  \centering
  \setlength{\tabcolsep}{1pt}
  \newcommand{\acrop}[1]{\includegraphics[width=0.091\linewidth]{alpha_qual_crops/#1}}
  \newcommand{\acropbox}[1]{{\setlength{\fboxsep}{0pt}\setlength{\fboxrule}{1pt}%
    \fcolorbox{red}{white}{\includegraphics[width=0.091\linewidth]{alpha_qual_crops/#1}}}}

  \begin{tabular}{@{} ccccc @{\hspace{6pt}} ccccc @{}}
    \multicolumn{5}{c}{\small Poet} & \multicolumn{5}{c}{\small Professor} \\[1pt]
    {\footnotesize $\alpha{=}{-}5$} & {\footnotesize $\alpha{=}{-}2.5$} & {\footnotesize $\alpha{=}0$} & {\footnotesize $\alpha{=}{+}2.5$} & {\footnotesize $\alpha{=}{+}5$}
    & {\footnotesize $\alpha{=}{-}7.5$} & {\footnotesize $\alpha{=}{-}5$} & {\footnotesize $\alpha{=}0$} & {\footnotesize $\alpha{=}{+}2.5$} & {\footnotesize $\alpha{=}{+}5$} \\[1pt]
    \acrop{poet_s1_am5.jpg} & \acrop{poet_s1_am2p5.jpg} & \acrop{poet_s1_a0.jpg} & \acrop{poet_s1_ap2p5.jpg} & \acropbox{poet_s1_ap5.jpg}
    & \acropbox{professor_s1_am7p5.jpg} & \acrop{professor_s1_am5.jpg} & \acrop{professor_s1_a0.jpg} & \acrop{professor_s1_ap2p5.jpg} & \acrop{professor_s1_ap5.jpg} \\[1pt]
    \acrop{poet_s2_am5.jpg} & \acrop{poet_s2_am2p5.jpg} & \acrop{poet_s2_a0.jpg} & \acrop{poet_s2_ap2p5.jpg} & \acropbox{poet_s2_ap5.jpg}
    & \acropbox{professor_s2_am7p5.jpg} & \acrop{professor_s2_am5.jpg} & \acrop{professor_s2_a0.jpg} & \acrop{professor_s2_ap2p5.jpg} & \acrop{professor_s2_ap5.jpg} \\[1pt]
    \acrop{poet_s3_am5.jpg} & \acrop{poet_s3_am2p5.jpg} & \acrop{poet_s3_a0.jpg} & \acrop{poet_s3_ap2p5.jpg} & \acropbox{poet_s3_ap5.jpg}
    & \acropbox{professor_s3_am7p5.jpg} & \acrop{professor_s3_am5.jpg} & \acrop{professor_s3_a0.jpg} & \acrop{professor_s3_ap2p5.jpg} & \acrop{professor_s3_ap5.jpg} \\[1pt]
    \acrop{poet_s4_am5.jpg} & \acrop{poet_s4_am2p5.jpg} & \acrop{poet_s4_a0.jpg} & \acrop{poet_s4_ap2p5.jpg} & \acropbox{poet_s4_ap5.jpg}
    & \acropbox{professor_s4_am7p5.jpg} & \acrop{professor_s4_am5.jpg} & \acrop{professor_s4_a0.jpg} & \acrop{professor_s4_ap2p5.jpg} & \acrop{professor_s4_ap5.jpg} \\[1pt]
    \acrop{poet_s5_am5.jpg} & \acrop{poet_s5_am2p5.jpg} & \acrop{poet_s5_a0.jpg} & \acrop{poet_s5_ap2p5.jpg} & \acropbox{poet_s5_ap5.jpg}
    & \acropbox{professor_s5_am7p5.jpg} & \acrop{professor_s5_am5.jpg} & \acrop{professor_s5_a0.jpg} & \acrop{professor_s5_ap2p5.jpg} & \acrop{professor_s5_ap5.jpg} \\
  \end{tabular}

  \caption{\textbf{Qualitative examples showing the effect of varying $\alpha$ on SD3.}
  Sweeping $\alpha$ shifts the null prompt $\emptyset_y$ from female-biased ($\alpha{>}0$) to male-biased ($\alpha{<}0$), and the generated samples shift in the opposite direction. The selected $\alpha$ used by \method is highlighted (red).}
  \label{fig:qual-alpha-sd3}
\end{figure}

\subsection{Class-conditional generation examples}
\label{app:qual-cg}

\cref{fig:qual-blond} and \cref{fig:qual-smiling} compare CG, CG-RW, and CG+\method on CelebA with \emph{Blond Hair} and \emph{Smiling} conditions, supplementing \cref{fig:cg_combined} in the main text.
Under CG, increasing $w$ shifts generated images toward the majority attribute (e.g., female for \emph{Blond Hair}).
CG-RW reduces this tendency but does not fully suppress it at high scales.
\begin{figure}[!htbp]
  \centering
  \setlength{\tabcolsep}{1pt}
  \newcommand{\bcrop}[1]{\includegraphics[width=0.091\linewidth]{cg_qual_crops/#1}}
  \newcommand{\rowlab}[1]{\raisebox{0.036\linewidth}{\rotatebox[origin=c]{90}{\footnotesize #1}}}

  \begin{tabular}{@{} c ccccc @{\hspace{6pt}} ccccc @{}}
    & \multicolumn{5}{c}{\small Sample 1} & \multicolumn{5}{c}{\small Sample 2} \\[1pt]
    & {\footnotesize $w{=}0$} & {\footnotesize $w{=}1$} & {\footnotesize $w{=}3$} & {\footnotesize $w{=}5$} & {\footnotesize $w{=}9$}
    & {\footnotesize $w{=}0$} & {\footnotesize $w{=}1$} & {\footnotesize $w{=}3$} & {\footnotesize $w{=}5$} & {\footnotesize $w{=}9$} \\[1pt]
    \rowlab{CG}
      & \bcrop{blond_s1_cg_w0.jpg} & \bcrop{blond_s1_cg_w1.jpg} & \bcrop{blond_s1_cg_w3.jpg} & \bcrop{blond_s1_cg_w5.jpg} & \bcrop{blond_s1_cg_w9.jpg}
      & \bcrop{blond_s2_cg_w0.jpg} & \bcrop{blond_s2_cg_w1.jpg} & \bcrop{blond_s2_cg_w3.jpg} & \bcrop{blond_s2_cg_w5.jpg} & \bcrop{blond_s2_cg_w9.jpg} \\[1pt]
    \rowlab{CG-RW}
      & \bcrop{blond_s1_cgrw_w0.jpg} & \bcrop{blond_s1_cgrw_w1.jpg} & \bcrop{blond_s1_cgrw_w3.jpg} & \bcrop{blond_s1_cgrw_w5.jpg} & \bcrop{blond_s1_cgrw_w9.jpg}
      & \bcrop{blond_s2_cgrw_w0.jpg} & \bcrop{blond_s2_cgrw_w1.jpg} & \bcrop{blond_s2_cgrw_w3.jpg} & \bcrop{blond_s2_cgrw_w5.jpg} & \bcrop{blond_s2_cgrw_w9.jpg} \\[1pt]
    \rowlab{CG+\method}
      & \bcrop{blond_s1_cgsf_w0.jpg} & \bcrop{blond_s1_cgsf_w1.jpg} & \bcrop{blond_s1_cgsf_w3.jpg} & \bcrop{blond_s1_cgsf_w5.jpg} & \bcrop{blond_s1_cgsf_w9.jpg}
      & \bcrop{blond_s2_cgsf_w0.jpg} & \bcrop{blond_s2_cgsf_w1.jpg} & \bcrop{blond_s2_cgsf_w3.jpg} & \bcrop{blond_s2_cgsf_w5.jpg} & \bcrop{blond_s2_cgsf_w9.jpg} \\
  \end{tabular}

  \vspace{0.4em}

  \begin{tabular}{@{} c ccccc @{\hspace{6pt}} ccccc @{}}
    & \multicolumn{5}{c}{\small Sample 3} & \multicolumn{5}{c}{\small Sample 4} \\[1pt]
    & {\footnotesize $w{=}0$} & {\footnotesize $w{=}1$} & {\footnotesize $w{=}3$} & {\footnotesize $w{=}5$} & {\footnotesize $w{=}9$}
    & {\footnotesize $w{=}0$} & {\footnotesize $w{=}1$} & {\footnotesize $w{=}3$} & {\footnotesize $w{=}5$} & {\footnotesize $w{=}9$} \\[1pt]
    \rowlab{CG}
      & \bcrop{blond_s3_cg_w0.jpg} & \bcrop{blond_s3_cg_w1.jpg} & \bcrop{blond_s3_cg_w3.jpg} & \bcrop{blond_s3_cg_w5.jpg} & \bcrop{blond_s3_cg_w9.jpg}
      & \bcrop{blond_s4_cg_w0.jpg} & \bcrop{blond_s4_cg_w1.jpg} & \bcrop{blond_s4_cg_w3.jpg} & \bcrop{blond_s4_cg_w5.jpg} & \bcrop{blond_s4_cg_w9.jpg} \\[1pt]
    \rowlab{CG-RW}
      & \bcrop{blond_s3_cgrw_w0.jpg} & \bcrop{blond_s3_cgrw_w1.jpg} & \bcrop{blond_s3_cgrw_w3.jpg} & \bcrop{blond_s3_cgrw_w5.jpg} & \bcrop{blond_s3_cgrw_w9.jpg}
      & \bcrop{blond_s4_cgrw_w0.jpg} & \bcrop{blond_s4_cgrw_w1.jpg} & \bcrop{blond_s4_cgrw_w3.jpg} & \bcrop{blond_s4_cgrw_w5.jpg} & \bcrop{blond_s4_cgrw_w9.jpg} \\[1pt]
    \rowlab{CG+\method}
      & \bcrop{blond_s3_cgsf_w0.jpg} & \bcrop{blond_s3_cgsf_w1.jpg} & \bcrop{blond_s3_cgsf_w3.jpg} & \bcrop{blond_s3_cgsf_w5.jpg} & \bcrop{blond_s3_cgsf_w9.jpg}
      & \bcrop{blond_s4_cgsf_w0.jpg} & \bcrop{blond_s4_cgsf_w1.jpg} & \bcrop{blond_s4_cgsf_w3.jpg} & \bcrop{blond_s4_cgsf_w5.jpg} & \bcrop{blond_s4_cgsf_w9.jpg} \\
  \end{tabular}

  \caption{\textbf{Qualitative examples on class-conditional generation (condition: Blond Hair).}
  Four samples on CelebA across guidance scales for CG, debiased classifier (CG-RW), and CG+\method. CG and CG-RW shift toward the majority gender as $w$ grows, while CG+\method preserves the unguided attribute.}
  \label{fig:qual-blond}
\end{figure}

\begin{figure}[!htbp]
  \centering
  \setlength{\tabcolsep}{1pt}
  \newcommand{\scrop}[1]{\includegraphics[width=0.091\linewidth]{cg_qual_crops/#1}}
  \newcommand{\rowlab}[1]{\raisebox{0.036\linewidth}{\rotatebox[origin=c]{90}{\footnotesize #1}}}

  \begin{tabular}{@{} c ccccc @{\hspace{6pt}} ccccc @{}}
    & \multicolumn{5}{c}{\small Sample 1} & \multicolumn{5}{c}{\small Sample 2} \\[1pt]
    & {\footnotesize $w{=}0$} & {\footnotesize $w{=}1$} & {\footnotesize $w{=}3$} & {\footnotesize $w{=}5$} & {\footnotesize $w{=}9$}
    & {\footnotesize $w{=}0$} & {\footnotesize $w{=}1$} & {\footnotesize $w{=}3$} & {\footnotesize $w{=}5$} & {\footnotesize $w{=}9$} \\[1pt]
    \rowlab{CG}
      & \scrop{smile_s1_cg_w0.jpg} & \scrop{smile_s1_cg_w1.jpg} & \scrop{smile_s1_cg_w3.jpg} & \scrop{smile_s1_cg_w5.jpg} & \scrop{smile_s1_cg_w9.jpg}
      & \scrop{smile_s2_cg_w0.jpg} & \scrop{smile_s2_cg_w1.jpg} & \scrop{smile_s2_cg_w3.jpg} & \scrop{smile_s2_cg_w5.jpg} & \scrop{smile_s2_cg_w9.jpg} \\[1pt]
    \rowlab{CG-RW}
      & \scrop{smile_s1_cgrw_w0.jpg} & \scrop{smile_s1_cgrw_w1.jpg} & \scrop{smile_s1_cgrw_w3.jpg} & \scrop{smile_s1_cgrw_w5.jpg} & \scrop{smile_s1_cgrw_w9.jpg}
      & \scrop{smile_s2_cgrw_w0.jpg} & \scrop{smile_s2_cgrw_w1.jpg} & \scrop{smile_s2_cgrw_w3.jpg} & \scrop{smile_s2_cgrw_w5.jpg} & \scrop{smile_s2_cgrw_w9.jpg} \\[1pt]
    \rowlab{CG+\method}
      & \scrop{smile_s1_cgsf_w0.jpg} & \scrop{smile_s1_cgsf_w1.jpg} & \scrop{smile_s1_cgsf_w3.jpg} & \scrop{smile_s1_cgsf_w5.jpg} & \scrop{smile_s1_cgsf_w9.jpg}
      & \scrop{smile_s2_cgsf_w0.jpg} & \scrop{smile_s2_cgsf_w1.jpg} & \scrop{smile_s2_cgsf_w3.jpg} & \scrop{smile_s2_cgsf_w5.jpg} & \scrop{smile_s2_cgsf_w9.jpg} \\
  \end{tabular}

  \vspace{0.4em}

  \begin{tabular}{@{} c ccccc @{\hspace{6pt}} ccccc @{}}
    & \multicolumn{5}{c}{\small Sample 3} & \multicolumn{5}{c}{\small Sample 4} \\[1pt]
    & {\footnotesize $w{=}0$} & {\footnotesize $w{=}1$} & {\footnotesize $w{=}3$} & {\footnotesize $w{=}5$} & {\footnotesize $w{=}9$}
    & {\footnotesize $w{=}0$} & {\footnotesize $w{=}1$} & {\footnotesize $w{=}3$} & {\footnotesize $w{=}5$} & {\footnotesize $w{=}9$} \\[1pt]
    \rowlab{CG}
      & \scrop{smile_s3_cg_w0.jpg} & \scrop{smile_s3_cg_w1.jpg} & \scrop{smile_s3_cg_w3.jpg} & \scrop{smile_s3_cg_w5.jpg} & \scrop{smile_s3_cg_w9.jpg}
      & \scrop{smile_s4_cg_w0.jpg} & \scrop{smile_s4_cg_w1.jpg} & \scrop{smile_s4_cg_w3.jpg} & \scrop{smile_s4_cg_w5.jpg} & \scrop{smile_s4_cg_w9.jpg} \\[1pt]
    \rowlab{CG-RW}
      & \scrop{smile_s3_cgrw_w0.jpg} & \scrop{smile_s3_cgrw_w1.jpg} & \scrop{smile_s3_cgrw_w3.jpg} & \scrop{smile_s3_cgrw_w5.jpg} & \scrop{smile_s3_cgrw_w9.jpg}
      & \scrop{smile_s4_cgrw_w0.jpg} & \scrop{smile_s4_cgrw_w1.jpg} & \scrop{smile_s4_cgrw_w3.jpg} & \scrop{smile_s4_cgrw_w5.jpg} & \scrop{smile_s4_cgrw_w9.jpg} \\[1pt]
    \rowlab{CG+\method}
      & \scrop{smile_s3_cgsf_w0.jpg} & \scrop{smile_s3_cgsf_w1.jpg} & \scrop{smile_s3_cgsf_w3.jpg} & \scrop{smile_s3_cgsf_w5.jpg} & \scrop{smile_s3_cgsf_w9.jpg}
      & \scrop{smile_s4_cgsf_w0.jpg} & \scrop{smile_s4_cgsf_w1.jpg} & \scrop{smile_s4_cgsf_w3.jpg} & \scrop{smile_s4_cgsf_w5.jpg} & \scrop{smile_s4_cgsf_w9.jpg} \\
  \end{tabular}

  \caption{\textbf{Qualitative examples on class-conditional generation (condition: Smiling).}
  Four samples on CelebA across guidance scales for CG, debiased classifier (CG-RW), and CG+\method. CG and CG-RW shift toward the majority gender as $w$ grows, while CG+\method preserves the unguided attribute.}
  \label{fig:qual-smiling}
\end{figure}

\clearpage
\section{Licenses for Existing Assets}
\label{app:asset-licenses}

We list below all third-party assets used in our experiments, together with their licenses.
All assets are used in accordance with their respective terms of use, and the original creators are credited at first use in the main paper.

\textbf{Diffusion Models}
\begin{itemize}
    \item \textbf{Stable Diffusion 1.5}~\citep{rombach2022high}: CreativeML Open RAIL-M License.
    \item \textbf{Stable Diffusion 3 (medium)}~\citep{esser2024scaling}: Stability AI Community License.
    \item \textbf{ADM-G (guided-diffusion)}~\citep{dhariwal2021diffusion}: MIT License.
    \item \textbf{Unconditional ADM checkpoint}~\citep{ning2023input}: MIT License (released with the DDPM-IP codebase).
\end{itemize}

\textbf{Baseline Methods}
\begin{itemize}
    \item \textbf{FT (Finetune-Fair-Diffusion)}~\citep{shenfinetuning}: MIT License.
    \item \textbf{UCE (Unified Concept Editing)}~\citep{gandikota2024unified}: MIT License.
    \item \textbf{FD (Fair Diffusion)}~\citep{friedrich2023fair}: Apache License 2.0.
    \item \textbf{PAG (Perturbed-Attention Guidance)}~\citep{ahn2024self}: MIT License.
    \item \textbf{GDRO (Group DRO)}~\citep{sagawa2020distributionally}: MIT License.
\end{itemize}

\textbf{Classifiers and Detectors}
\begin{itemize}
    \item \textbf{FairFace classifier}~\citep{karkkainenfairface}: CC-BY 4.0 License.
    \item \textbf{CLIP}~\citep{radford2021learning}: MIT License.
    \item \textbf{YOLOv10}~\citep{wang2024yolov10}: AGPL-3.0 License.
\end{itemize}

\textbf{Evaluation Metrics}
\begin{itemize}
    \item \textbf{CLIPScore}~\citep{hessel2021clipscore}: MIT License.
    \item \textbf{LAION Aesthetic Predictor}~\citep{schuhmann2022laion5b}: Apache License 2.0.
    \item \textbf{PickScore}~\citep{kirstain2023pick}: MIT License.
\end{itemize}

\textbf{Datasets}
\begin{itemize}
    \item \textbf{CelebA}~\citep{liu2015deep}: Released for non-commercial research purposes only, under the terms specified by the dataset providers (MMLAB, CUHK).
\end{itemize}


\end{document}